\documentclass[hidelinks,onefignum,onetabnum]{siamart250211}

\usepackage{lipsum}
\usepackage{amsfonts}
\usepackage{graphicx}
\usepackage{epstopdf}
\usepackage{algorithmic}
\ifpdf
  \DeclareGraphicsExtensions{.eps,.pdf,.png,.jpg}
\else
  \DeclareGraphicsExtensions{.eps}
\fi

\newsiamremark{remark}{Remark}
\newsiamremark{hypothesis}{Hypothesis}
\crefname{hypothesis}{Hypothesis}{Hypotheses}
\newsiamthm{claim}{Claim}
\newsiamremark{fact}{Fact}
\crefname{fact}{Fact}{Facts}

\headers{Generalization Behavior of Deep Residual Networks}{Jinshu Huang, Mingfei Sun, Chunlin Wu}

\title{On the Generalization Behavior of Deep Residual Networks From a Dynamical System Perspective  
\thanks{Submitted to the editors DATE.
\funding{XXX}}}

\author{Jinshu Huang\thanks{School of Mathematical Sciences, Nankai University, Tianjin, China.
		(\email{huangjsh@mail.nankai.edu.cn, sunmf@mail.nankai.edu.cn, wucl@nankai.edu.cn}).}
	\and Mingfei Sun\footnotemark[2]
	\and Chunlin Wu\footnotemark[2] \thanks{Corresponding author.}}

\usepackage{amsopn}

\ifpdf
\hypersetup{
  pdftitle={An Example Article},
  pdfauthor={D. Doe, P. T. Frank, and J. E. Smith}
}
\fi

\usepackage{algorithm}
\usepackage{algorithmic}
\usepackage{xcolor}         
\usepackage{subfigure}
\usepackage{graphicx}
\usepackage{textcomp}
\usepackage{xcolor}
\usepackage{array}
\usepackage{multirow}
\usepackage{longtable}
\usepackage{booktabs}
\usepackage{color}
\usepackage{threeparttable}
\usepackage{geometry}
\usepackage{amssymb}
\usepackage{thmtools}
\usepackage{mathrsfs}
\usepackage{threeparttable}
\usepackage{arydshln}
\usepackage{amsfonts}
\usepackage{array}
\usepackage{ulem}
\usepackage{makecell}
\newtheorem{example}{Example}[section]
\allowdisplaybreaks

\begin{document}

\maketitle

\begin{abstract}
Deep neural networks (DNNs) have significantly advanced machine learning, with model depth playing a central role in their successes.  
The dynamical system modeling approach has recently emerged as a powerful framework, offering new mathematical insights into the structure and learning behavior of DNNs.  
In this work, we establish generalization error bounds for both discrete- and continuous-time residual networks (ResNets) by combining Rademacher complexity, flow maps of dynamical systems, and the convergence behavior of ResNets in the deep-layer limit.  
The resulting bounds are of order \(O(1/\sqrt{S})\) with respect to the number of training samples \(S\), and include a structure-dependent negative term, yielding depth-uniform and asymptotic generalization bounds under milder assumptions.  
These findings provide a unified understanding of generalization across both discrete- and continuous-time ResNets, helping to close the gap in both the order of sample complexity and assumptions between the discrete- and continuous-time settings. 
\end{abstract}


\begin{keywords}
Deep learning, dynamical system, generalization bound, Rademacher complexity, asymptotic behavior
\end{keywords}

\begin{MSCcodes}
68T07, 68Q87, 68V15, 37N99
\end{MSCcodes}

\section{Introduction}
Deep learning has achieved remarkable successes across diverse applications in recent years. A key driver behind these successes is the increasing depth of neural networks, which enables the hierarchical extraction of complex features and representations \cite{lecun2015deep, goodfellow2016deep, he2016deep}. 
Despite these empirical advances, the fundamental properties of deep neural networks (DNNs) remain not fully understood due to their inherent complexity.
The expressivity and approximation, generalization, and optimization constitute the essential components of the mathematical mechanisms underlying deep learning \cite{grohs2022mathematical}.
Among them, understanding generalization behavior is one of the most important aspects, as it directly relates to the ability of DNNs to perform well on unseen data beyond the training set with a sample size $S$.

The generalization theory serves as a fundamental method for characterizing the predictive performance of DNNs on unseen data. Early works, such as \cite{bartlett1996vc, goldberg1993bounding}, proposed upper bounds on the Vapnik-Chervonenkis (VC) dimension for certain classes of DNNs, which are related to generalization theory. 
Later, Neyshabur et al.\cite{neyshabur2015norm} derived a generalization bound of order $O(1/\sqrt{S})$ based on Rademacher complexity, with a coefficient exponentially depending on network depth in the case of ReLU networks. 
Bartlett et al.\cite{bartlett2017spectrally} introduced a spectral normalized margin-based bound for DNNs on the classification task, still of order $O(1/\sqrt{S})$ and with only polynomial dependence on network depth. A similar result was obtained by \cite{neyshabur2017pac} in the probably approximately correct (PAC) Bayesian framework.
Further studies have investigated the generalization behavior of ResNets \cite{he2016deep}.
Li et al.\cite{li2018tighter} derived tighter margin-based bounds for general architectures, including CNNs and ResNets, explicitly capturing the effects of depth, width, and Lipschitz continuity.
He et al.\cite{he2020resnet} investigated the influence of residual connections on the hypothesis complexity via covering numbers, showing that, when the total number of layers is fixed, residual connections do not increase the hypothesis complexity compared to chain-like networks.
{These results all yield generalization bounds of order $O(1/\sqrt{S})$, while the constants involved may still grow rapidly with architectural parameters like depth unless additional structural or norm-based constraints are imposed.
From a different perspective, E, Ma, and Wang \cite{e2020rademacher} studied the Rademacher complexity of residual networks under a weighted path norm constraint, and derived generalization bounds of order $O(1/\sqrt{S})$ without explicit dependence on the network depth. Their analysis highlights the role of norm-based capacity control in mitigating the adverse effects of depth and width, and provides insight into why deep and wide ResNets can still generalize well in practice.}
Some other research investigates the impact of training algorithms on generalization performance \cite{bousquet2002stability, kuzborskij2018data, sokolic2017generalization}, often leveraging algorithmic stability or robustness to derive generalization bounds, which typically do not improve upon the standard rate of order $O(1/\sqrt{S})$.

From the viewpoint of neural ordinary differential equations (ODEs) \cite{chen2018neural}, Marion \cite{marion2023generalization} used a covering number reasoning approach to establish generalization bounds for a class of continuous-time models with time-varying weights under the assumption of Lipschitz continuity in time. The resulting bounds given in \cite{marion2023generalization} are of sample dependence of order $O(1/\sqrt[4]{S})$.
{Closely related to Marion \cite{marion2023generalization}, Bleistein and Guilloux \cite{bleistein2023generalization} studied the generalization properties of Neural Controlled Differential Equations (NCDEs), which model supervised learning from irregularly sampled time series via controlled differential equations. Using continuity properties of CDE flows and Lipschitz-based complexity arguments similar to those in \cite{marion2023generalization}, they derived generalization bounds of order 
$O(1/\sqrt[4]{S})$ under suitable assumptions. More recently, Hanson and Raginsky \cite{hanson2024rademacher} analyzed the generalization properties of neural ODEs through a Chen–Fliess series expansion, which represents continuous-depth models as infinitely wide, single-layer networks whose features are given by iterated Lie derivatives. This reformulation enables a compact analysis of the Rademacher complexity of neural ODEs and leads to generalization bounds of order $O(1/\sqrt{S})$. While this approach provides sharp sample complexity guarantees for continuous-depth models, it does not address the relationship between discrete-time ResNets and their continuous-depth limits, nor does it study the consistency of generalization behavior with respect to network depth.}
{We note that, despite these recent advances for continuous-depth models, existing analyses do not yet provide a unified and depth-consistent generalization theory that simultaneously covers discrete-time ResNets and their continuous-time limits with matching $O(1/\sqrt{S})$ sample dependence.}
\textbf{In this work, we aim to study the consistency of generalization behavior of ResNets in the deep-layer limit, and establish depth-uniform generalization error bounds that are compatible for both discrete-time ResNets and their continuous-time limit models.}


Recently, a dynamical system modeling approach for DNNs has been proposed, offering a new mathematical explanation of their structural and learning properties, and enabling asymptotic analysis as the number of layers tends to infinity. 
The pioneering works \cite{weinan2017proposal, haber2017stable} revealed that ResNets \cite{he2016deep} can be interpreted as discrete-time approximations of ODEs. This interpretation has inspired subsequent research yielding both theoretical insights \cite{benning2019deep, sherstinsky2020fundamentals, li2022approximation, li2022deep} and practical innovations like architecture design \cite{haber2017stable, Lu18Beyond, ruthotto2020deep} and training algorithms \cite{chen2018neural, parpas2019predict, gunther2020layer}.
For instances, \cite{li2022deep} investigates continuous-time ResNets and establishes general sufficient conditions for their universal approximation capabilities in $L^p$ spaces from the approximation perspective, while Li et al.~\cite{li2022approximation} studied the approximation and optimization theory of linear continuous-time recurrent neural networks (RNNs), providing a rigorous characterization of the relationship between memory structures and learning efficiency in linear dynamical models.
Furthermore, the dynamical system perspective enables the analysis of the asymptotic behavior of deep networks by establishing the convergence of discrete-time learning problems to their continuous-time counterparts \cite{thorpe2018deep, huang2024on, huang2024Mathematical}, thereby providing a theoretical foundation for understanding the stability and convergence of the learning problems of DNNs in the deep-layer limit. {In this context, the term ``dynamical system framework" refers to a unified modeling and analytical viewpoint that encompasses both discrete-time ResNets and their continuous-time limits.}


\textbf{In this paper, we study the generalization behavior of ResNets within the dynamical system modeling framework \cite{weinan2017proposal, haber2017stable}.} 
{In contrast to approaches that start directly from continuous-time neural ODEs, our analysis proceeds from discrete ResNets and exploits their convergence to continuous-time dynamics.}
This enables us to examine the asymptotic behavior of generalization error with growing network depth, revealing a novel link between architectural stability and generalization in the deep-layer limit. 
Our contributions are summarized as follows:
\begin{itemize}

    \item[(i)]
{Within a dynamical system modeling framework, we establish generalization error bounds simultaneously for both discrete-time ResNets and their continuous-time counterparts.
The resulting bounds are of optimal order $O(1/\sqrt{S})$ in both settings and are compatible across the discrete-to-continuous transition.
In particular, they improve upon existing results that treat discrete- and continuous-time models jointly, such as \cite{marion2023generalization}, by providing sharper sample-complexity dependence.}

\item[(ii)]
{Our bounds are derived via a new contraction-type inequality for a broad class of activation functions, capturing structural properties beyond standard Lipschitz continuity.
This leads to a refined bound structure consisting of a depth-independent leading term together with a non-positive, depth-dependent structural correction term induced by the activation functions.
As a consequence, the generalization error admits an explicit depth-uniform upper bound.}

\end{itemize} 
\textit{
These results imply that increasing network depth does not lead to an accumulation of generalization error, which is consistent with empirical observations of ResNets in practical applications.
In contrast to prior works on discrete-time ResNets \cite{li2018tighter, he2020resnet}, our analysis further admits an asymptotic interpretation as the network depth tends to infinity, suggesting that the generalization capacity of deep ResNets can remain stable in the deep-layer limit.
Moreover, compared to continuous-time analyses based on the neural ODE perspective \cite{hanson2024rademacher}, our approach is grounded in a dynamical system framework that treats discrete- and continuous-time ResNets in a unified manner.
}
We also validate our theoretical results through experiments on {CIFAR10 and CIFAR100 image classification tasks.}

The remainder of this paper is organized as follows. 
Section~\ref{Sec: 2} formulates the generalization problem for discrete- and continuous-time ResNets. In Section~\ref{Sec: 3}, we present our main theoretical results, including generalization bounds for discrete- and continuous-time ResNets. Section~\ref{Sec: 4} provides detailed proofs. In Section~\ref{Sec: 5}, we present some simple numerical experiments to validate the results.  Finally, Section~\ref{Sec: 6} concludes the paper.

\section{Notations and problem formulation}
\label{Sec: 2}
In this section, we give some notations and introduce the learning problems for ResNets from the dynamical system viewpoint. Then we describe the generalization problem in neural network supervised learning. 

\subsection{Notations} 
Throughout this paper, we adopt the following notations. Let $\mathbb{N}$ and $\mathbb{R}$ denote the set of natural and real numbers, respectively. 
Scalars are denoted with italic letters (e.g., $N \in \mathbb{N}$), vectors and matrices are denoted with lowercase and capital in straight letters (e.g., $\mathrm{c} \in \mathbb{R}^N$ and $\mathrm{W} \in \mathbb{R}^{N \times N}$). Vector- and matrix-valued functions of time variable $t$ are written in boldface, like $\bold{c}(\cdot)$ and $\bold{W}(\cdot)$.
{For a Lipschitz continuous function $f$, its Lipschitz constant is denoted by $\mathrm{Lip}_f$.}
The $i$-th element of vector $\mathrm{c}$ is written as $\mathrm{c}_i$, and the $i$-th row of matrix $\mathrm{C}$ is $\mathrm{C}_{i,:}$.
The $l^{\infty}$ and $l^2$ norms for vectors or matrices in Euclidean space are denoted as $\|\cdot\|_{\infty}$ and $\|\cdot\|_2$, respectively. 
Let $\mathcal{X}$ be a vector space. The notation $\mathcal{X}^N$ repents the Cartesian product space of $\underbrace{\mathcal{X} \times \ldots \times \mathcal{X}}_{N}$. The characteristic function of the set $\Lambda$ is denoted as $\boldsymbol{1}_\Lambda(\cdot)$. The space of functions that are continuous on $\Omega \subset \mathbb{R}^d$ is denoted by $\mathcal{C}(\Omega)$.
The Sobolev space of functions that are $q$-times weakly differentiable and each weak derivative is $\mathcal{L}^2$ integrable on $\Omega$ is denoted by $\mathcal{H}^q(\Omega)$. {For the reader’s convenience, the key notations used throughout the paper are summarized in Table~\ref{tab:notation}.}


\begin{table}[htbp]
\centering
\caption{{Summary of key notations and their first appearances in the paper.}}
\label{tab:notation}
\begin{tabular}{lll}
\toprule
\textbf{Symbol} & \textbf{Meaning} & \textbf{Appears in} \\
\midrule
$n_{\rm d}/n$  & Dimension of input/output & Subsection~2.2 \\

$L$ & Number of layers & Eq.~\eqref{equation: discrete-time ResNet} \\

$\tau_L$ & Discretization step & Eq.~\eqref{equation: discrete-time ResNet} \\

$\Theta_L^{\rm total}$ & Discrete-time learnable parameter & Subsection~2.2 \\

$\mathrm{x}^{l}(\cdot; {\Theta}^{\rm total}_L)$ & Output function of the $l$-th layer in Eq.(\ref{equation: discrete-time ResNet}) with ${\Theta}^{\rm total}_L$ &  Subsection~2.2 \\


$\mathcal{F}^{l}$ & Hypothesis set of discrete-time ResNet at $l$-th & Eq.~(\ref{def-equa: hypothesis set vector function}) \\


$T$ & Terminal time & Eq.~(\ref{equation: continuous-time ResNet}) \\

$\pmb{\Theta}^{\rm total}$ & Continuous-time learnable parameter & Subsection~2.2 \\

$\bold{x}(T;\cdot; \pmb{\Theta}^{\rm total})$ & Output  function  of Eq.(\ref{equation: continuous-time ResNet})  with $\pmb{\Theta}^{\rm total} $  & Subsection~2.2 \\

${\mathcal{F}}^{T}$ &  Hypothesis set of continuous-time ResNet at time $T$ & Eq.~(\ref{def-equa: continuous-time vector function hypothesis set})  \\


$\Omega_{{\Theta}^{\rm total}_L}/ \Omega_{\pmb{\Theta}^{\rm total}}$  & Discrete/Continuous time parameter set & Subsection~2.2 \\

$\ell$ & Loss function & Eq.~(\ref{equation: discrete-time-control problem P}) \\ 

$\mathfrak{D}/\mathcal{S}$ & Data distribution/Training sample set & Subsection~2.3 \\

$\mathfrak{R}_{{\mathfrak{D}}}(\cdot)/ \hat{\mathfrak{R}}_{\mathcal{S}} (\cdot)$ & Expected/Empirical loss (risk)  & Eq.~(\ref{equation: discrete-time-control problem P})/(\ref{equation: discrete-time-control problem P_S}) \\

$S$ & Number of training samples & Subsection~2.3 \\

$GE_{\mathcal{S}}(\cdot)$ & Generalization gap & Eq.~(\ref{equation: discrete-time generalization error}) \\

$\mathscr{A}(\mathbb{R})$ & Set of activation functions & Definition~\ref{definition: activation func} \\

$\mathrm{Lip}_{\psi}$ & Lipschitz constant of function $\psi$ & Proposition~3.1 \\

$ \mathscr{R}_{\mathcal{Z}}(\mathcal{F})$ & Rademacher complexity of function class $\mathcal{F}$ & Proposition~\ref{proposition: Rademacher complexity property of ac function} \\

$B_{\rm in}$ & Bound on the data distribution & Assumption~A1 \\

$B_{\pmb{\Theta}}$ & Bound on the learnable parameters & Assumption~A2 \\

$\kappa$ & Local Lipschitz bound function for $\ell(\cdot,\mathrm{g})$ & Assumption~A4 \\

$B_{\kappa}/B_{\ell}$ & Bound of $\kappa/\ell$ on some compact set & Theorem~\ref{theorem: Uniform generalization error bound for discrete-time ResNet} \\
\bottomrule
\end{tabular}
\end{table}

\subsection{Residual neural network and its dynamical system modeling}
We present the dynamical system modeling of ResNets \cite{weinan2017proposal, haber2017stable}. In particular, we consider
\begin{equation} 
    \begin{aligned}
	\mathrm{x}^{l+1} &= \mathrm{x}^{l} + \tau_L \big[ 
    \mathrm{W}^{l}\psi \circ (\mathrm{V}^{l}\mathrm{x}^{l} + \mathrm{b}^{l}) + \mathrm{c}^{l}
    \big], \ 0\leq l \leq L-1 , \\
    \mathrm{x}^{0} &=  \psi \circ (\mathrm{U}\mathrm{d} + \mathrm{a}), 
    \end{aligned}
	\label{equation: discrete-time ResNet}
\end{equation} 
where $\mathrm{d} \in \mathbb{R}^{n_\mathrm{d}}$ is the input data; $\psi: \mathbb{R} \to \mathbb{R}$ is the activation function, and is applied elementwise to vectors; the parameters $(\mathrm{U}, \mathrm{a}) =: {\Theta}^{\rm pre}  \in \mathcal{E}^{\rm pre} =:\mathbb{R}^{n \times n_{\mathrm{d}}} \times \mathbb{R}^{n}$ are learnable parameters {in the preprocessing layer, and the parameters $(\mathrm{V}^{l}, \mathrm{W}^{l}, \mathrm{b}^{l}, \mathrm{c}^{l}) =: {\Theta}^{l}  \in \mathcal{E}:= \mathbb{R}^{m \times n} \times \mathbb{R}^{n \times m} \times \mathbb{R}^{m} \times \mathbb{R}^{n}$ are learnable parameters in $l$-th layer of ResNet}; $L\in \mathbb{N}$ is the total layer number; {$\tau_L=T/L$ is the discretization step, which serves as a ``scaling factor".}

Denote ${\Theta}_L \!= \!\left({\Theta}^{0}_L, {\Theta}^{1}_L, \ldots, {\Theta}^{L-1}_L \right) \!\in \!\mathcal{E}^{L}$. Let 
${\Theta}^{\rm total}_L\! =\! ({\Theta}^{\rm pre}, {\Theta}_L) \!\in\! \mathcal{E}^{\rm pre} \times \mathcal{E}^{L}$ be the learnable parameters for the ResNet (\ref{equation: discrete-time ResNet}) with $L$ layers. {Let $\Omega_{{\Theta}^{\rm total}_L} \subset \mathcal{E}^{\rm pre} \times \mathcal{E}^{L}$ be the parameter set. We denote $\|{\Theta}_L^{l}\|_{\infty} = \max\{ \|\mathrm{V}^{l}\|_{\infty}, \|\mathrm{W}^{l}\|_{\infty}, \|\mathrm{b}^{l}\|_{\infty}, \|\mathrm{c}^{l}\|_{\infty} \}$, and define a norm for ${\Theta}^{\rm total}_L \!\in \!\mathcal{E}^{\rm pre} \!\times\! \mathcal{E}^{L}$ as }
\begin{equation}
	\|{\Theta}^{\rm total}_L\|_{\infty} := \max \{\|\mathrm{U}\|_{\infty}, \|\mathrm{a}\|_{\infty}, \max_{0\leq l\leq L-1}  \|{\Theta}_L^{l}\|_{\infty}\}.
    \label{def: norm}
\end{equation}

We rewrite the state variable $\mathrm{x}^{l}$ in Eq.\eqref{equation: discrete-time ResNet} as $\mathrm{x}^{l}(\mathrm{d}; {\Theta}^{\rm total}_L)$ ($0\leq l \leq L$) to emphasize its dependency on the input data $\mathrm{d}$ and the learnable parameter ${\Theta}^{\rm total}_L$ and denote $\mathrm{x}_{i}^{l}(\mathrm{d}; {\Theta}^{\rm total}_L)$ as the $i$-th element of $\mathrm{x}^{l}(\mathrm{d}; {\Theta}^{\rm total}_L)$. 
Therefore $\mathrm{x}^{l}(\mathrm{d}; {\Theta}^{\rm total}_L)$ can be seen as the functions from $\mathbb{R}^{n_\mathrm{d}}$ to $\mathbb{R}^{n}$ when the parameter ${\Theta}^{\rm total}_L$ is given, and we can define the hypothesis set of discrete-time ResNet at $l$-th ($0\leq l\leq L$) layer as 
\begin{equation}
	\begin{aligned}
 \mathcal{F}^{l}  :=  \{ \mathrm{x}^{l}(\cdot; {\Theta}^{\rm total}_L)| \ & \mathrm{x}^{l}(\cdot; {\Theta}^{\rm total}_L): \mathbb{R}^{n_\mathrm{d}}\rightarrow \mathbb{R}^{n}  \text{ is the output function of {the $l$-th}  } \\
 & \text{layer in Eq.(\ref{equation: discrete-time ResNet}) with parameter }  {\Theta}^{\rm total}_L \in \Omega_{{\Theta}^{\rm total}_L} \}.
	\end{aligned}
	\label{def-equa: hypothesis set vector function}
\end{equation}
{The class of scalar functions representing the $i$-th component ($1 \leq i \leq n$) of the state variable at layer $l$ is defined as}
\begin{equation}
	\begin{aligned}
 \mathcal{F}^{l}_i  :=  \{ \mathrm{x}_{i}^{l}(\cdot; {\Theta}^{\rm total}_L)| \ &\mathrm{x}_{i}^{l}(\cdot; {\Theta}^{\rm total}_L) :  \mathbb{R}^{n_\mathrm{d}}\rightarrow \mathbb{R} \text{ is the $i$-th component of } \mathrm{x}^{l}(\cdot; {\Theta}^{\rm total}_L) \\
 &   \text{ determined by Eq.\eqref{equation: discrete-time ResNet} with parameter } {\Theta}^{\rm total}_L  \in \Omega_{{\Theta}^{\rm total}_L} \}.
	\end{aligned}
	\label{def-equa: hypothesis set scalar function}
\end{equation}
Note that the hypothesis classes $\mathcal{F}^l$ and $\mathcal{F}_i^l$ ($0 \leq l \leq L$) consist of vector-valued and scalar-valued functions induced by deep neural networks with residual architectures as given in Eq.\eqref{equation: discrete-time ResNet}, and parameterized by parameters in $\Omega_{{\Theta}^{\rm total}_L}$. For notational simplicity, we omit the explicit dependence of the hypothesis sets on the parameter set, as $\Omega_{{\Theta}^{\rm total}_L}$ is fixed throughout the paper.

The dynamical system approach modeled the discrete-time ResNet Eq.\eqref{equation: discrete-time ResNet} 
as a continuous-time ODE \cite{weinan2017proposal, haber2017stable}. Therefore, we consider the following continuous-time counterpart of Eq.\eqref{equation: discrete-time ResNet}
\begin{equation} 
\begin{aligned}
     \frac{\mathrm{d}{\bold{x}}(t)}{\mathrm{d}t} &=  
    \bold{W}(t)\psi \circ (\bold{V}(t)\bold{x}(t) + \bold{b}(t)) + \bold{c}(t), t \in [0, T], \\
    \bold{x}(0) & =  \psi \circ (\mathrm{U}\mathrm{d} + \mathrm{a}).
\end{aligned}
\label{equation: continuous-time ResNet}
\end{equation}
Similar to the discrete-time case, the learnable parameters of (\ref{equation: continuous-time ResNet}) are packaged as $\pmb{\Theta}^{\rm total} =  ({\Theta}^{\rm pre}, \pmb{\Theta})$, where ${\Theta}^{\rm pre} = (\mathrm{U}, \mathrm{a})$,  $\pmb{\Theta}(t) = (\bold{V}(t), \bold{W}(t), \bold{b}(t), $ $\bold{c}(t)) \in \mathcal{E}, t \in [0, T]$. We then consider the parameter space of $\pmb{\Theta}$ as $\mathcal{C}([0, T]; \mathcal{E}) \cap \mathcal{H}^1(0, T; \mathcal{E})
$ and denote
\begin{equation}
\begin{aligned}
      \|\pmb{\Theta}\|_{\mathcal{C}} & := \max \{ \max_{0\leq t \leq T}  \|\bold{V}(t)\|_{\infty}, \max_{0\leq t \leq T}  \|\bold{W}(t)\|_{\infty},
      \max_{0\leq t \leq T}  \|\bold{b}(t)\|_{\infty},
      \max_{0\leq t \leq T}  \|\bold{c}(t)\|_{\infty}\}; 
     \\
    \|\pmb{\Theta}\|_{\mathcal{H}^1} & := \max \{\|\bold{V}\|_{\mathcal{H}^1}, \|\bold{W}\|_{\mathcal{H}^1}, \|\bold{b}\|_{\mathcal{H}^1}, \|\bold{c}\|_{\mathcal{H}^1} \}.
\end{aligned}
\nonumber
\end{equation}
Then, for $\pmb{\Theta}^{\rm total} =  ({\Theta}^{\rm pre} , \pmb{\Theta} ) \in \mathcal{E}^{\rm pre} \times (\mathcal{C}([0, T]; \mathcal{E}) \cap \mathcal{H}^1(0, T; \mathcal{E}))$, a norm is given as 
\begin{equation}
    \|\pmb{\Theta}^{\rm total}\| = \max\{\|\mathrm{U}\|_{\infty}, \|\mathrm{a}\|_{\infty}, 
\|\pmb{\Theta}\|_{\mathcal{C}} ,{\|\pmb{\Theta}\|_{\mathcal{H}^1}}
\}.
\label{def: con-norm of para}
\end{equation}
The set of learnable parameters is $\Omega_{\pmb{\Theta}^{\rm total}} \subset \mathcal{E}^{\rm pre} \times (\mathcal{C}([0, T]; \mathcal{E}) \cap \mathcal{H}^1(0, T; \mathcal{E}))$.

We rewrite the states $\bold{x}(t)$ in Eq.\eqref{equation: continuous-time ResNet} as the flow map notation $\bold{x}(t; \mathrm{d}; \pmb{\Theta}^{\rm total})$ to emphasize its dependency on the input data $\mathrm{d}$ and the learnable parameter $\pmb{\Theta}^{\rm total}$. 
Then the hypothesis set (or space) of continuous-time ResNet \eqref{equation: continuous-time ResNet} is given as 
\begin{equation}
	\begin{aligned}
		{\mathcal{F}}^{T}  =  \{\bold{x}(T;\cdot; \pmb{\Theta}^{\rm total})| & \ \bold{x}(T;\cdot; \pmb{\Theta}^{\rm total}): \mathbb{R}^{n_\mathrm{d}}\rightarrow \mathbb{R}^{n} \text{ is the output  function } \\
 &\text{ of Eq.(\ref{equation: continuous-time ResNet})  with parameter }  \pmb{\Theta}^{\rm total} \in \Omega_{\pmb{\Theta}^{\rm total}} \};
	\end{aligned}
	\label{def-equa: continuous-time vector function hypothesis set}
\end{equation}
{and the class of scalar functions that represent the $i$-th component ($1 \leq i \leq n$) of the continuous-time state variable is defined as}
\begin{equation}
	\begin{aligned}
		{\mathcal{F}}_i^{T}  =  \{\bold{x}_i(T; \cdot; \pmb{\Theta}^{\rm total})| \ & \bold{x}_i(T; \cdot; \pmb{\Theta}^{\rm total}): \mathbb{R}^{n_\mathrm{d}}\rightarrow \mathbb{R} \text{ is the $i$-th  component of }  \bold{x}(T;\cdot; \pmb{\Theta}^{\rm total})  \\
        &  \text{ determined by Eq.(\ref{equation: continuous-time ResNet}) with parameter } \pmb{\Theta}^{\rm total} \in \Omega_{\pmb{\Theta}^{\rm total}} \}.
	\end{aligned}
	\label{def-equa: continuous-time scalar function hypothesis set}
\end{equation}
Similar to the discrete-time case, we omit the dependence of the hypothesis sets on the parameter set, as $\Omega_{\pmb{\Theta}^{\rm total}}$ is fixed throughout the paper.

\subsection{The problem of generalization for discrete- and continuous-time ResNets} 
Let \(\mathfrak{D}\) be an (often unknown) probability distribution supported on a $\sigma$-algebra of $ \mathbb{R}^{n_\mathrm{d}} \times \mathbb{R}^{n}$.  In supervised learning, the objective is to approximate a target function $f: \mathbb{R}^{n_\mathrm{d}} \to \mathbb{R}^{n}$ that effectively ``characterizes" \(\mathfrak{D}\). 

The target function is determined by solving the following optimal control problem
\begin{equation}
	\begin{aligned}
		&\inf \limits_{{\Theta}^{\rm total}_L \in \Omega_{{\Theta}^{\rm total}_L} }  
 \Big\{ \mathfrak{R}_{{\mathfrak{D}}}(\mathrm{x}^{L}(\cdot; {\Theta}^{\rm total}_L)) := \mathbb{E}_{(\mathrm{d}, \mathrm{g}) \sim {\mathfrak{D}}} \big[{\ell} (\mathrm{x}^{L}(\mathrm{d}; {\Theta}^{\rm total}_L), \mathrm{g}) \big]\Big\} \\
			&{\rm subject \ to: } \ 	 
			\mathrm{x}^{L}(\mathrm{d}; {\Theta}^{\rm total}_L) \text{ is the end state of  Eq.(\ref{equation: discrete-time ResNet}) with input } \mathrm{d},
	 	 \\
	\end{aligned}
	\label{equation: discrete-time-control problem P}
\end{equation}
where $\ell: \mathbb{R}^{n} \times \mathbb{R}^{n} \rightarrow[0, \infty)$ is a loss function, and $\mathfrak{R}_{{\mathfrak{D}}}(\mathrm{x}^{L}(\cdot; {\Theta}^{\rm total}_L)) $ is the so-called expected loss or risk. Since the distribution \(\mathfrak{D}\) is unknown, this quantity cannot be computed directly. Instead, we approximate it using a training dataset $\mathcal{S} := \{(\mathrm{d}_{(s)}, \mathrm{g}_{(s)})\}_{s=1}^{S}$ consisting of independently and identically distributed (i.i.d.) samples drawn from $\mathfrak{D}$. The goal is then to find a hypothesis $\mathrm{x}^{L}(\cdot; {\Theta}^{\rm total}_L)$ by solving the following sampled optimal control problem
\begin{equation}
	\begin{aligned}
		&\inf \limits_{{\Theta}^{\rm total}_L \in \Omega_{{\Theta}^{\rm total}_L} }  
 \Big\{ \hat{\mathfrak{R}}_{\mathcal{S}}(\mathrm{x}^{L}(\cdot; {\Theta}^{\rm total}_L)) := \frac{1}{S}\sum_{s=1}^{S}{\ell} (\mathrm{x}^{L}(\mathrm{d}_{(s)}; {\Theta}^{\rm total}_L), \mathrm{g}_{(s)}) \Big\} \\
			&{\rm subject \ to: } \ 	 
			\mathrm{x}^{L}(\mathrm{d}_{(s)}; {\Theta}^{\rm total}_L)  \text{ is the end state of  Eq.(\ref{equation: discrete-time ResNet}) with input } \mathrm{d}_{(s)},
	 	 \\
	\end{aligned}
	\label{equation: discrete-time-control problem P_S}
\end{equation}
where $\hat{\mathfrak{R}}_{\mathcal{S}}(\mathrm{x}^{L}(\cdot; {\Theta}^{\rm total}_L))$ is the empirical risk of the hypothesis.

A fundamental question in generalization theory is to understand how minimizing the empirical risk $\hat{\mathfrak{R}}_{\mathcal{S}}(\mathrm{x}^{L}(\cdot; {\Theta}^{\rm total}_L))$ provides a meaningful approximation to minimizing the true risk
$\mathfrak{R}_{{\mathfrak{D}}}(\mathrm{x}^{L}(\cdot; {\Theta}^{\rm total}_L)) $. This can be characterized by the so-called generalization gap
\begin{equation}
   GE_{\mathcal{S}}(\mathrm{x}^{L}(\cdot; {\Theta}^{\rm total}_L)) := \mathfrak{R}_{{\mathfrak{D}}}(\mathrm{x}^{L}(\cdot; {\Theta}^{\rm total}_L)) - \hat{\mathfrak{R}}_{\mathcal{S}}(\mathrm{x}^{L}(\cdot; {\Theta}^{\rm total}_L)), \ \mathrm{x}^{L}(\cdot; {\Theta}^{\rm total}_L) \in {\mathcal{F}}^{L}.
   \label{equation: discrete-time generalization error}
\end{equation}
The generalization gap of continuous-time ResNet is similarly defined, i.e.,
\begin{equation}
   GE_{\mathcal{S}}(\bold{x}(T;\cdot; \pmb{\Theta}^{\rm total})) = \mathfrak{R}_{{\mathfrak{D}}}(\bold{x}(T;\cdot; \pmb{\Theta}^{\rm total})) - \hat{\mathfrak{R}}_{\mathcal{S}}(\bold{x}(T;\cdot; \pmb{\Theta}^{\rm total})), \ \bold{x}(T;\cdot; \pmb{\Theta}^{\rm total}) \in {\mathcal{F}}^{T}.
   \label{equation: continuous-time generalization error}
\end{equation}


{In this work, we aim to investigate the asymptotic generalization behavior of ResNets in the deep-layer limit by establishing compatible generalization error bounds simultaneously for discrete-time ResNets and their continuous-time counterparts.}

\section{Main results and application}
\label{Sec: 3}
Motivated by the dynamical system modeling approach, which offers valuable insights into the approximation behavior, structural stability, and depth variation behavior of DNNs,
we establish generalization error bounds for discrete- and continuous-time ResNets. This captures their generalization behavior in the deep-layer limit. 
We begin by introducing a class of commonly used activation functions and develop a new contraction-type inequality that reflects their structural properties. This inequality plays a key role in controlling the Rademacher complexity of the hypothesis class, leading to sharper generalization error bounds.

\begin{definition}
	Let $\Gamma(\mathbb{R})$ be a set of functions such that for any ${\phi} \in \Gamma(\mathbb{R})$, ${\phi}: \mathbb{R} \to \mathbb{R}$ is increasing, $\mathrm{Lip}_{\phi}$-Lipschitz, and ${\phi}(x)=0$ for all $x \leq 0$.		
Then the set of activation functions $\mathscr{A}(\mathbb{R})$ is given as 
$\mathscr{A}(\mathbb{R}) = \{\psi(\cdot) = \phi_1(\cdot-\alpha) - \phi_2(-\cdot-\beta)| \ \phi_1, \phi_2 \in  \Gamma(\mathbb{R}), \ \alpha,\beta \ge 0 \} .$
	\label{definition: activation func}
\end{definition}
\begin{table}[h]
  \begin{center}
    \caption{Examples of activation functions in $\mathscr{A}(\mathbb{R})$. The parameters $a_1 \in [0,1]$, $a_2 , \lambda, \lambda_1, \lambda_2 >0$. 
    }
    \label{tab: activation function}
    \begin{tabular}{lllll}
    \toprule
      \textbf{\thead{Activation\\ functions}} & \textbf{$\qquad \psi$} & \textbf{$\phi_1$} & \textbf{$\phi_2$} & $(\alpha, \beta)$\\
      \hline
{ReLU \cite{glorot2011deep}}& \(\left\{ \begin{array}{ll}
x, \  x \geq 0 \\
0, \  x < 0 
\end{array} \right. \)   & $x \boldsymbol{1}_{[0,\infty)}(x)$  & $0$ & $(0,\beta)$ \\
      \hline
      {PReLU \cite{he2015delving}}& \(\left\{ \begin{array}{ll}
x, \  x \geq 0 \\
a_1x, \  x < 0  
\end{array} \right. \)   & $x \boldsymbol{1}_{[0,\infty)}(x)$  & $a_1x\boldsymbol{1}_{[0,\infty)}(x)$ & $(0,0)$ \\
      \hline
TReLU& \(\left\{ \begin{array}{ll}
x-\lambda, \  x \geq \lambda \\
0, \  x < \lambda 
\end{array} \right. \)   & $x \boldsymbol{1}_{[0,\infty)}(x)$  & $0$ & $(\lambda,\beta)$ \\
      \hline
      {ELU} \cite{clevert2015fast} & \(\left\{ \begin{array}{ll}
x, \ x \geq 0 \\
a_2(e^{x} -1) , \  x < 0 
\end{array} \right. \) & $x  \boldsymbol{1}_{[0,\infty)}(x)$ & $\begin{array}{ll}
-a_2 (e^{-x} -1) \\
\ \  \cdot \boldsymbol{1}_{[0,\infty)}(x)
\end{array}$& (0,0)\\
      \hline
    {TEReLU} \cite{pandey2020overcoming} & \(\left\{ \begin{array}{ll}
x-\lambda_1, \  x \geq \lambda_1 \\
0, \  -\lambda_2 < x < \lambda_1 \\
a_2(e^{x+\lambda_2}\!-\!1), \! x \!\leq \!-\lambda_2 
\end{array} \right. \)   & $x \boldsymbol{1}_{[0,\infty)}(x)$  &$\begin{array}{ll}
-a_2 (e^{-x} -1)  \\
\ \ \cdot \boldsymbol{1}_{[0,\infty)}(x)
\end{array}$& $(\lambda_1,\lambda_2)$ \\
      \hline
       ${\mathcal{T}}_{\lambda}$ \cite{beck2017first} & \(\left\{ \begin{array}{ll}
x - \lambda,\  x \ge \lambda \\
0 ,\  -\lambda < x < \lambda \\
x+ \lambda, \  x \leq  -\lambda \ & 
\end{array} \right. \) & $x \boldsymbol{1}_{[0,\infty)}(x)$ & $x \boldsymbol{1}_{[0,\infty)}(x)$ & ($\lambda,\lambda$)\\
 \hline
      ${{\mathcal{S}}_{\lambda_1, \lambda_2}}$ & \(\left\{ \begin{array}{ll}
x - \lambda_1,\  x \ge \lambda_1 \\
0 ,\  -\lambda_2 < x < \lambda_1 \\
x+ \lambda_2, \  x \leq  -\lambda_2 \ & 
\end{array} \right. \)& $x \boldsymbol{1}_{[0,\infty)}(x)$ & $x \boldsymbol{1}_{[0,\infty)}(x)$ &($\lambda_1,\! \lambda_2$)\\
      \hline
{Tanh \cite{goodfellow2016deep}} &\(\frac{e^x - e^{-x}}{e^x+e^{-x}}\) & $\frac{e^x - e^{-x}}{e^x+e^{-x}}  \boldsymbol{1}_{[0,\infty)}(x)$ &  $\frac{e^x - e^{-x}}{e^x+e^{-x}}  \boldsymbol{1}_{[0,\infty)}(x)$& (0,0) \\
     \hline
    \end{tabular}
  \end{center}
\end{table}
The set $\mathscr{A}(\mathbb{R})$ includes many commonly used activation functions, as summarized in Table~\ref{tab: activation function}. 
Any function $\psi \in \mathscr{A}(\mathbb{R})$ that can be written in the form $\psi = \phi_1(\cdot - \alpha) - \phi_2(-\cdot - \beta)$ with $\phi_1, \phi_2 \in \Gamma(\mathbb{R})$, is  $\mathrm{Lip}_\psi$-Lipschitz continuous, where $\mathrm{Lip}_\psi = \max\{\mathrm{Lip}_{\phi_1}, \mathrm{Lip}_{\phi_2}\}$. Recall the definition of empirical Rademacher complexity in Appendix~\ref{definition: RC}, we derive a refined contraction inequality by using the decomposition structure of activation functions.
\begin{proposition}
\label{proposition: Rademacher complexity property of ac function}
	Let $\mathcal{G}$ be a set of functions mapping $\mathbb{R}^N$ to $\mathbb{R}$. Consider a sample set $\mathcal{Z}=\left\{\mathrm{z}_{(1)}, \mathrm{z}_{(2)}, \cdots, \mathrm{z}_{(S)}\right\}$ from some distribution $\mathfrak{B}$ and an activation function $\psi \in \mathscr{A}(\mathbb{R})$ of the form $\psi(\cdot):=\phi_1(\cdot-\alpha) - \phi_2(-\cdot-\beta)$, where $\phi_1, \phi_2 \in \Gamma(\mathbb{R})$ and $\alpha, \beta \ge 0$. {Then there exists a constant $C_{\mathcal{Z}}$ dependent on $\mathcal{Z}$}, $0 \leq C_\mathcal{Z} \leq \min \{S, \frac{ \mathrm{Lip}_{\psi}S}{ \mathrm{Lip}_{\phi_1}\alpha +  \mathrm{Lip}_{\phi_2}\beta} \mathscr{R}_{\mathcal{Z}}(\mathcal{G}) \}$ and   
    \begin{equation}
	\mathscr{R}_{\mathcal{Z}}(\psi \circ \mathcal{G}) 
        \leq   
        \mathrm{Lip}_{\psi} \cdot \mathscr{R}_{\mathcal{Z}}(\mathcal{G})- \frac{  \mathrm{Lip}_{\phi_1}\alpha +  \mathrm{Lip}_{\phi_2}\beta}{S} C_{\mathcal{Z}},
  \label{equation: property of Rademacher of activation}
	\end{equation}
where $ \mathrm{Lip}_{\psi} = \max\{\mathrm{Lip}_{\phi_1}, \mathrm{Lip}_{\phi_2}\}$, $\mathrm{Lip}_{\phi_i}$ are Lipschitz constants of function $\phi_i$.
\end{proposition}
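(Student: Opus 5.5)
The plan is to reduce the statement to two facts about empirical Rademacher complexity and then define $C_{\mathcal{Z}}$ explicitly as a normalized ``contraction deficit''. I use the definition in Appendix~\ref{definition: RC}, namely $\mathscr{R}_{\mathcal{Z}}(\mathcal{G})=\mathbb{E}_{\sigma}\sup_{g\in\mathcal{G}}\frac{1}{S}\sum_{s=1}^S\sigma_s g(\mathrm{z}_{(s)})$, without absolute values. Throughout, abbreviate $\Lambda:=\mathrm{Lip}_{\phi_1}\alpha+\mathrm{Lip}_{\phi_2}\beta\ge 0$.

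\textbf{Step 1 (Lipschitz constant of $\psi$).} The term $\phi_1(x-\alpha)$ vanishes for $x\le\alpha$, and $\phi_2(-x-\beta)$ vanishes for $x\ge-\beta$. Since $\alpha,\beta\ge0$, the two terms change on the disjoint regions $x>\alpha$ and $x<-\beta$, and $\psi$ is constant ($=0$) on $[-\beta,\alpha]$. For $x<y$ I split $[x,y]$ at $-\beta$ and $\alpha$ and bound each piece by the corresponding Lipschitz constant. This gives $|\psi(y)-\psi(x)|\le \max\{\mathrm{Lip}_{\phi_1},\mathrm{Lip}_{\phi_2}\}|y-x|$, so $\mathrm{Lip}_\psi=\max\{\mathrm{Lip}_{\phi_1},\mathrm{Lip}_{\phi_2}\}$ is a valid Lipschitz constant.

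\textbf{Step 2 (two inequalities).} First, the Ledoux--Talagrand contraction lemma, in the version without absolute values valid for any Lipschitz scalar map, gives
\[
\mathscr{R}_{\mathcal{Z}}(\psi\circ\mathcal{G})\le \mathrm{Lip}_\psi\,\mathscr{R}_{\mathcal{Z}}(\mathcal{G}).
\]
This is proved by conditioning on $\sigma_2,\dots,\sigma_S$ and comparing the two values of $\sigma_1$ one coordinate at a time. Second, I need nonnegativity, $\mathscr{R}_{\mathcal{Z}}(\psi\circ\mathcal{G})\ge0$. It follows from Jensen: $\mathbb{E}_\sigma\sup_g\ge\sup_g\mathbb{E}_\sigma=0$, provided $\mathcal{G}\neq\emptyset$.

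\textbf{Step 3 (choice of $C_{\mathcal{Z}}$).} If $\Lambda=0$, take $C_{\mathcal{Z}}=0$. The claimed inequality is then exactly the contraction bound of Step 2, and the stated range is read as $[0,S]$ (or $0$). If $\Lambda>0$, set
\[
C_{\mathcal{Z}}:=\min\Big\{S,\ \frac{S}{\Lambda}\big(\mathrm{Lip}_\psi\mathscr{R}_{\mathcal{Z}}(\mathcal{G})-\mathscr{R}_{\mathcal{Z}}(\psi\circ\mathcal{G})\big)\Big\}.
\]
The required properties then follow directly.
\begin{itemize}
\item $C_{\mathcal{Z}}\ge0$ by the contraction inequality.
\item $C_{\mathcal{Z}}\le S$ by construction.
\item $C_{\mathcal{Z}}\le \frac{\mathrm{Lip}_\psi S}{\Lambda}\mathscr{R}_{\mathcal{Z}}(\mathcal{G})$ because $\mathscr{R}_{\mathcal{Z}}(\psi\circ\mathcal{G})\ge0$.
\item Finally, $\frac{\Lambda}{S}C_{\mathcal{Z}}\le \mathrm{Lip}_\psi\mathscr{R}_{\mathcal{Z}}(\mathcal{G})-\mathscr{R}_{\mathcal{Z}}(\psi\circ\mathcal{G})$, which is \eqref{equation: property of Rademacher of activation} rearranged.
\end{itemize}

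\textbf{Main obstacle.} The only substantive analytic input is the contraction lemma in the signed form. The delicate point there is that $\psi(0)=0$ holds for every $\psi\in\mathscr{A}(\mathbb{R})$, but it is not even needed without absolute values. I would cite the standard lemma (e.g.\ as in Mohri et al.) rather than reprove it.

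To make $C_{\mathcal{Z}}$ more informative, one could try to exhibit the deficit directly. The dead zone $[-\beta,\alpha]$ means $|\psi(u)-\psi(v)|\le\mathrm{Lip}_{\phi_1}(|u-v|-\alpha)_+$ when $u,v\ge0$, and similarly on the negative side. Feeding this into the one-coordinate comparison step of the contraction proof should yield a per-sample loss of at least $\Lambda/S$ on those samples where the maximizing pair straddles the threshold. This would identify $C_{\mathcal{Z}}$ as an expected count of such samples, naturally bounded by $S$. I expect the bookkeeping of which samples straddle the thresholds to be the hard part, so the existence statement itself is secured by the construction above.
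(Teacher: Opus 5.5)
Your proof is correct, and it takes a different, much shorter route than the paper. You read the proposition literally as an existence statement. You derive it from the classical signed contraction inequality $\mathscr{R}_{\mathcal{Z}}(\psi\circ\mathcal{G})\le\mathrm{Lip}_\psi\,\mathscr{R}_{\mathcal{Z}}(\mathcal{G})$ (the bound the paper quotes from Shalev-Shwartz--Ben-David right after the proposition) together with $\mathscr{R}_{\mathcal{Z}}(\psi\circ\mathcal{G})\ge 0$, and you take $C_{\mathcal{Z}}$ to be the normalized contraction deficit capped at $S$. Indeed $C_{\mathcal{Z}}=0$ would already work, which shows the statement says nothing beyond the classical lemma unless $C_{\mathcal{Z}}$ is bounded below.

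The paper instead re-runs the peeling proof of the contraction lemma. After isolating $\varepsilon_{(S)}$, it partitions $\mathcal{G}$ according to whether $g(\mathrm{z}_{(S)})<-\beta$, $g(\mathrm{z}_{(S)})\in[-\beta,\alpha]$, or $g(\mathrm{z}_{(S)})>\alpha$. It gains $\Lambda=\mathrm{Lip}_{\phi_1}\alpha+\mathrm{Lip}_{\phi_2}\beta$ exactly when the pair $(g,\hat g)$ lies in $\mathcal{G}^1\times\mathcal{G}^{-1}$. It then fixes $\eta$-near-maximizers for each sign pattern, iterates over all $s$, and obtains $C_{\mathcal{Z}}$ as a subsequential limit of averaged counts of such straddling near-maximizers.

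What each route buys:
\begin{itemize}
\item The paper's route attaches an interpretation to $C_{\mathcal{Z}}$: how often optimal pairs cross the whole dead zone. Since the near-maximizers are not explicit, it still yields no computable positive lower bound.
\item Your route gives the largest admissible constant. Any $C\in[0,S]$ satisfying the inequality is at most yours, so yours dominates whatever the paper's argument produces.
\item Your route is also immune to bookkeeping slips. The paper drops the factor $\tfrac12$ (from averaging over $\varepsilon_{(S)}$) in front of the indicator sum, so its normalization $2^{-(S-1)}$ should be $2^{-S}$.
\end{itemize}
To see the slip concretely, take $S=1$, $\phi_1=\phi_2=\max\{\cdot,0\}$, and $\mathcal{G}$ taking the two values $\alpha+1$ and $-\beta-1$ at $\mathrm{z}_{(1)}$. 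Then $\mathscr{R}_{\mathcal{Z}}(\psi\circ\mathcal{G})=1$ and $\mathscr{R}_{\mathcal{Z}}(\mathcal{G})=1+\tfrac{\alpha+\beta}{2}$, so the inequality holds iff $C_{\mathcal{Z}}\le\tfrac12$. The paper's count as written gives $C_{\mathcal{Z}}=1$, while your formula gives exactly $\tfrac12$. The existential statement is unaffected, since halving the paper's count repairs it.

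One caution about your closing heuristic, which your proof does not need. The inequality $|\psi(u)-\psi(v)|\le\mathrm{Lip}_{\phi_1}(|u-v|-\alpha)_+$ for $u,v\ge 0$ is false. Take $\phi_1=\max\{\cdot,0\}$, $u=\alpha+1$, $v=\alpha$: the left side is $1$, the right side is $(1-\alpha)_+$. It holds for $v=0$ but not in general. When $0\le v\le\alpha<u$, the available deficit relative to $\mathrm{Lip}_{\phi_1}(u-v)$ is only $\mathrm{Lip}_{\phi_1}(\alpha-v)$, which vanishes as $v\uparrow\alpha$. The only uniform gain comes from pairs crossing the entire dead zone, $u>\alpha$ and $v<-\beta$, where $\psi(u)-\psi(v)\le\mathrm{Lip}_\psi(u-v)-\Lambda$. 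This is exactly case (iii) of the paper's proof, and it is why the paper's count registers only pairs in $\mathcal{G}^1\times\mathcal{G}^{-1}$.
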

\begin{proof}
    The proof is given in Section~\ref{sec: Proof for Proposition}.
\end{proof}
\begin{remark}
    Proposition~\ref{proposition: Rademacher complexity property of ac function} provides a refined characterization of the Rademacher complexity of composed function classes compared to the classical contraction inequality in \cite[Lemma 26.9]{shalev2014understanding}, which states that
    \begin{equation}
		\mathscr{R}_{\mathcal{Z}}(\psi \circ \mathcal{G}) \leq  \mathrm{Lip}_{\psi} \cdot \mathscr{R}_{\mathcal{Z}}(\mathcal{G}).
  \label{equation: lip property of Rademacher of activation old}
	\end{equation}
  Moreover, the result in \cite[Lemma~1]{shultzman2023generalization} is a special case of the above proposition with $\psi={\mathcal{T}}_{\lambda}$.
\end{remark}

The following example shows that Proposition~\ref{proposition: Rademacher complexity property of ac function} does give a tighter bound than (\ref{equation: lip property of Rademacher of activation old}).
\begin{example}
    Given constants $\alpha, \beta, \gamma, \eta >0$ such that $0 < \max\{ \alpha, \beta \} < \gamma$ and any sample set $\mathcal{Z} = \{\mathrm{z}_{(s)} | \  \mathrm{z}_{(s)}\in \mathbb{R}^{N}, \|\mathrm{z}_{(s)}\|_2 = 1, s= 1,\ldots,S \}$. Consider the function class $$\mathcal{G} := \{c_1\| \cdot\|_2 + c_2: \mathbb{R}^{N} \to \mathbb{R} \ | 
    (c_1, c_2) \in [0, \eta] \times [\alpha, \gamma] \text{ or } 
    (c_1, c_2) \in [-\eta, 0] \times [-\gamma, -\beta] \},$$ 
   and function 
$\psi(\cdot) = (\cdot - \alpha)  \boldsymbol{1}_{[0,\infty)}(\cdot - \alpha) - (-\cdot - \beta)  \boldsymbol{1}_{[0,\infty)}(-\cdot-\beta)$. Then
\begin{equation}
    \begin{aligned}
        \mathscr{R}_{\mathcal{Z}}(\mathcal{G}) & =  \frac{1}{2^{S-1}} (\eta+\gamma)\binom{S - 1}{\lfloor \frac{S}{2} \rfloor}  , \\
        \mathscr{R}_{\mathcal{Z}}(\psi \circ \mathcal{G}) & = \frac{1}{2^{S}} (2\eta+2\gamma-\beta-\alpha)\binom{S - 1}{\lfloor \frac{S}{2} \rfloor},
    \end{aligned}
\end{equation}
and therefore $ \mathscr{R}_{\mathcal{Z}}(\psi \circ \mathcal{G}) \leq \mathrm{Lip}_{\psi} \mathscr{R}_{\mathcal{Z}}(\mathcal{G}) - \frac{\alpha+\beta}{S} \cdot \frac{S}{2^S}\binom{S - 1}{\lfloor \frac{S}{2} \rfloor}.$
\label{proposition: example of RC}
\end{example}
\begin{proof}
The proof is basic and is supplied in supplementary materials.
\end{proof}


Next, we use contraction inequality (\ref{equation: property of Rademacher of activation}) to derive a generalization error bound for the discrete-time ResNet and discuss its implications for the continuous-time ResNet.
The following assumptions are required for the subsequent analysis.
\begin{itemize}
\item [(${A}_1$)] 
{(Bounded data distribution)} The data distribution $\mathfrak{D}$ is supported in a bounded set, i.e., for all $(\mathrm{d}, \mathrm{g}) \sim \mathfrak{D}, \ \|\mathrm{d}\|_2 + \|\mathrm{g}\|_2 \leq B_{\rm in}$ for some $B_{\rm in}>0$.

\item [(${A}_2$)] 
{(Bounded parameter set)} For some $B_{\pmb{\Theta}} > 0 $, the parameter sets are given by 
$$
\begin{aligned}
   \Omega_{{\Theta}^{\rm total}_L} &= \{{\Theta}^{\rm total}_L: {\Theta}^{\rm total}_L \in \mathcal{E}^{\rm pre} \times \mathcal{E}^L,  	\|{\Theta}^{\rm total}_L\|_{\infty} \leq B_{\pmb{\Theta}} \}, \\
 \Omega_{\pmb{\Theta}^{\rm total}} &= \{\pmb{\Theta}^{\rm total}: \pmb{\Theta}^{\rm total} \in \mathcal{E}^{\rm pre} \times 
 (\mathcal{C}([0, T]; \mathcal{E}) \cap \mathcal{H}^1(0, T; \mathcal{E})), 
 \|\pmb{\Theta}^{\rm total}\| \! \leq \!B_{\pmb{\Theta}}\}.
\end{aligned}
$$

 \item [(${A}_3$)] {(Activation function)}  The activation function $\psi \in  \mathscr{A}(\mathbb{R})$.
 
 \item [(${A}_4$)]
 {(Loss function)} The loss function ${\ell}: \mathbb{R}^{n} \times \mathbb{R}^{n} \rightarrow [0, \infty)$ is continuous and satisfies a local Lipschitz condition for the first argument in the following sense. There exist a  continuous function ${\kappa}: \mathbb{R}^{n} \times \mathbb{R}^{n} \times \mathbb{R}^{n} \rightarrow [0, \infty)$ such that for all $\mathrm{x}, \tilde{\mathrm{x}}, \mathrm{g} \in \mathbb{R}^{n}$,
 $$
 |{\ell} (\mathrm{x}, \mathrm{g}) - {\ell} (\tilde{\mathrm{x}}, \mathrm{g})| \leq {\kappa}(\mathrm{x}, \tilde{\mathrm{x}},  \mathrm{g}) \|\mathrm{x} -  \tilde{\mathrm{x}}\|_{2}.
 $$
\end{itemize}
\begin{remark}
These assumptions are mild and commonly used in deep learning theory. In particular,
the boundedness assumptions (${A}_1$) and (${A}_2$) are standard \cite{chen2020understanding, kobler2020total, schnoor2023generalization, huang2024on}.
A broad class of loss functions satisfies (${A}_4$).  For example: 
\begin{itemize}
    \item In image inverse problems, the loss function is usually ${\ell}(\mathrm{x},\mathrm{g}) = \|\mathrm{x}-\mathrm{g}\|_2^2$. Then we can take ${\kappa}(\mathrm{x}, \tilde{\mathrm{x}},  \mathrm{g}) = \|\mathrm{x}-\mathrm{g}\|_2 + \|\tilde{\mathrm{x}}-\mathrm{g}\|_2$.
    \item  In image classification tasks, the loss function is usually the ramp loss \cite{shalev2014understanding}, which is $\mathrm{Lip}_{\ell}$-Lipschitz. Then we can take ${\kappa}(\mathrm{x}, \tilde{\mathrm{x}},  \mathrm{g}) \equiv \mathrm{Lip}_{\ell}$.
\end{itemize}
\end{remark}

We now present our two main results. One is the generalization error for discrete-time ResNets, and the other is that for continuous-time ResNets.
\begin{theorem} 
\label{theorem: Uniform generalization error bound for discrete-time ResNet}
(Generalization error for discrete-time ResNets)
	Assume assumptions (${A}_1$)-(${A}_4$) hold. Let the activation function $\psi \in \mathscr{A}(\mathbb{R})$ with $\phi_1, \phi_2 \in \Gamma(\mathbb{R})$ and $\alpha, \beta>0$.
    { Then there are constants ${C}_{\mathcal{S}}^{l}$ dependent on data $\mathcal{S}$,
 $0 \leq {C}_{\mathcal{S}}^{l} \leq \min \{ \sqrt{S} \frac{1 +  2\mathrm{Lip}_{\psi} B_{\pmb{\Theta}}}{ 2(\mathrm{Lip}_{\phi_1}\alpha +  \mathrm{Lip}_{\phi_2}\beta)}, S \}$} $(l=1, \ldots, L)$ such that for any $\delta \in (0,1)$, with probability at least $1-\delta$, every 
$\mathrm{x}^{L}(\cdot; {\Theta}^{\rm total}_L) \in {\mathcal{F}}^{L}$ satisfies
	

\begin{equation}
\begin{aligned}
     GE_{\mathcal{S}}(\mathrm{x}^{L}(\cdot; {\Theta}^{\rm total}_L) ) 
    \leq  & 2 \sqrt{2} n  B_{\kappa}B_{\pmb{\Theta}} \frac{M(T, \!  \mathrm{Lip}_{\psi},  \! n_{\mathrm{d}}, \! B_{\rm in},  \!B_{\pmb{\Theta}})}{\sqrt{S}}   
    \! + \! 4B_{\ell} \sqrt{\frac{2\! \log(4/\delta)}{S}}  \\
    & + (- 2 \sqrt{2}) n  B_{\kappa}B_{\pmb{\Theta}} \frac{( \mathrm{Lip}_{\phi_1}\alpha\! +\!  \mathrm{Lip}_{\phi_2}\beta)\exp(T  \mathrm{Lip}_{\psi} B_{\pmb{\Theta}}^2)}{S} \! \tau_L \! \sum_{l=1}^L {C}^{l}_{\mathcal{S}} ,
\end{aligned}
\label{equation: uniform generalization error bound for discrete ResNet}
\end{equation}
where  $M(T,  \mathrm{Lip}_{\psi}, n_{\mathrm{d}}, B_{\rm in}, B_{\pmb{\Theta}})=\Big(\mathrm{Lip}_{\psi}B_{\rm in}\sqrt{2\log(2n_{\mathrm{d}})} + 1 + T (1 +  2\mathrm{Lip}_{\psi} B_{\pmb{\Theta}}) \Big)\exp \big(2T  \mathrm{Lip}_{\psi} B_{\pmb{\Theta}}^2 \big)$, $B_{\kappa}$, $B_{\ell}$ are constants dependent on $T, B_{\rm in}, B_{\pmb{\Theta}},  \mathrm{Lip}_{\psi}$ {and independent of $L$}. 
\end{theorem}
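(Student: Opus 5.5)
We follow the standard Rademacher route: a uniform-deviation bound, then a vector contraction to pass through the loss, then a layer-by-layer recursion on the complexities of the coordinate classes $\mathcal{F}^l_i$.

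\textbf{Step 1: uniform deviation.} By (${A}_1$), (${A}_2$) and the Lipschitz property of $\psi$, every state $\mathrm{x}^l(\mathrm{d};\Theta^{\rm total}_L)$ stays in a compact ball. Indeed, a discrete Gr\"onwall argument gives
\[
\|\mathrm{x}^{l}\|_\infty\le C(T,B_{\rm in},B_{\pmb{\Theta}},\mathrm{Lip}_\psi),
\]
uniformly in $L$, because $\tau_L=T/L$ and $(1+\tau_L a)^L\le e^{Ta}$. Define $B_\kappa$ and $B_\ell$ as the maxima of $\kappa$ and $\ell$ over the corresponding compact set; both are independent of $L$. The standard two-sided McDiarmid bound then gives, with probability at least $1-\delta$,
\[
GE_{\mathcal{S}}\le 2\mathscr{R}_{\mathcal{S}}(\ell\circ\mathcal{F}^L)+4B_\ell\sqrt{2\log(4/\delta)/S}.
\]
Next, (${A}_4$) with constant $B_\kappa$ and a vector contraction (Maurer type, or coordinatewise after bounding $\|\cdot\|_2\le\|\cdot\|_1$) yields
\[
\mathscr{R}_{\mathcal{S}}(\ell\circ\mathcal{F}^L)\le \sqrt2\,B_\kappa\sum_{i=1}^n\mathscr{R}_{\mathcal{S}}(\mathcal{F}^L_i).
\]
This accounts for the prefactor $2\sqrt2\,nB_\kappa$.

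\textbf{Step 2: recursion over layers.} Write the $i$-th coordinate at layer $l+1$ as
\[
\mathrm{x}^{l+1}_i=\mathrm{x}^l_i+\tau_L\Big[\sum_j \mathrm{W}^l_{ij}\,\psi\big(\mathrm{V}^l_{j,:}\mathrm{x}^l+\mathrm{b}^l_j\big)+\mathrm{c}^l_i\Big].
\]
Because the parameters of different layers are independent, the supremum over the class splits, and subadditivity of Rademacher complexity applies. The $\mathrm{W}$-sum is a linear combination with $\ell^\infty$-bounded (row-sum-bounded) weights over a class that is symmetric up to sign, so its complexity is at most $B_{\pmb{\Theta}}$ times the complexity of a single neuron class $\psi\circ\mathcal{G}^l$, with $\mathcal{G}^l=\{\mathrm{V}_{j,:}\mathrm{x}^l+\mathrm{b}_j\}$. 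Here Proposition~\ref{proposition: Rademacher complexity property of ac function} is applied in place of the ordinary contraction. It gives
\[
\mathscr{R}(\psi\circ\mathcal{G}^l)\le \mathrm{Lip}_\psi\,\mathscr{R}(\mathcal{G}^l)-\frac{\mathrm{Lip}_{\phi_1}\alpha+\mathrm{Lip}_{\phi_2}\beta}{S}\,C^l_{\mathcal{S}},
\]
together with
\[
\mathscr{R}(\mathcal{G}^l)\le B_{\pmb{\Theta}}\sum_k\mathscr{R}(\mathcal{F}^l_k)+B_{\pmb{\Theta}}/\sqrt S,
\]
and the bias contributions are each $O(B_{\pmb{\Theta}}/\sqrt S)$. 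Setting $r_l:=\max_i\mathscr{R}(\mathcal{F}^l_i)$ (or the $\ell^\infty$-weighted version), this produces
\[
r_{l+1}\le\big(1+\tau_L\mathrm{Lip}_\psi B_{\pmb{\Theta}}^2\cdot c\big)r_l+\tau_L\frac{B_{\pmb{\Theta}}(1+2\mathrm{Lip}_\psi B_{\pmb{\Theta}})}{\sqrt S}-\tau_L B_{\pmb{\Theta}}\frac{\mathrm{Lip}_{\phi_1}\alpha+\mathrm{Lip}_{\phi_2}\beta}{S}C^{l}_{\mathcal{S}}.
\]
For the base case $r_0$, the contraction through $\psi$ followed by Massart's lemma on the linear class $\{\mathrm{U}_{i,:}\mathrm{d}+\mathrm{a}_i\}$ gives
\[
r_0\le B_{\pmb{\Theta}}\big(\mathrm{Lip}_\psi B_{\rm in}\sqrt{2\log(2n_{\rm d})}+1\big)/\sqrt S.
\]

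\textbf{Step 3: unroll the recursion.} A discrete Gr\"onwall inequality with $(1+\tau_La)^L\le e^{Ta}$ bounds the positive terms by $M/\sqrt S$, which explains the factor $\exp(2T\mathrm{Lip}_\psi B_{\pmb{\Theta}}^2)$ and the term $T(1+2\mathrm{Lip}_\psi B_{\pmb{\Theta}})$. The negative terms carry amplification factors $\prod(1+\tau_La)\ge1$. To get a valid upper bound, I would lower-bound those factors by $1$ and absorb the exponential into a rescaled $C^l_{\mathcal{S}}$; the stated factor $\exp(T\mathrm{Lip}_\psi B_{\pmb{\Theta}}^2)$ suggests that the authors instead keep one exponential factor and renormalize $C^l_{\mathcal{S}}$. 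The admissible range
\[
C^l_{\mathcal{S}}\le\min\Big\{S,\ \frac{\mathrm{Lip}_\psi S}{\mathrm{Lip}_{\phi_1}\alpha+\mathrm{Lip}_{\phi_2}\beta}\,\mathscr{R}(\mathcal{G}^l)\Big\}
\]
comes from the Proposition. Inserting the bound $\mathscr{R}(\mathcal{G}^l)\lesssim(1+2\mathrm{Lip}_\psi B_{\pmb{\Theta}})/(2\sqrt S)$, up to normalization, gives the stated cap $\sqrt S(1+2\mathrm{Lip}_\psi B_{\pmb{\Theta}})/(2(\mathrm{Lip}_{\phi_1}\alpha+\mathrm{Lip}_{\phi_2}\beta))$. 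If needed, one can simply redefine $C^l_{\mathcal{S}}$ as the minimum of the Proposition's constant and this cap, which only weakens the negative term.

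\textbf{Main obstacle.} The delicate part is Step 2: carrying the sign-sensitive negative correction through the $\mathrm{W}$-weighted sum and the residual recursion. Absolute-value and sup manipulations can destroy a negative term, and Proposition~\ref{proposition: Rademacher complexity property of ac function} holds without absolute values. I would therefore work with Rademacher complexity without absolute values, use symmetry of the parameter set to handle the signs of $\mathrm{W}$, and make sure every constant depends only on $T$, never on $L$.
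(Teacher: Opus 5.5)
Your skeleton (uniform deviation, Maurer's vector contraction, a layerwise recursion driven by Proposition~\ref{proposition: Rademacher complexity property of ac function}, discrete Gr\"onwall) is the same as the paper's. However, the recursion in your Step~2, as written, does not yield the stated constant $M$.

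Your bound $\mathscr{R}_{\mathcal{S}}(\mathcal{G}^l)\le B_{\pmb{\Theta}}\sum_{k}\mathscr{R}_{\mathcal{S}}(\mathcal{F}^l_k)+B_{\pmb{\Theta}}/\sqrt{S}$ puts a factor $n$ into the coefficient of $r_l$. Gr\"onwall then gives an exponential in $nT\,\mathrm{Lip}_{\psi}B_{\pmb{\Theta}}^2$, not the $n$-free $\exp(2T\,\mathrm{Lip}_{\psi}B_{\pmb{\Theta}}^2)$ in $M$. Your unspecified constant $c$ hides exactly this point.

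Passing to $r_l=\max_i\mathscr{R}_{\mathcal{S}}(\mathcal{F}^l_i)$ does not repair it. The supremum over $\|\mathrm{V}^l_{j,:}\|_1\le B_{\pmb{\Theta}}$ produces
\[
\frac{B_{\pmb{\Theta}}}{S}\,\mathbb{E}_{\pmb{\varepsilon}}\sup_{\mathrm{x}^l\in\mathcal{F}^l}\max_{1\le k\le n}\Big|\sum_{s=1}^S\varepsilon_{(s)}\mathrm{x}^l_k(\mathrm{d}_{(s)})\Big|.
\]
This is an expectation of a maximum over coordinates, i.e.\ the complexity of $\bigcup_k\mathcal{F}^l_k$. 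In general it is not bounded by $\max_k$ of the individual complexities, only by their sum.

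The missing ingredient is the paper's Lemma~\ref{lemma: equivalence hypothesis set with scalar function}. Take a transposition matrix $\mathrm{P}=\mathrm{P}_{(i_1i_2)}$ and replace $(\mathrm{U},\mathrm{a},\mathrm{V}^l,\mathrm{W}^l,\mathrm{c}^l)$ by $(\mathrm{P}\mathrm{U},\mathrm{P}\mathrm{a},\mathrm{V}^l\mathrm{P},\mathrm{P}\mathrm{W}^l,\mathrm{P}\mathrm{c}^l)$. The new parameter stays in $\Omega_{{\Theta}^{\rm total}_L}$, and because $\psi$ acts entrywise it swaps output coordinates $i_1,i_2$ at every layer. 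Consequently:
\begin{itemize}
\item all $\mathcal{F}^l_i$ coincide, so the union above is just $\mathcal{F}^l_1$;
\item the linear layer is controlled by $B_{\pmb{\Theta}}$ times the complexity of that single class, and the recursion closes with coefficient $1+2\tau_L\mathrm{Lip}_{\psi}B_{\pmb{\Theta}}^2$;
\item in Step~1 you get $\sum_k\mathscr{R}_{\mathcal{S}}(\mathcal{F}^L_k)=n\,\mathscr{R}_{\mathcal{S}}(\mathcal{F}^L_1)$.
\end{itemize}
Alternatively, you could make your ``$\ell^\infty$-weighted version'' precise by running the recursion directly on the displayed union quantity. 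That also closes without $n$, at the cost of a factor $2$ in the base case from the absolute value.

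A related point in the same step: $\psi\circ\mathcal{G}^l$ is not symmetric (ReLU outputs are nonnegative), so the supremum over $\mathrm{W}^l_{i,:}$ costs $2B_{\pmb{\Theta}}$, not $B_{\pmb{\Theta}}$. You remove the absolute value by using that the zero function lies in $\psi\circ\mathcal{G}^l$, since $\psi(0)=0$. Symmetry of the parameter set alone does not suffice. This factor is the source of $1+2\mathrm{Lip}_{\psi}B_{\pmb{\Theta}}$ and $\exp(2T\,\cdot)$.

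Your Step~3 is a genuinely different and valid treatment of the negative term. Unrolling $r_{l+1}\le\lambda r_l+g_l$ with $\lambda=1+\tau_La>0$ is legitimate for forcing of either sign. So you may bound the amplification of the positive part above by $e^{Ta}$ and that of the negative part below by $1$. Since the $C^l_{\mathcal{S}}$ are only asserted to exist, rescaling them by $e^{-T\mathrm{Lip}_{\psi}B_{\pmb{\Theta}}^2}$ and truncating at the cap only weakens the bound.

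The paper instead caps $C^{l+1}_{\mathcal{S}}$ at $\sqrt{S}(1+2\mathrm{Lip}_{\psi}B_{\pmb{\Theta}})/(2(\mathrm{Lip}_{\phi_1}\alpha+\mathrm{Lip}_{\phi_2}\beta))$ precisely so that the whole forcing term stays nonnegative. It can then apply $e^{Ta}$ to that term as a unit, and the negative term inherits $\exp(2T\mathrm{Lip}_{\psi}B_{\pmb{\Theta}}^2)$, which is stronger than the $\exp(T\mathrm{Lip}_{\psi}B_{\pmb{\Theta}}^2)$ in the statement.

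So the cap is the nonnegativity threshold of the forcing term, not a consequence of your heuristic $\mathscr{R}_{\mathcal{S}}(\mathcal{G}^l)\lesssim(1+2\mathrm{Lip}_{\psi}B_{\pmb{\Theta}})/(2\sqrt S)$. That heuristic is unjustified, since it ignores the dependence on $B_{\rm in}$ and $T$; your fallback of truncating by hand is all you need. Your route is simpler, while the paper's yields a stronger negative term.
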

\begin{proof}
    The proof is given in Section~\ref{subsec: Proof for Theorem-1}.
\end{proof}

\begin{remark}
{\textbf{The bound in \eqref{equation: uniform generalization error bound for discrete ResNet} consists of a leading positive term that is independent of the network depth $L$, together with a non-positive depth-dependent structural term of the form
$$
 -2 \sqrt{2} n  B_{\kappa}B_{\pmb{\Theta}} \frac{( \mathrm{Lip}_{\phi_1}\alpha\! +\!  \mathrm{Lip}_{\phi_2}\beta)\exp(T  \mathrm{Lip}_{\psi} B_{\pmb{\Theta}}^2)}{S} \! \tau_L \! \sum_{l=1}^L {C}^{l}_{\mathcal{S}}.
$$
Although this structural term appears with a negative sign in
\eqref{equation: uniform generalization error bound for discrete ResNet}, the resulting right-hand side remains non-negative.
This is guaranteed by the construction of the constants
${C}_{\mathcal{S}}^{l}$, $1 \le l \le L$, as shown in the proof of
Theorem~\ref{theorem: Uniform generalization error bound for discrete-time ResNet}. If we further discard the minus term, one immediately obtains a depth-uniform upper bound.}}
\end{remark}

\begin{remark}
{The non-positive structural term in the generalization bound reflects a structural effect induced by the activation function. This term depends on several activation-related constants, including \(\mathrm{Lip}_{\phi_1}\), \(\mathrm{Lip}_{\phi_2}\), \(\alpha\), and \(\beta\).
Note that the associated data-dependent quantities $C_{\mathcal{S}}^l$ are introduced in an existential manner. Therefore, it is difficult to characterize explicitly for realistic network architectures and data distributions. Even for simplified settings, their precise values can be nontrivial to compute (see Example~\ref{proposition: example of RC}). \textbf{As a result, it is generally difficult to quantitatively compare the tightness of different generalization bounds for the same model, or to derive explicit design rules for activation parameters.}
Nevertheless, this negative structural term theoretically reveals the potential role of activation function structure in improving generalization. From a practical perspective, this suggests that learning activation parameters (such as $\alpha$ and 
$\beta$) may enable the network to discover activation configurations that yield improved generalization performance. We will further explore this phenomenon empirically in the experimental section.
}
\end{remark}

\begin{remark}
The bound in \eqref{equation: uniform generalization error bound for discrete ResNet} is independent of the network width and hidden dimensions, which is consistent with practical applications and aligns with \cite{e2020rademacher}.
Unlike \cite{e2020rademacher}, where the bound exhibits an explicit dependence on the network depth, our result can be further controlled by an explicit depth-independent upper bound (see Remark~3.6). Moreover, it remains unclear whether the path-norm framework in \cite{e2020rademacher} extends to the continuous-time ResNet arising in the deep-layer limit. In contrast, our analysis admits a natural asymptotic interpretation with respect to depth, suggesting that discretizations of ODE-based architectures yield consistent discrete networks and lead to generalization bounds for continuous-time ResNets, as developed below.
The bound also attains the optimal \(O(1/\sqrt{S})\) rate established in prior works on discrete-time ResNets \cite{li2018tighter, he2020resnet, e2020rademacher}, which is sharper than the \(O(1/\sqrt[4]{S})\) bound obtained in \cite{marion2023generalization} in the discrete-time setting.
The exponential factor depends only on the terminal time $T$, the Lipschitz constant of the activation function, and the parameter norm bound. Such exponential-in-time dependence is standard in dynamical system and neural ODE analyses, typically arising from
Gr{\"o}nwall-type estimates
\cite{marion2023generalization,bleistein2023generalization,hanson2024rademacher}. 
These findings highlight a novel interplay between continuous-time neural networks, architectural stability, and generalization behavior, setting the stage for the deep-layer limit results developed in the next theorem.
\end{remark}


\begin{theorem}
\label{theorem: Uniform generalization error bound for continuous-time ResNet} 
(Generalization error for continuous-time ResNets)
Assume assumptions (${A}_1$)-(${A}_4$) hold. Let the activation function $\psi \in \mathscr{A}(\mathbb{R})$ with $\phi_1, \phi_2 \in \Gamma(\mathbb{R})$ and $\alpha, \beta>0$. {Then there is a constant $C_{\mathcal{S}}$ dependent on ${\mathcal{S}}$,} $0 \leq {C}_{\mathcal{S}} \leq \min \{ \sqrt{S} \frac{1 +  2\mathrm{Lip}_{\psi} B_{\pmb{\Theta}}}{2 (\mathrm{Lip}_{\phi_1}\alpha +  \mathrm{Lip}_{\phi_2}\beta)}, S \}$ such that for any $\delta\in(0,1)$, with probability at least $1-\delta$, every
$\bold{x}(T;\cdot; \pmb{\Theta}^{\rm total}) \in {\mathcal{F}}^{T}$ satisfies
    \begin{equation}
        \begin{aligned}
        GE_{\mathcal{S}}(\bold{x}(T;\cdot; \pmb{\Theta}^{\rm total})) 
         \leq & 2 \sqrt{2} n  B_{\kappa}B_{\pmb{\Theta}} \frac{M(T,  \mathrm{Lip}_{\psi}, n_{\mathrm{d}}, B_{\rm in}, B_{\pmb{\Theta}})}{\sqrt{S}}    + 4B_{\ell}\sqrt{\frac{2\log(4/\delta)}{S}}, \\
         & -   \frac{( \mathrm{Lip}_{\phi_1}\alpha +  \mathrm{Lip}_{\phi_2}\beta)\exp(T  \mathrm{Lip}_{\psi} B_{\pmb{\Theta}}^2) T}{S} C_{\mathcal{S}},  
        \end{aligned}
        \label{equation: generalization error bound for continuous-time ResNet networks-1}
    \end{equation}
where $M(T,  \mathrm{Lip}_{\psi}, n_{\mathrm{d}}, B_{\rm in}, B_{\pmb{\Theta}})=\Big(\mathrm{Lip}_{\psi}B_{\rm in}\sqrt{2\log(2n_{\mathrm{d}})} + 1 + T (1 + 2 \mathrm{Lip}_{\psi} B_{\pmb{\Theta}}) \Big)\exp \big(2T  \mathrm{Lip}_{\psi} B_{\pmb{\Theta}}^2 \big)$, $B_{\kappa}$, $B_{\ell}$ are constants dependent on $T, B_{\rm in}, B_{\pmb{\Theta}},  \mathrm{Lip}_{\psi}$ {and independent of $L$}.
\end{theorem}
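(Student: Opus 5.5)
The plan is to prove Theorem~\ref{theorem: Uniform generalization error bound for continuous-time ResNet} by a deep-layer limit argument from Theorem~\ref{theorem: Uniform generalization error bound for discrete-time ResNet}, not by a new Rademacher analysis in continuous time.

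\textbf{Step 1: discretize the parameters.} Fix $\pmb{\Theta}^{\rm total}=(\Theta^{\rm pre},\pmb{\Theta})\in\Omega_{\pmb{\Theta}^{\rm total}}$. For each $L$ build a discrete parameter $\Theta^{\rm total}_L$ by sampling,
\[
\Theta^l_L:=\pmb{\Theta}(l\tau_L), \qquad \tau_L=T/L .
\]
Since $\|\pmb{\Theta}\|_{\mathcal{C}}\le B_{\pmb{\Theta}}$, we get $\|\Theta^{\rm total}_L\|_\infty\le B_{\pmb{\Theta}}$, so $\Theta^{\rm total}_L\in\Omega_{\Theta^{\rm total}_L}$ and Theorem~\ref{theorem: Uniform generalization error bound for discrete-time ResNet} applies to $\mathrm{x}^L(\cdot;\Theta^{\rm total}_L)$.

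\textbf{Step 2: uniform convergence of the discrete flow.} The next step is to show
\[
\sup_{\|\mathrm{d}\|_2\le B_{\rm in}}\big\|\mathrm{x}^L(\mathrm{d};\Theta^{\rm total}_L)-\bold{x}(T;\mathrm{d};\pmb{\Theta}^{\rm total})\big\|_2\to0 \quad\text{as } L\to\infty .
\]
This is a forward-Euler consistency estimate with time-dependent coefficients. The vector field $F(t,\mathrm{x})=\bold{W}(t)\psi\circ(\bold{V}(t)\mathrm{x}+\bold{b}(t))+\bold{c}(t)$ is Lipschitz in $\mathrm{x}$ with constant about $\mathrm{Lip}_\psi B_{\pmb{\Theta}}^2$. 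By Sobolev embedding, the $\mathcal{H}^1$ bound in (A2) makes $\pmb{\Theta}$ H\"older-$1/2$ continuous in $t$. The states stay in a fixed bounded set by a Gr\"onwall estimate. Combining these, the local truncation error summed over the $L$ steps is $O(\tau_L^{1/2})$, and a discrete Gr\"onwall inequality gives a global error $O(\tau_L^{1/2})e^{T\mathrm{Lip}_\psi B_{\pmb{\Theta}}^2}$. Note that $\psi$ is only Lipschitz, not $C^1$, so the argument must use Lipschitz bounds only and no Taylor expansions of $\psi$.

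\textbf{Step 3: pass to the limit in the gap.} By (A1), (A4) and the bounded states, $\ell$ is Lipschitz on the relevant compact set with constant $B_\kappa$. Hence
\[
\big|GE_{\mathcal{S}}(\mathrm{x}^L)-GE_{\mathcal{S}}(\bold{x}(T))\big|\le 2B_\kappa\sup_{\mathrm{d}}\big\|\mathrm{x}^L-\bold{x}(T)\big\|_2\to0,
\]
since both the expected risk and the empirical risk differ by at most $B_\kappa$ times the uniform state error. On the right-hand side of \eqref{equation: uniform generalization error bound for discrete ResNet}, the first two terms do not depend on $L$. For the third term, the factor $\tau_L\sum_{l=1}^L C^l_{\mathcal{S}}$ is bounded by $T\cdot\min\{\sqrt{S}\,\frac{1+2\mathrm{Lip}_\psi B_{\pmb{\Theta}}}{2(\mathrm{Lip}_{\phi_1}\alpha+\mathrm{Lip}_{\phi_2}\beta)},S\}$. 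So we can extract a subsequence with $\tau_L\sum_l C^l_{\mathcal{S}}\to T\,C_{\mathcal{S}}$ for some $C_{\mathcal{S}}$ satisfying the stated bounds; concretely, $C_{\mathcal{S}}$ is the limit of the average $\frac1L\sum_l C^l_{\mathcal{S}}$. The constant $2\sqrt2 nB_\kappa B_{\pmb{\Theta}}$ in front of that term should be absorbed into the definition of $C_{\mathcal{S}}$, after rescaling the admissible range, or else kept as it is.

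\textbf{Step 4: handle the probability event (main obstacle).} Theorem~\ref{theorem: Uniform generalization error bound for discrete-time ResNet} gives a good event of probability $\ge1-\delta$ for each $L$, and these events could differ with $L$. Inspecting its proof, the event comes from McDiarmid's inequality applied to the uniform deviation over the loss class. That argument works with the class $\ell\circ\mathcal{F}$ for any class bounded by $B_\ell$, and the deviation bounds do not depend on $L$.

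Therefore I would rerun the concentration step once, directly on the class $\ell\circ\mathcal{F}^T$. This produces a single event of probability $\ge1-\delta$ on which
\[
GE_{\mathcal{S}}(\bold{x}(T))\le 2\,\mathscr{R}_{\mathcal{S}}(\ell\circ\mathcal{F}^T)+4B_\ell\sqrt{\tfrac{2\log(4/\delta)}{S}} .
\]
It then remains to bound $\mathscr{R}_{\mathcal{S}}(\ell\circ\mathcal{F}^T)$, which is deterministic given the sample. Each element of $\mathcal{F}^T$ is a uniform limit of elements of $\mathcal{F}^L$ on the sample, so $\mathscr{R}_{\mathcal{S}}(\ell\circ\mathcal{F}^T)\le\liminf_L\mathscr{R}_{\mathcal{S}}(\ell\circ\mathcal{F}^L)$. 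The Rademacher estimate from the discrete proof, which comes from Proposition~\ref{proposition: Rademacher complexity property of ac function} and recursion through the layers, then passes to the limit as in Step 3 and gives \eqref{equation: generalization error bound for continuous-time ResNet networks-1}. This swap between the limit and the supremum, together with the uniform-in-parameter Euler estimate in Step 2, is where I expect the main technical care to be needed.
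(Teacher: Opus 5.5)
Your proposal is correct, and from Step 4 on it follows the paper's route. You apply the concentration bound once to $\ell\circ\mathcal{F}^{T}$, compare the continuous Rademacher complexity with the discrete one through the sampled parameters $(\Theta^{\rm pre},\boldsymbol{\mathcal{T}}_L(\pmb{\Theta}))$ and the uniform $O(\tau_L^{1/2})$ Euler estimate, and pass to the limit in the discrete Rademacher bound along a subsequence. The only difference is that the paper applies the vector contraction before the comparison, bounding $\mathscr{R}_{\mathcal{S}}(\mathcal{F}^T_k)\le \mathscr{R}_{\mathcal{S}}(\mathcal{F}^L_k)+c_2\sqrt{\tau_L}$ rather than comparing at the level of $\ell\circ\mathcal{F}$, and the leftover factor $2\sqrt{2}nB_{\kappa}B_{\pmb{\Theta}}$ in the negative term is absorbed simply by replacing $C_{\mathcal{S}}$ with $\min\{1,2\sqrt{2}nB_{\kappa}B_{\pmb{\Theta}}\}\,C_{\mathcal{S}}$, which stays in the stated range without any rescaling.
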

\begin{proof}
    The proof is given in Section~\ref{subsec: Proof for Theorem-2}.
\end{proof}

\begin{remark}
	The bound in \eqref{equation: generalization error bound for continuous-time ResNet networks-1} admits the same structural decomposition as its discrete-time counterpart: it consists of a leading term of order $O(1/\sqrt{S})$, add a data-dependent but non-positive structure term. In particular, the bound remains well-defined and non-negative.
	This generalization error for continuous-time ResNets is uniformly controlled over the time horizon and does not dependent on layer number $L$.
	As a special case, when the learnable parameters are constant in time, our bound directly yields an $O(1/\sqrt{S})$ generalization estimate for neural ODEs~\cite{chen2018neural}. Combined with the discrete-time result in \eqref{equation: uniform generalization error bound for discrete ResNet}, this analysis shows that the generalization performance of ResNets remains uniformly bounded as the depth tends to infinity, providing theoretical support for the asymptotic consistency of deep residual architectures.
\end{remark}

\begin{remark}
		For comparison, Marion~\cite{marion2023generalization} established a generalization bound for a related continuous-time model under additional regularity assumptions on the time dependence of the parameters, obtaining a rate of order $O(S^{-1/4})$. Closely related bounds of the same order were later derived for neural controlled differential equations using Lipschitz-based complexity arguments~\cite{bleistein2023generalization}. By contrast, our result is obtained under more general parameter assumptions and achieves an $O(1/\sqrt{S})$ rate.
		More recently, Hanson and Raginsky~\cite{hanson2024rademacher} derived an $O(1/\sqrt{S})$ generalization bound for neural ODEs via a Chen-Fliess series representation. While their analysis provides sharp sample complexity guarantees for continuous-depth models, it does not address the relationship between discrete-time ResNets and their continuous-time limits. Our result, instead, yields consistent $O(1/\sqrt{S})$ bounds for both discrete- and continuous-time settings within a unified framework.
\end{remark}

\section{Proofs}
\label{Sec: 4}
In this section, we present detailed proofs for the main results. We start with the proof for Proposition~\ref{proposition: Rademacher complexity property of ac function}.

\subsection{Proof of Proposition~\ref{proposition: Rademacher complexity property of ac function}}
\label{sec: Proof for Proposition}

\begin{proof}
(Proof of Proposition~\ref{proposition: Rademacher complexity property of ac function})
\label{sec: proof of lemma Rademacher complexity property of ac function}
    The proof follows the chaining method framework \cite{vershynin2018high, shultzman2023generalization} and is divided into three steps. The key idea is to carefully track how the piecewise structure of the activation function interacts with the Rademacher variables, allowing us to extract a negative correction term beyond the standard Lipschitz-based bound.
 
  Step 1: We use the chaining method to decouple $\varepsilon_{(S)}$. Let $\pmb{\varepsilon}_s:= \{\varepsilon_{(1)}, \ldots, \varepsilon_{(s)} \}$ for $1 \leq s \leq S$, and $R :=  \mathbb{E}_{\pmb{\varepsilon}_S} \sup_{{g} \in \mathcal{G}}\Big[ \sum_{s=1}^{S} \varepsilon_{(s)} \psi (g(\mathrm{z}_{(s)}))\Big]$. Since $\varepsilon_{(s)}$ $(1 \leq s \leq S)$ are independently and uniformly distributed over $\{-1, 1\}$, we have
	\begin{equation}
		\begin{aligned}
			R =	& \frac{1}{2} \mathbb{E}_{\pmb{\varepsilon}_{S-1}} \sup_{{g} \in \mathcal{G}} \Big[ \psi(g(\mathrm{z}_{(S)})) +\sum_{s=1}^{S-1} \varepsilon_{(s)} \psi(g(\mathrm{z}_{(s)}))\Big] \\
			& + \frac{1}{2}	 \mathbb{E}_{\pmb{\varepsilon}_{S-1}} \sup_{{g} \in \mathcal{G}} \Big[ -\psi(g(\mathrm{z}_{(S)})) +\sum_{s=1}^{S-1} \varepsilon_{(s)} \psi(g(\mathrm{z}_{(s)}))\Big] \\
			= &  \frac{1}{2} \mathbb{E}_{\pmb{\varepsilon}_{S-1}} \sup_{g, \hat{g} \in {\mathcal{G}}} \Big\{ \big[\psi(g(\mathrm{z}_{(S)})) - \psi(\hat{g}(\mathrm{z}_{(S)}))\big]  +\sum_{s=1}^{S-1} \varepsilon_{(s)} \big[\psi(g(\mathrm{z}_{(s)})) + \psi(\hat{g}(\mathrm{z}_{(s)}))\big]\Big\}.
			\label{equation-proof: property of Ac-1}
		\end{aligned}
	\end{equation}
 Let $u_{\pmb{\varepsilon}_{s}}(g, \hat{g}) := \sum_{i=1}^{s} \varepsilon_{(i)} \big[\psi(g(\mathrm{z}_{(i)})) + \psi(\hat{g}(\mathrm{z}_{(i)}))\big]$ for $s \ge 1$ and $u_{0}(g, \hat{g}):=0$. We observe that $u_{\pmb{\varepsilon}_{s}}(g, \hat{g})$ is symmetric, i.e., $u_{\pmb{\varepsilon}_{s}}(g, \hat{g}) = u_{\pmb{\varepsilon}_{s}}(\hat{g} ,g)$.
	
 Step 2:	
 We estimate the first term in the right-hand side of Eq.~\eqref{equation-proof: property of Ac-1}. Because the activation functions in $\mathscr{A}(\mathbb{R})$ have a piecewise structure in the regions $(-\infty, -\beta)$, $[-\beta, \alpha]$, and $(\alpha, +\infty)$, we divide $\mathcal{G}$ into three categories
	\begin{equation}
 \left\{
		\begin{aligned}
			{\mathcal{G}}^{-1} & = \{ {g} \in \mathcal{G} : \ g(\mathrm{z}_{(S)}) < - \beta \}, \\
			{\mathcal{G}}^{0} & = \{ {g} \in \mathcal{G} : \ - \beta \leq g(\mathrm{z}_{(S)}) \leq \alpha \}, \\
			{\mathcal{G}}^{1} & = \{ {g} \in \mathcal{G} : \ g(\mathrm{z}_{(S)}) > \alpha \}.
		\end{aligned}
  \right.
		\nonumber
	\end{equation}
Define $\gamma_{a,b} = \sup_{{g} \in \mathcal{G}^a, \hat{g} \in \mathcal{G}^b} \Big\{ \big[\psi(g(\mathrm{z}_{(S)})) - \psi(\hat{g}(\mathrm{z}_{(S)}))\big] + u_{\pmb{\varepsilon}_{S-1}}(g, \hat{g}) \Big\}$, where $a,b \in \{-1, 0, 1\}$. Next, we estimate $\gamma_{a,b}$.

\begin{itemize}
	\item[(i)] 
    For $(g, \hat{g}) \in \mathcal{G}^a \times \mathcal{G}^b$ with $a = b$ (i.e., $(a,b) = (-1,-1)$, $(a,b) = (0,0)$, or $(a,b) = (1,1)$), we have 
	$ \psi(g(\mathrm{z}_{(S)})) - \psi(\hat{g}(\mathrm{z}_{(S)})) \leq  \mathrm{Lip}_{\psi} \cdot | g(\mathrm{z}_{(S)}) - \hat{g}(\mathrm{z}_{(S)}) |$,  since $\psi$ is $ \mathrm{Lip}_{\psi}$-Lipschitz continuous. Therefore, 
	$$
	\begin{aligned}
		\gamma_{a,b} &\leq \sup_{{g} \in \mathcal{G}^a, \hat{g} \in \mathcal{G}^b} \Big\{  \mathrm{Lip}_{\psi} | g(\mathrm{z}_{(S)}) - \hat{g}(\mathrm{z}_{(S)}) | + u_{\pmb{\varepsilon}_{S-1}}(g, \hat{g}) \Big\} \\
			& = \sup_{{g} \in \mathcal{G}^a, \hat{g} \in \mathcal{G}^b} \Big\{  \mathrm{Lip}_{\psi} ( g(\mathrm{z}_{(S)}) - \hat{g}(\mathrm{z}_{(S)}) ) + u_{\pmb{\varepsilon}_{S-1}}(g, \hat{g}) \Big\},
	\end{aligned}
	$$
    where the last equality is due to the symmetric of $u_{\pmb{\varepsilon}_{(S-1)}}$.
    \item[(ii)] For $(g, \hat{g}) \in \mathcal{G}^a \times \mathcal{G}^b$ with $a - b = 1$ (i.e., $(a,b) = (0,-1)$ or $(a,b) = (1,0)$), we obtain
	$$ \psi(g(\mathrm{z}_{(S)})) - \psi(\hat{g}(\mathrm{z}_{(S)})) \leq  \mathrm{Lip}_{\psi} | g(\mathrm{z}_{(S)}) - \hat{g}(\mathrm{z}_{(S)}) | =  \mathrm{Lip}_{\psi} \big(g(\mathrm{z}_{(S)}) - \hat{g}(\mathrm{z}_{(S)}) \big). $$ 
    Therefore,
    $$
    \begin{aligned}
	\gamma_{a,b} & \leq \sup_{{g} \in \mathcal{G}^a, \hat{g} \in \mathcal{G}^b} \Big\{  \mathrm{Lip}_{\psi} \big(g(\mathrm{z}_{(S)}) - \hat{g}(\mathrm{z}_{(S)}) \big) + u_{\pmb{\varepsilon}_{S-1}}(g, \hat{g}) \Big\}.
    \end{aligned}
    $$
    \item[(iii)] For $(g, \hat{g}) \in \mathcal{G}^a \times \mathcal{G}^b$ with $a-b=2$ (i.e., $(a,b)=(1,-1)$), {we note that $-\hat{g}(\mathrm{z}_{(S)}) - \beta > 0$, and}
	$$
	\begin{aligned}
	   \psi(g(\mathrm{z}_{(S)})) - \psi(\hat{g}(\mathrm{z}_{(S)})) & = \phi_1(g(\mathrm{z}_{(S)}) - \alpha) + \phi_2(-\hat{g}(\mathrm{z}_{(S)}) - \beta)  \\
		& \leq   \mathrm{Lip}_{\phi_1}(g(\mathrm{z}_{(S)}) - \alpha) +    \mathrm{Lip}_{\phi_2}(- \hat{g}(\mathrm{z}_{(S)}) - \beta) \\
		& \leq  \mathrm{Lip}_{\psi} \big(g(\mathrm{z}_{(S)}) - \hat{g}(\mathrm{z}_{(S)}) \big) - \big( \mathrm{Lip}_{\phi_1}\alpha +  \mathrm{Lip}_{\phi_2}\beta \big),
		\end{aligned}
	$$
    {where the second inequality uses $\mathrm{Lip}_{\phi_1}, \mathrm{Lip}_{\phi_2} \leq \mathrm{Lip}_{\psi}$.}
    Therefore,
	$$
	\begin{aligned}
		\gamma_{a,b} & \leq \sup_{{g} \in \mathcal{G}^a, \hat{g} \in \mathcal{G}^b} \Big\{  \mathrm{Lip}_{\psi} ( g(\mathrm{z}_{(S)}) - \hat{g}(\mathrm{z}_{(S)}) ) + u_{\pmb{\varepsilon}_{S-1}}(g, \hat{g}) -\big( \mathrm{Lip}_{\phi_1}\alpha +  \mathrm{Lip}_{\phi_2}\beta \big)  \Big\}.
	\end{aligned}
	$$
	\item[(iv)] For $(g, \hat{g}) \in \mathcal{G}^a \times \mathcal{G}^b$ with $a-b<0$ (i.e., $(a,b)=(-1, 0)$ or $(a,b)=(0, 1)$ or $(a,b)=(-1, 1)$), we have 
	$$
	\begin{aligned}
			\psi(g(\mathrm{z}_{(S)})) - \psi(\hat{g}(\mathrm{z}_{(S)})) \leq  \mathrm{Lip}_{\psi} | g(\mathrm{z}_{(S)}) - \hat{g}(\mathrm{z}_{(S)}) |  =  \mathrm{Lip}_{\psi} (\hat{g}(\mathrm{z}_{(S)}) - g(\mathrm{z}_{(S)})).
	\end{aligned}
	$$
	Therefore,
	$$
	\begin{aligned}
			\gamma_{a,b} & \leq \sup_{{g} \in \mathcal{G}^a, \hat{g} \in \mathcal{G}^b} \Big\{  \mathrm{Lip}_{\psi} (\hat{g}(\mathrm{z}_{(S)}) - g(\mathrm{z}_{(S)})) + u_{\pmb{\varepsilon}_{S-1}}(g, \hat{g}) \Big\} \\
			& = \sup_{{g} \in \mathcal{G}^a, \hat{g} \in \mathcal{G}^b} \Big\{  \mathrm{Lip}_{\psi} (\hat{g}(\mathrm{z}_{(S)}) - g(\mathrm{z}_{(S)})) + u_{\pmb{\varepsilon}_{S-1}}(\hat{g}, g) \Big\} = \gamma_{b, a}.
	\end{aligned}
		$$
\end{itemize}

Combining the above discussions, we obtain
\begin{equation}
 \begin{aligned}
     &	\max_{a,b \in \{-1, 0, 1\}} \gamma_{a,b} \\
  \leq & \sup_{g, \hat{g} \in \mathcal{G}} \Big\{  \mathrm{Lip}_{\psi} ( g(\mathrm{z}_{(S)}) - \hat{g}(\mathrm{z}_{(S)}) ) + u_{\pmb{\varepsilon}_{S-1}}(g, \hat{g})  - \big( \mathrm{Lip}_{\phi_1}\alpha +  \mathrm{Lip}_{\phi_2}\beta \big) \cdot \bold{1}_{{\mathcal{G}}^{1} \times {\mathcal{G}}^{-1}}((g, \hat{g})) \Big\},
 \end{aligned}
\label{equation-proof: property of Ac-2}
\end{equation}
	where
	\begin{equation}
		\bold{1}_{{\mathcal{G}}^{1} \times {\mathcal{G}}^{-1}}(g, \hat{g}) =
		\left \{
		\begin{aligned}
			& 1, \ (g, \hat{g}) \in {\mathcal{G}}^{1} \times {\mathcal{G}}^{-1}, \\
			& 0, \ (g, \hat{g}) \not \in {\mathcal{G}}^{1} \times {\mathcal{G}}^{-1}.
		\end{aligned}
		\right.
		\nonumber
	\end{equation}
{The indicator term explicitly encodes the frequency with which function pairs fall into the extreme regions of the activation, quantifying how often the negative structural contribution is activated across the sample.}

 Step 3: {We now aggregate the negative contributions accumulated over all Rademacher configurations, yielding a data-dependent constant in Eq.~\eqref{equation: property of Rademacher of activation} that captures the overall strength of the structural correction.} 
 For any $\eta > 0$ and any given $\pmb{\varepsilon}_S \setminus {\varepsilon}_{(S)} \in \{-1,1\}^{S-1}$, there exist functions $g^*_{\pmb{\varepsilon}_S\setminus {\varepsilon}_{(S)}}, \hat{g}^*_{\pmb{\varepsilon}_S\setminus {\varepsilon}_{(S)}} \in \mathcal{G}$  such that
\begin{equation}
\begin{aligned}
    & \sup_{g, \hat{g} \in \mathcal{G}} \Big\{  \mathrm{Lip}_{\psi} ( g(\mathrm{z}_{(S)}) - \hat{g}(\mathrm{z}_{(S)}) ) + u_{\pmb{\varepsilon}_{S-1}}(g, \hat{g})  - \big( \mathrm{Lip}_{\phi_1}\alpha +  \mathrm{Lip}_{\phi_2}\beta \big) \cdot \bold{1}_{{\mathcal{G}}^{1} \times {\mathcal{G}}^{-1}}(g, \hat{g}) \Big\} \\
    \leq &  \mathrm{Lip}_{\psi} \big( g^*_{\pmb{\varepsilon}_S\setminus {\varepsilon}_{(S)}}(\mathrm{z}_{(S)}) - \hat{g}^*_{\pmb{\varepsilon}_S\setminus {\varepsilon}_{(S)}}(\mathrm{z}_{(S)}) \big) + u_{\pmb{\varepsilon}_{S-1}}(g^*_{\pmb{\varepsilon}_S\setminus {\varepsilon}_{(S)}}, \hat{g}^*_{\pmb{\varepsilon}_S\setminus {\varepsilon}_{(S)}}) \\
    &  - \big( \mathrm{Lip}_{\phi_1}\alpha +  \mathrm{Lip}_{\phi_2}\beta \big) \bold{1}_{{\mathcal{G}}^{1} \times {\mathcal{G}}^{-1}}(g^*_{\pmb{\varepsilon}_S\setminus {\varepsilon}_{(S)}}, \hat{g}^*_{\pmb{\varepsilon}_S\setminus {\varepsilon}_{(S)}})  + \eta/S.
\end{aligned}
\label{equation-proof: property of Ac-3}
\end{equation}
By combining Eqs.~\eqref{equation-proof: property of Ac-1}–\eqref{equation-proof: property of Ac-3}, we get
    \begin{align}
        R \leq & \frac{1}{2} \mathbb{E}_{\pmb{\varepsilon}_{S-1}} \sup_{g, \hat{g} \in \mathcal{G}} \Big\{  \mathrm{Lip}_{\psi} ( g(\mathrm{z}_{(S)}) - \hat{g}(\mathrm{z}_{(S)}) ) + u_{\pmb{\varepsilon}_{S-1}}(g, \hat{g}) \Big\} \nonumber \\
        & - \frac{  \mathrm{Lip}_{\phi_1}\alpha +  \mathrm{Lip}_{\phi_2}\beta }{2^{S-1}} \cdot \sum_{\pmb{\varepsilon}_S\setminus {\varepsilon}_{(S)} \in \{-1,1\}^{S-1}} \bold{1}_{{\mathcal{G}}^{1} \times {\mathcal{G}}^{-1}}(g^*_{\pmb{\varepsilon}_S\setminus {\varepsilon}_{(S)}}, \hat{g}^*_{\pmb{\varepsilon}_S\setminus {\varepsilon}_{(S)}}) + \eta/S \nonumber  \\
       = & \mathbb{E}_{\pmb{\varepsilon}_{S}} \sup_{g \in \mathcal{G}} \Big\{  {\varepsilon}_{(S)}  \mathrm{Lip}_{\psi} g(\mathrm{z}_{(S)})  + \sum_{s=1}^{S-1} \varepsilon_{(s)} \psi(g(\mathrm{z}_{(s)})) \Big\} \nonumber  \\
        & - \frac{  \mathrm{Lip}_{\phi_1}\alpha +  \mathrm{Lip}_{\phi_2}\beta }{2^{S-1}} \cdot \sum_{\pmb{\varepsilon}_S \setminus {\varepsilon}_{(S)} \in \{-1,1\}^{S-1}} \bold{1}_{{\mathcal{G}}^{1} \times {\mathcal{G}}^{-1}}(g^*_{\pmb{\varepsilon}_S\setminus {\varepsilon}_{(S)}}, \hat{g}^*_{\pmb{\varepsilon}_S\setminus {\varepsilon}_{(S)}}) + \eta/S. \nonumber 
    \end{align}

It follows by iterating the above process for $S-1$ more steps (i.e., for $s \in \{1, \ldots, S-1\}$) that 
$$
\begin{aligned}
     R \leq &  \mathrm{Lip}_{\psi} \cdot \mathbb{E}_{\pmb{\varepsilon}_{S}} \sup_{{g} \in \mathcal{G}} \Big\{ \sum_{s=1}^{S} \varepsilon_{(s)} g(\mathrm{z}_{(s)}) \Big\}  \\
    & - \frac{  \mathrm{Lip}_{\phi_1}\alpha +  \mathrm{Lip}_{\phi_2}\beta }{2^{S-1}} \cdot \sum_{s=1}^{S} \sum_{\pmb{\varepsilon}_S \setminus {\varepsilon}_{(s)} \in \{-1,1\}^{S-1}} \bold{1}_{{\mathcal{G}}^{1} \times {\mathcal{G}}^{-1}}(g^*_{\pmb{\varepsilon}_S\setminus {\varepsilon}_{(s)}}, \hat{g}^*_{\pmb{\varepsilon}_S\setminus {\varepsilon}_{(s)}}) + \eta,
\end{aligned}
$$
where $g^*_{\pmb{\varepsilon}_S\setminus {\varepsilon}_{(s)}}, \hat{g}^*_{\pmb{\varepsilon}_S\setminus {\varepsilon}_{(s)}} \in \mathcal{G}$.  Hence,
$$
\mathscr{R}_{\mathcal{Z}}(\psi \circ \mathcal{G}) \leq   \mathrm{Lip}_{\psi} \cdot \mathscr{R}_{\mathcal{Z}}(\mathcal{G})- \frac{( \mathrm{Lip}_{\phi_1}\alpha +  \mathrm{Lip}_{\phi_2}\beta)}{S} {C_{\mathcal{Z}}(\eta) } + \eta/S,
$$
where $C_{\mathcal{Z}}(\eta)= \frac{1}{2^{S-1}}\sum_{s=1}^{S} \sum_{\pmb{\varepsilon}_S \setminus {\varepsilon}_{(s)} \in \{-1,1\}^{S-1}} \bold{1}_{{\mathcal{G}}^{1} \times {\mathcal{G}}^{-1}}(g^*_{\pmb{\varepsilon}_S\setminus {\varepsilon}_{(s)}}, \hat{g}^*_{\pmb{\varepsilon}_S\setminus {\varepsilon}_{(s)}})$ {is dependent on ${\mathcal{Z}}$. 
Taking the limit as $\eta \to 0$, we obtain (along a subsequence) that $C_{\mathcal{Z}}(\eta) \to C_{\mathcal{Z}}$ for some $0 \leq C_{\mathcal{Z}} \leq S$.  }
Thus,  
    \begin{equation}
		\mathscr{R}_{\mathcal{Z}}(\psi \circ \mathcal{G}) \leq   \mathrm{Lip}_{\psi} \cdot \mathscr{R}_{\mathcal{Z}}(\mathcal{G})- \frac{ \mathrm{Lip}_{\phi_1}\alpha +  \mathrm{Lip}_{\phi_2}\beta}{S} C_{\mathcal{Z}}.
  \nonumber
	\end{equation}
Since the Rademacher complexity is nonnegative, we get 
$C_{\mathcal{Z}} \leq \frac{ \mathrm{Lip}_{\psi}S}{ \mathrm{Lip}_{\phi_1}\alpha +  \mathrm{Lip}_{\phi_2}\beta} \mathscr{R}_{\mathcal{Z}}(\mathcal{G})$, and thus complete the proof.
\end{proof}

\subsection{Proof of Theorem~\ref{theorem: Uniform generalization error bound for discrete-time ResNet}}
\label{subsec: Proof for Theorem-1}
In this subsection, we give a detailed proof for Theorem~\ref{theorem: Uniform generalization error bound for discrete-time ResNet}. The following lemma provides the boundedness of state variables. 

\begin{lemma}  
Assume assumptions $(A_1)-(A_3)$ hold. 
Given any learnable parameters ${\Theta}^{\rm total}_L \in \Omega_{{\Theta}^{\rm total}_L}, \pmb{\Theta}^{\rm total} \in \Omega_{\pmb{\Theta}^{\rm total}}$ and input $\mathrm{d} \in \mathbb{R}^{n_{\mathrm{d}}}$ such that $\| \mathrm{d} \|_{\infty} \leq B_{\rm in}$. 
Then the state variables $\{\mathrm{x}^{l}(\mathrm{d}; {\Theta}^{\rm total}_L)\}_{l=0}^L$ of discrete-time ResNet (\ref{equation: discrete-time ResNet}) and $\{\bold{x}(t; \mathrm{d}; \pmb{\Theta}^{\rm total}): 0\leq t \leq T\}$ of continuous-time ResNet (\ref{equation: continuous-time ResNet}) exist uniquely and satisfy
\begin{equation}
  \begin{aligned}
     \|\mathrm{x}^{l}(\mathrm{d}; {\Theta}^{\rm total}_L)\|_{\infty} \leq & \Big[ 
      \mathrm{Lip}_{\psi} B_{\pmb{\Theta}} ( B_{\rm in} + 1)\! + \! \tau_L l ( \mathrm{Lip}_{\psi}B_{\pmb{\Theta}}^2 \! + \!B_{\pmb{\Theta}})
     \Big] \exp(\tau_L l  \mathrm{Lip}_{\psi}B_{\pmb{\Theta}}^2) =: B_{\rm out}^l, \\
     \|\bold{x}(t; \mathrm{d}; \pmb{\Theta}^{\rm total}) \|_{\infty} \leq & \Big[ 
      \mathrm{Lip}_{\psi} B_{\pmb{\Theta}} ( B_{\rm in} + 1) + t( \mathrm{Lip}_{\psi}B_{\pmb{\Theta}}^2 + B_{\pmb{\Theta}})
     \Big] \exp(t  \mathrm{Lip}_{\psi}B_{\pmb{\Theta}}^2) =: B_{\rm out}^t, 
  \end{aligned}
 \nonumber
\end{equation}
respectively. Moreover, let $B_{\rm out} = \Big[ 
     \mathrm{Lip}_{\psi} B_{\pmb{\Theta}} ( B_{\rm in} + 1) + T ( \mathrm{Lip}_{\psi}B_{\pmb{\Theta}}^2 + B_{\pmb{\Theta}}) \Big] \exp(T  \mathrm{Lip}_{\psi}B_{\pmb{\Theta}}^2)$. We then have 
 $$\max \{ \max_{0 \leq l \leq L}\|\mathrm{x}^{l}(\mathrm{d}, {\Theta}^{\rm total}_L)\|_{\infty} ,  \max_{0 \leq t \leq T} \|\bold{x}(t; \mathrm{d}; \pmb{\Theta}^{\rm total}) \|_{\infty}    \}\leq  B_{\rm out}.$$
\label{lemma: dis-ResNet-state-bound}
\end{lemma}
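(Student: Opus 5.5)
We prove the lemma by a direct a priori estimate followed by a discrete or continuous Gr\"onwall argument, applied to each model in turn.

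\textbf{Initial state.} Write $\psi(x)=\phi_1(x-\alpha)-\phi_2(-x-\beta)$. Since $\phi_i$ vanishes on $(-\infty,0]$ and $\alpha,\beta\ge 0$, we have $\psi(0)=0$. Hence $|\psi(x)|\le \mathrm{Lip}_\psi|x|$ for all $x$. Applying this componentwise to $\mathrm{x}^0=\psi\circ(\mathrm{U}\mathrm{d}+\mathrm{a})$ gives
\[
\|\mathrm{x}^0\|_\infty\le \mathrm{Lip}_\psi(\|\mathrm{U}\|_\infty\|\mathrm{d}\|_\infty+\|\mathrm{a}\|_\infty)\le \mathrm{Lip}_\psi B_{\pmb{\Theta}}(B_{\rm in}+1).
\]
Here $\|\cdot\|_\infty$ for matrices is read as the induced (max row sum) norm, consistent with the form of the constants in the statement. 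If it is instead the entrywise max, a dimension factor enters, and we would absorb it into $B_{\pmb{\Theta}}$ as the paper's constants implicitly do.

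\textbf{Discrete model.} Existence and uniqueness are trivial, since the recursion is explicit. For the increment, the same bound $|\psi(x)|\le\mathrm{Lip}_\psi|x|$ yields
\[
\|\mathrm{W}^l\psi\circ(\mathrm{V}^l\mathrm{x}^l+\mathrm{b}^l)+\mathrm{c}^l\|_\infty\le B_{\pmb{\Theta}}\mathrm{Lip}_\psi(B_{\pmb{\Theta}}\|\mathrm{x}^l\|_\infty+B_{\pmb{\Theta}})+B_{\pmb{\Theta}}.
\]
Therefore
\[
\|\mathrm{x}^{l+1}\|_\infty\le(1+\tau_L \mathrm{Lip}_\psi B_{\pmb{\Theta}}^2)\|\mathrm{x}^l\|_\infty+\tau_L(\mathrm{Lip}_\psi B_{\pmb{\Theta}}^2+B_{\pmb{\Theta}}).
\]
Iterating gives
\[
\|\mathrm{x}^l\|_\infty\le (1+\tau_L a)^l\|\mathrm{x}^0\|_\infty+\tau_L b\sum_{k<l}(1+\tau_L a)^k,
\]
with $a=\mathrm{Lip}_\psi B_{\pmb{\Theta}}^2$ and $b=\mathrm{Lip}_\psi B_{\pmb{\Theta}}^2+B_{\pmb{\Theta}}$. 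Bounding each $(1+\tau_L a)^k\le e^{\tau_L l a}$ turns the sum into at most $l\,e^{\tau_L l a}$, which gives exactly $B^l_{\rm out}$.

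\textbf{Continuous model.} The right-hand side $f(t,\mathrm{x})=\bold{W}(t)\psi\circ(\bold{V}(t)\mathrm{x}+\bold{b}(t))+\bold{c}(t)$ is continuous in $t$, because the parameters lie in $\mathcal{C}([0,T];\mathcal{E})$. It is globally Lipschitz in $\mathrm{x}$ with constant $\mathrm{Lip}_\psi B_{\pmb{\Theta}}^2$, uniformly in $t$. The Picard--Lindel\"of theorem with a global Lipschitz constant therefore gives a unique solution on all of $[0,T]$. Integrating the ODE and using the same pointwise bound on $f$ gives
\[
\|\bold{x}(t)\|_\infty\le\|\bold{x}(0)\|_\infty+bt+a\int_0^t\|\bold{x}(s)\|_\infty\,ds.
\]
The function $\|\mathrm{x}(0)\|_\infty+bt$ is nondecreasing in $t$, so the integral form of Gr\"onwall's inequality yields $\|\bold{x}(t)\|_\infty\le(\|\bold{x}(0)\|_\infty+bt)e^{at}=B^t_{\rm out}$.

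\textbf{Uniform bound.} Both $B^l_{\rm out}$ and $B^t_{\rm out}$ are increasing in $\tau_L l$ and in $t$, respectively. Since $\tau_L l\le\tau_L L=T$ and $t\le T$, each is dominated by $B_{\rm out}$, which proves the final claim.

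The main obstacle is bookkeeping rather than analysis. One must use the correct matrix norm so that $\|\mathrm{W}\mathrm{y}\|_\infty\le\|\mathrm{W}\|_\infty\|\mathrm{y}\|_\infty$ holds. One must also verify $\psi(0)=0$ from Definition~\ref{definition: activation func}, and this step genuinely relies on $\alpha,\beta\ge0$.
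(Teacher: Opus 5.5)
Your proposal is correct and is the standard argument that the paper leaves to its supplement. You bound $\mathrm{x}^0$ using $\psi(0)=0$ and Lipschitz continuity, then apply the discrete Gr\"onwall recursion $\|\mathrm{x}^{l+1}\|_\infty\le(1+\tau_L a)\|\mathrm{x}^l\|_\infty+\tau_L b$ and its continuous integral analogue, which reproduce exactly the constants $B_{\rm out}^l$, $B_{\rm out}^t$ and $B_{\rm out}$. Your reading of the matrix norm $\|\cdot\|_\infty$ as the induced max-row-sum norm matches how the paper uses it later, e.g.\ the constraint $\|\mathrm{W}^l_{1,:}\|_1\le B_{\pmb{\Theta}}$ in the proof of Theorem~\ref{theorem: Uniform generalization error bound for discrete-time ResNet}.
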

\begin{proof}
The proof is standard and is supplied in supplementary materials.
\end{proof}

The uniform bound $B_{\rm out}$ for the discrete- and continuous-time states implies a uniform bound $B_{\ell}$ and a uniform local Lipschitz coefficient $B_{\kappa}$ for the loss funtion $\ell$ used in both discrete- and continuous-time settings. These constants depend only on $T, B_{\rm in}, B_{\pmb{\Theta}},  \mathrm{Lip}_{\psi}$, and will be used in proving Theorems~\ref{theorem: Uniform generalization error bound for discrete-time ResNet} and \ref{theorem: Uniform generalization error bound for continuous-time ResNet}.
The next lemma establishes an equivalence property of hypothesis sets ${\mathcal{F}}^{L}_{i}$ ($1\leq i \leq n$) under assumptions ($A_1$) ($A_2$), which plays a key role in proving Theorem~\ref{theorem: Uniform generalization error bound for discrete-time ResNet}.
\begin{lemma}
Let assumptions ($A_1$)-($A_2$) hold. 
For any layer $0\le l\le L$, the hypothesis sets ${\mathcal{F}}^{l}_{i}$ $(1\leq i \leq n)$ defined in (\ref{def-equa: hypothesis set scalar function}) satisfy
 ${\mathcal{F}}^{l}_{i_1} = {\mathcal{F}}^{l}_{i_2}, i_1,i_2\in\{1,\dots,n\}.$ 
\label{lemma: equivalence hypothesis set with scalar function}
\end{lemma}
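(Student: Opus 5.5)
The plan is to prove the equality by a coordinate-permutation symmetry of the parameter set. Fix $i_1,i_2\in\{1,\dots,n\}$ and let $\mathrm{P}\in\mathbb{R}^{n\times n}$ be the permutation matrix swapping coordinates $i_1$ and $i_2$. Since $\mathrm{P}$ only reorders entries of vectors and rows or columns of matrices, multiplying by it leaves $\|\cdot\|_\infty$ unchanged, whether that norm is read entrywise or as the induced max-row-sum norm. Given ${\Theta}^{\rm total}_L=(\mathrm{U},\mathrm{a},\{\mathrm{V}^l,\mathrm{W}^l,\mathrm{b}^l,\mathrm{c}^l\}_{l=0}^{L-1})\in\Omega_{{\Theta}^{\rm total}_L}$, I define the transformed parameter
$$\tilde{\Theta}^{\rm total}_L=(\mathrm{P}\mathrm{U},\mathrm{P}\mathrm{a},\{\mathrm{V}^l\mathrm{P}^{\top},\mathrm{P}\mathrm{W}^l,\mathrm{b}^l,\mathrm{P}\mathrm{c}^l\}_{l=0}^{L-1}).$$
Under either reading of the norm, $\|\tilde{\Theta}^{\rm total}_L\|_\infty=\|{\Theta}^{\rm total}_L\|_\infty\le B_{\pmb{\Theta}}$. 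Because $(A_2)$ defines $\Omega_{{\Theta}^{\rm total}_L}$ purely as a norm ball, $\tilde{\Theta}^{\rm total}_L$ lies in it as well.

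Next I show by induction on $l$ that $\mathrm{x}^l(\mathrm{d};\tilde{\Theta}^{\rm total}_L)=\mathrm{P}\,\mathrm{x}^l(\mathrm{d};{\Theta}^{\rm total}_L)$ for every input $\mathrm{d}$.

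For the base case, $\psi$ acts elementwise and therefore commutes with permutations. This gives $\psi\circ(\mathrm{P}\mathrm{U}\mathrm{d}+\mathrm{P}\mathrm{a})=\mathrm{P}\,\psi\circ(\mathrm{U}\mathrm{d}+\mathrm{a})$.

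For the inductive step, assume $\tilde{\mathrm{x}}^l=\mathrm{P}\mathrm{x}^l$. Using $\mathrm{P}^\top\mathrm{P}=\mathrm{I}$, the inner argument is unchanged: $\mathrm{V}^l\mathrm{P}^\top\tilde{\mathrm{x}}^l+\mathrm{b}^l=\mathrm{V}^l\mathrm{x}^l+\mathrm{b}^l$. Hence
$$\tilde{\mathrm{x}}^{l+1}=\mathrm{P}\mathrm{x}^l+\tau_L\big[\mathrm{P}\mathrm{W}^l\psi\circ(\mathrm{V}^l\mathrm{x}^l+\mathrm{b}^l)+\mathrm{P}\mathrm{c}^l\big]=\mathrm{P}\mathrm{x}^{l+1}.$$

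Taking the $i_2$-th component of this identity gives $\mathrm{x}^l_{i_2}(\cdot;\tilde{\Theta}^{\rm total}_L)=\mathrm{x}^l_{i_1}(\cdot;{\Theta}^{\rm total}_L)$. So every element of $\mathcal{F}^l_{i_1}$ belongs to $\mathcal{F}^l_{i_2}$, that is, $\mathcal{F}^l_{i_1}\subseteq\mathcal{F}^l_{i_2}$. Exchanging the roles of $i_1$ and $i_2$ (the same $\mathrm{P}$ works, since it is an involution) gives the reverse inclusion, and hence equality.

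The main point needing care is the invariance of the parameter set. I must check that the norm in (\ref{def: norm}) is invariant under row permutations of $\mathrm{U},\mathrm{W}^l$ and of the vectors, and under column permutations of $\mathrm{V}^l$. This holds for the entrywise max norm. It also holds for the induced $\infty$-norm: permuting rows leaves the set of row sums unchanged, and permuting columns leaves each row sum unchanged. Assumption $(A_1)$ plays no essential role; it only guarantees that inputs lie in the domain where the states are well defined. The same argument applies verbatim to the continuous-time classes $\mathcal{F}^T_i$, using uniqueness of ODE solutions in place of induction.
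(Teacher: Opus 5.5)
Your proposal is correct and is essentially the paper's proof: the paper uses the same transformed parameter $(\mathrm{P}\mathrm{U},\mathrm{P}\mathrm{a},\mathrm{V}^l\mathrm{P},\mathrm{P}\mathrm{W}^l,\mathrm{b}^l,\mathrm{P}\mathrm{c}^l)$, which is yours because $\mathrm{P}^\top=\mathrm{P}$ for a transposition, and the same induction establishing $\mathrm{x}^l(\mathrm{d};\tilde{\Theta}^{\rm total}_L)=\mathrm{P}\,\mathrm{x}^l(\mathrm{d};\Theta^{\rm total}_L)$. Your version is slightly cleaner than the paper's: you transform all layers at once, and you explicitly check that the norm-ball constraint in $(A_2)$ is preserved, whereas the paper proceeds layer by layer and only asserts membership.
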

\begin{proof}
Let $\mathrm{P}_{(ij)}$ be the $n\times n$ permutation matrix swapping coordinates $i$ and $j$, i.e.,
$\mathrm{P}_{(ij)} :=  
\mathrm{I} + (\mathrm{e}_{(i)} - \mathrm{e}_{(j)})(\mathrm{e}_{(j)} - \mathrm{e}_{(i)})^{\top},
$
where $\mathrm{e}_i$ represents the $i$-th standard basis vector in an $n$-dimensional space, with 1 at the $i$-th position and 0 elsewhere, $\mathrm{I} \in \mathbb{R}^{n\times n}$ is the identity matrix. 
Denote ${\Theta}_l := \left({\Theta}^{0}_L, {\Theta}^{1}_L, \ldots, {\Theta}^{l-1}_L \right) \in \mathcal{E}^{l}, {\Theta}^{\rm total}_l := ({\Theta}^{\rm pre}, {\Theta}_l) \in \mathcal{E}^{\rm pre} \times \mathcal{E}^{l}$.
We show by induction on $l$ that, for each choice of parameters $\Theta^{\rm total}_l \in \Omega_{{\Theta}^{\rm total}_{l}}$ and indices $i_1,i_2\in\{1,\dots,n\}$, there is a modified parameter $\widetilde\Theta^{\rm total}_l \in \Omega_{{\Theta}^{\rm total}_{l}}$ such that
$$\mathrm{x}^{l}(\mathrm{d};\tilde{\Theta}^{\rm total}_{l}) = \mathrm{P}_{(i_1i_2)} \mathrm{x}^{l}(\mathrm{d};{\Theta}^{\rm total}_{l}).$$
Since $\mathrm{P}_{(i_1i_2)}^2=\mathrm{I}$, this establishes a bijection between the $i_1$‐th and $i_2$‐th output functions, hence $\mathcal{F}^l_{i_1}=\mathcal{F}^l_{i_2}$.

For the preprocessing layer, if ${\Theta}^{\rm pre} = (\mathrm{U},\mathrm{a}) \in \mathcal{E}^{\rm pre}$ and $i_1, i_2 \in \{1,\ldots, n\}$, let $\tilde{\Theta}^{\rm pre} = (\mathrm{P}_{(i_1i_2)}\mathrm{U}, \mathrm{P}_{(i_1i_2)}\mathrm{a}) \in \mathcal{E}^{\rm pre}$.
Then it is easy to see
$$
\mathrm{x}^{0}(\mathrm{d};\tilde{\Theta}^{\rm pre}) \!=\! \psi \circ (\mathrm{P}_{(i_1i_2)}\mathrm{U}\mathrm{d}+ \mathrm{P}_{(i_1i_2)}\mathrm{a}) \!= \! \mathrm{P}_{(i_1i_2)}\psi \circ (\mathrm{U}\mathrm{d}+\mathrm{a}) \!= \! \mathrm{P}_{(i_1i_2)}\mathrm{x}^{0}(\mathrm{d};{\Theta}^{\rm pre}), \ \forall \mathrm{d} \in \mathbb{R}^{n_\mathrm{d}},
$$
by the element-wise action of $\psi$. 
For $l=0$ and given parameter ${\Theta}^{\rm total}_0 = ((\mathrm{U},\mathrm{a}),(\mathrm{V}^0,\mathrm{W}^0,\mathrm{b}^0,\mathrm{a}^0)) \in {\Theta}^{\rm total}_0$ and $i_1, i_2 \in \{1,\ldots, n\}$, let $\tilde{{\Theta}}_0^{\rm total} = ((\tilde{\mathrm{U}},\tilde{\mathrm{a}}),(\tilde{\mathrm{V}}^0,\tilde{\mathrm{W}}^0,\mathrm{b}^0,\tilde{\mathrm{c}}^0)) \in \Omega_{{\Theta}^{\rm total}_{0}}$, where
$$
\tilde{\mathrm{U}} = \mathrm{P}_{(i_1i_2)}\mathrm{U}, \ \tilde{\mathrm{a}} = \mathrm{P}_{(i_1i_2)}\mathrm{a}, \ \tilde{\mathrm{V}}^0 = \mathrm{V}^0\mathrm{P}_{(i_1i_2)}, \ \tilde{\mathrm{W}}^0 = \mathrm{P}_{(i_1i_2)}\mathrm{W}^0, \ \tilde{\mathrm{c}}^0 = \mathrm{P}_{(i_1i_2)}\mathrm{c}^0.
$$
Then we have 
$$
\mathrm{x}^{0}(\mathrm{d};\tilde{\Theta}^{\rm pre}) = \psi \circ (\mathrm{P}_{(i_1i_2)}\mathrm{U}\mathrm{d}+ \mathrm{P}_{(i_1i_2)}\mathrm{a}) = \mathrm{P}_{(i_1i_2)}\psi \circ (\mathrm{U}\mathrm{d}+\mathrm{a}) = \mathrm{P}_{(i_1i_2)}\mathrm{x}^{0}(\mathrm{d};{\Theta}^{\rm pre})
$$
and 
$$
\begin{aligned}
    \mathrm{x}^{1}_{i_1}(\mathrm{d};\tilde{\Theta}^{\rm total}_0)
  = & \mathrm{x}^{0}_{i_1}(\mathrm{d};\tilde{\Theta}^{\rm pre})+ \tau_L\Big(\tilde{\mathrm{W}}^0_{i_1,:}\psi \circ (\tilde{\mathrm{V}}^0\mathrm{x}^{0}(\mathrm{d};\tilde{\Theta}^{\rm pre})+\mathrm{b}^0)+\tilde{\mathrm{c}}^{0}_{i_1}\Big) \\
    = & \mathrm{x}^{0}_{i_2}(\mathrm{d};{\Theta}^{\rm pre}) + \tau_L\Big({\mathrm{W}}^0_{i_2,:}\psi \circ (\mathrm{V}^0\mathrm{x}^{0}(\mathrm{d};{\Theta}^{\rm pre})+\mathrm{b}^0)+\mathrm{c}^{0}_{i_2}\Big) \\
    = & \mathrm{x}^{1}_{i_2}(\mathrm{d};{\Theta}^{\rm total}_0), \ \forall \mathrm{d} \in \mathbb{R}^{n_\mathrm{d}},
\end{aligned}
$$
owing to the definition of ${\mathcal{F}}^{l}_i$ (for $l=0$) in (\ref{def-equa: hypothesis set scalar function}), $\mathrm{P}_{(i_1i_2)}\mathrm{P}_{(i_1i_2)} = \mathrm{I}$ and the element-wise action of $\psi$. Through a similar procedure as described above, we can derive that $\mathrm{x}^{1}_{i_2}(\mathrm{d};\tilde{\Theta}^{\rm total}_0)
= \mathrm{x}^{1}_{i_1}(\mathrm{d};{\Theta}^{\rm total}_0)$ and $\mathrm{x}^{1}_{i}(\mathrm{d};\tilde{\Theta}^{\rm total}_0)=\mathrm{x}^{1}_{i}(\mathrm{d};{\Theta}^{\rm total}_0), i \neq i_1,i_2$, i.e., $\mathrm{x}^{1}(\mathrm{d};\tilde{\Theta}^{\rm total}_{0}) = \mathrm{P}_{(i_1i_2)} \mathrm{x}^{1}(\mathrm{d};{\Theta}^{\rm total}_{0}).$

Suppose the claim holds for layer $l\leq K$, where $K\ge 2$ and $K \in \mathbb{N}$. 
Given parameter ${\Theta}^{\rm total}_{K+1} = ({\Theta}^{\rm pre}_{K+1}, {\Theta}_{K+1}^0, \ldots,{\Theta}_{K+1}^{K}) \in  \Omega_{{\Theta}^{\rm total}_{K+1}}$ and $i_1, i_2 \in \{1,\ldots, n\}$.
By the inductive hypothesis, we can find parameters
$(\tilde{{\Theta}}^{\rm pre}_{K}, \tilde{{\Theta}}_{K}^0, \ldots,\tilde{{\Theta}}_{K}^{K-1})  \in \Omega_{{\Theta}^{\rm total}_{K}}$ such that for $\tilde{{\Theta}}_{K+1}^{\rm total} = (\tilde{{\Theta}}^{\rm pre}_{K}, \tilde{{\Theta}}_{K}^0, \ldots,\tilde{{\Theta}}_{K}^{K-1}, \tilde{{\Theta}}_{K+1}^{K})$ $=(\tilde{{\Theta}}^{\rm pre}_{K+1}, \tilde{{\Theta}}_{K+1}^0, \ldots, \tilde{{\Theta}}_{K+1}^{K-1}, \tilde{{\Theta}}_{K+1}^{K}) \in \Omega_{{\Theta}^{\rm total}_{K+1}}$, there is
$$
\mathrm{x}^{K}(\mathrm{d};\tilde{\Theta}^{\rm total}_{K+1}) = \mathrm{P}_{(i_1i_2)} \mathrm{x}^{K}(\mathrm{d};{\Theta}^{\rm total}_{K+1}).
$$
Let parameter $\tilde{{\Theta}}_{K+1}^{K} = (\tilde{\mathrm{V}}^{K},\tilde{\mathrm{W}}^{K},\mathrm{b}^{K},\tilde{\mathrm{c}}^{K})$, where
$
\tilde{\mathrm{V}}^{K} = \mathrm{V}^{K}\mathrm{P}_{(i_1i_2)},  \tilde{\mathrm{W}}^{K} = \mathrm{P}_{(i_1i_2)}\mathrm{W}^{K},  \tilde{\mathrm{c}}^{K} = \mathrm{P}_{(i_1i_2)}\mathrm{c}^{K}.
$
We therefore obtain 
$$
\begin{aligned}
    \mathrm{x}^{K+1}_{i_1}(\mathrm{d};\tilde{\Theta}^{\rm total}_{K+1})
     = & \mathrm{x}^{K}_{i_1}(\mathrm{d};\tilde{\Theta}^{\rm total}_{K+1})+ \tau_L\Big(\tilde{\mathrm{W}}^{K}_{i_1,:}\psi \circ (\tilde{\mathrm{V}}^{K}\mathrm{x}^{K}(\mathrm{d};\tilde{\Theta}^{\rm total}_{K+1})+\mathrm{b}^{K})+\tilde{\mathrm{c}}^{K}_{i_1})\Big) \\
    = & \mathrm{x}^{K}_{i_2}(\mathrm{d};{\Theta}^{\rm total}_{K+1}) + \tau_L\Big({\mathrm{W}}^{K}_{i_2,:}\psi \circ (\mathrm{V}^{K}\mathrm{x}^{K}(\mathrm{d};{\Theta}^{\rm total}_{K+1})+\mathrm{b}^{K})+\mathrm{c}^{K}_{i_2})\Big) \\
    = & \mathrm{x}^{K+1}_{i_2}(\mathrm{d};{\Theta}^{\rm total}_{K+1}),
\end{aligned}
$$
due to the definition of ${\mathcal{F}}^{l}_i$ ($l\!=\!K\!+\!1$) in (\ref{def-equa: hypothesis set scalar function}), $\mathrm{P}_{(i_1i_2)}\mathrm{P}_{(i_1i_2)}\! = \! \mathrm{I}$ and the element-wise action of $\psi$. Besides, we can similarly get $\mathrm{x}^{K+1}(\mathrm{d};\tilde{\Theta}^{\rm total}_{K+1}) = \mathrm{P}_{(i_1i_2)} \mathrm{x}^{K+1}(\mathrm{d};{\Theta}^{\rm total}_{K+1})$. These complete the proof.
\end{proof}


We now prove Theorem~\ref{theorem: Uniform generalization error bound for discrete-time ResNet}.
\begin{proof}
({Proof of Theorem~\ref{theorem: Uniform generalization error bound for discrete-time ResNet}})
We divide the proof into three steps. {The proof strategy is to first reduce the generalization gap to a Rademacher complexity estimate,
and then carefully propagate this complexity through the residual layers,
tracking both the standard Lipschitz growth and a layer-wise negative correction induced by the activation structure.}

   Step 1. We first establish the boundedness of the loss function, which allows us to apply the result in \cite[Theorem 26.5]{shalev2014understanding} to derive an upper bound on the generalization gap via Rademacher complexity.
By assumptions $(A_1)-(A_3)$ and Lemma~\ref{lemma: dis-ResNet-state-bound}, 
 the sets of state variable $\mathcal{S}_{\rm state}^L:=\{\mathrm{x}^{L}(\mathrm{d}; {\Theta}^{\rm total}_L): \  {\Theta}^{\rm total}_L \in \Omega_{{\Theta}^{\rm total}_L}, (\mathrm{d},\mathrm{g}) \sim \mathfrak{D}\}, L\ge 1$ are uniformly bounded by a constant $B_{\rm out}$ dependent on $T, B_{\rm in}, B_{\pmb{\Theta}},  \mathrm{Lip}_{\psi}$.
Therefore, the sets $\{{\ell}(\mathrm{x}^{L}(\mathrm{d}; {\Theta}^{\rm total}_L), \mathrm{g}): {\Theta}^{\rm total}_L \in \Omega_{{\Theta}^{\rm total}_L}, (\mathrm{d},\mathrm{g}) \sim \mathfrak{D} \}, L\ge 1$ are bounded by a constant $B_{\ell}$ dependent on $T, B_{\rm in}, B_{\pmb{\Theta}},  \mathrm{Lip}_{\psi}$ due to the continuity of loss function.
Hence, for any $0 < \delta < 1$, with probability $1-\delta$,
    \begin{equation}
		GE_{\mathcal{S}}(\mathrm{x}^{L}(\cdot; {\Theta}^{\rm total}_L)) \leq 2\mathscr{R}_{\mathcal{S}}({\ell}\circ {\mathcal{F}}^{L}) + 4 B_{\ell}\sqrt{\frac{2\log(4/\delta)}{S}}, \ \forall \mathrm{x}^{L}(\cdot; {\Theta}^{\rm total}_L) \in \mathcal{F}^{L},
		\label{equation: generalization error bound for dis ResNet networks}
	\end{equation}
due to \cite[Theorem 26.5]{shalev2014understanding}. Here ${\ell}\circ {\mathcal{F}}^{L} = \{{f}(\mathrm{d}, \mathrm{g}) = {\ell} (\mathrm{x}^{L}(\mathrm{d}; {\Theta}^{\rm total}_L), \mathrm{g}): \ \mathrm{x}^{L}(\cdot; {\Theta}^{\rm total}_L) \in {\mathcal{F}}^{L}\}$.

Step~2. We estimate the complexity $\mathscr{R}_{\mathcal{S}}({\ell}\circ {\mathcal{F}}^{L})$. {Here, we exploit the Lipschitz property of the loss with respect to the network output
to reduce the vector-valued Rademacher complexity to a sum of scalar-valued ones,
thereby decoupling the dependence across output coordinates.
}


By assumption ($A_4$) and the discussion in Step~1, 
the sets $\{ {\kappa}(\mathrm{x}, \tilde{\mathrm{x}}, \mathrm{g}): \mathrm{x}, \tilde{\mathrm{x}} \in \mathcal{S}_{\rm state}^L; (\mathrm{d},\mathrm{g}) \sim \mathfrak{D} \}, L \ge 1$ are uniformly bounded by a constant $B_{\kappa}$ dependent on $T, B_{\rm in}, B_{\pmb{\Theta}},  \mathrm{Lip}_{\psi}$. Hence, $\ell$ is $B_{\kappa}$-Lipschitz for the first argument in $\{(\mathrm{x}^{L}(\mathrm{d}; {\Theta}^{\rm total}_L), \mathrm{g}): {\Theta}^{\rm total}_L \in \Omega_{{\Theta}^{\rm total}_L}, (\mathrm{d},\mathrm{g}) \sim \mathfrak{D}\}$, and we have, 
		\begin{align}
			\mathscr{R}_{\mathcal{S}}({\ell}\circ {\mathcal{F}}^{L}) 
            = &  \mathbb{E}_{\pmb{\varepsilon}} \sup _{ \ell \circ \mathrm{x}^{L}(\cdot; {\Theta}^{\rm total}_L) \in 	\ell \circ {\mathcal{F}}^{L}} \frac{1}{S} \sum_{s=1}^S  \varepsilon_{(s)} \cdot {\ell}(\mathrm{x}^{L}(\mathrm{d}_{(s)}; {\Theta}^{\rm total}_L), \mathrm{g}_{(s)}) \nonumber \\	
            = &  \mathbb{E}_{\pmb{\varepsilon}} \sup _{\mathrm{x}^{L}(\cdot; {\Theta}^{\rm total}_L) \in 	{\mathcal{F}}^{L}} \frac{1}{S} \sum_{s=1}^S  \varepsilon_{(s)} \cdot {\ell}(\mathrm{x}^{L}(\mathrm{d}_{(s)}; {\Theta}^{\rm total}_L), \mathrm{g}_{(s)}) \nonumber \\	
			\leq & \sqrt{2} B_{\kappa}	\mathbb{E}_{\pmb{\epsilon}} \sup _{\substack{\mathrm{x}^{L}_k(\cdot; {\Theta}^{\rm total}_L) \in 	{\mathcal{F}}^{L}_k }} \frac{1}{S} \sum_{s=1}^S \sum_{k=1}^{n}  \epsilon_{(sk)} \mathrm{x}^{L}_k(\mathrm{d}_{(s)}; {\Theta}^{\rm total}_L) \nonumber \\
            \leq & \sqrt{2} B_{\kappa} \sum_{k=1}^{n}	\mathbb{E}_{\pmb{\epsilon}_{k,:}} \sup _{\substack{\mathrm{x}^{L}_k(\cdot; {\Theta}^{\rm total}_L) \in 	{\mathcal{F}}^{L}_k }} \frac{1}{S} \sum_{s=1}^S   \epsilon_{(sk)} \mathrm{x}^{L}_k(\mathrm{d}_{(s)}; {\Theta}^{\rm total}_L) \nonumber \\
            = & \sqrt{2} B_{\kappa} \sum_{k=1}^{n} \mathscr{R}_{\mathcal{S}} ({\mathcal{F}}^{L}_{k}), \label{equation: proof-GE-bound-1}
		\end{align}
where $\pmb{\epsilon}_{k,:}=\{\epsilon_{1k}, \ldots,  \epsilon_{Sk}\}$, the first inequality follows from \cite[Corollary 1]{maurer2016vector}.

Step 3. We estimate the Rademacher complexity 
 $\mathscr{R}_{\mathcal{S}} ({\mathcal{F}}^{L}_{k})$ for $1\leq k \leq n$. {This step is the core of the analysis.}

By Lemma~\ref{lemma: equivalence hypothesis set with scalar function}, it suffices to estimate $\mathscr{R}_{\mathcal{S}} ({\mathcal{F}}^{L}_{1})$.
For simplicity, define
$$
\begin{aligned} 
   \mathrm{z}^{l+1}(\mathrm{d}; {\Theta}^{\rm total}_L) & =  \mathrm{V}^{l}\mathrm{x}^{l}(\mathrm{d}; {\Theta}^{\rm total}_L) + \mathrm{b}^l, 0\leq l \leq L-1.
\end{aligned}
$$
By Lemma~\ref{lemma: equivalence hypothesis set with scalar function} and its proof, we can see each component of $\mathrm{z}^{l+1}(\cdot; {\Theta}^{\rm total}_L)$ (i.e., $\mathrm{z}^{l+1}_k(\cdot; {\Theta}^{\rm total}_L)$, $1\leq k \leq m$) is in the same function class
$$
\begin{aligned}
  \mathcal{Z}^{l+1}:= \Big\{  
 \mathrm{v}^{\top}\mathrm{x}^{l}(\cdot; {\Theta}^{\rm total}_L) + b | \   \mathrm{v} \in \mathbb{R}^{n}, b \in \mathbb{R}, 
 \max\{ \|\mathrm{v}\|_1, |b| \} \! \leq \! B_{\pmb{\Theta}}; \mathrm{x}^{l}(\cdot; {\Theta}^{\rm total}_L) \in \mathcal{F}^l  \Big \}.
\end{aligned}
$$
Therefore, we can derive that
\begin{align}
	 &\mathscr{R}_{\mathcal{S}}({\mathcal{F}}^{l+1}_1) = \frac{1}{S}\mathbb{E}_{\pmb{\varepsilon}} \sup _{\mathrm{x}^{l+1}_1(\cdot; {\Theta}^{\rm total}_L) \in {\mathcal{F}}^{l+1}_{1}} \sum_{s=1}^S\! \varepsilon_{(s)}   \mathrm{x}^{l+1}_1(\mathrm{d}_{(s)}; {\Theta}^{\rm total}_L)\nonumber \\
	=& \frac{1}{S}\mathbb{E}_{\pmb{\varepsilon}} \sup _{\substack{\mathrm{x}^{l}(\cdot; {\Theta}^{\rm total}_L) \in {\mathcal{F}}^{l}_, \\
	\max \{|\mathrm{c}^{l}_{1}|, \|\mathrm{W}_{1,:}^{l}\|_1 \} \leq B_{\pmb{\Theta}},\\
	\max \{ \|\mathrm{V}^{l}\|_{\infty}, \|\mathrm{b}^l\|_{\infty} \} \leq B_{\pmb{\Theta}} }}
	\!\sum_{s=1}^S\! \varepsilon_{(s)} \! \Big[ \mathrm{x}^{l}_1(\mathrm{d}_{(s)}; {\Theta}^{\rm total}_L) \!+\! \tau_L \Big( \mathrm{W}_{1,:}^{l}\psi \circ   \big(\mathrm{V}^{l}\mathrm{x}^{l}(\mathrm{d}_{(s)}; {\Theta}^{\rm total}_L)\! +\! \mathrm{b}^l \big) \!+ \!\mathrm{c}^{l}_{1} \Big) \Big]  \nonumber \\
    \leq & \frac{1}{S}\mathbb{E}_{\pmb{\varepsilon}} \sup _{\substack{\mathrm{x}^{l}_1(\cdot; {\Theta}^{\rm total}_L) \in {\mathcal{F}}^{l}_{1}, \\
	\max \{|\mathrm{c}^{l}_{1}|, \|\mathrm{W}_{1,:}^{l}\|_1 \} \leq B_{\pmb{\Theta}},\\
	\mathrm{z}^{l+1}(\cdot; {\Theta}^{\rm total}_L) \in (\mathcal{Z}^{l+1})^m }} 
	\sum_{s=1}^S \varepsilon_{(s)} \Big[ \mathrm{x}^{l}_1(\mathrm{d}_{(s)}; {\Theta}^{\rm total}_L) + \tau_L \Big( \mathrm{W}_{1,:}^{l}\psi \circ   (\mathrm{z}^{l+1}(\mathrm{d}_{(s)}; {\Theta}^{\rm total}_L)) + \mathrm{c}^{l}_{1} \Big) \Big]  \nonumber \\
	\leq & \mathscr{R}_{\mathcal{S}}({\mathcal{F}}^{l}_{1}) + \tau_L\frac{B_{\pmb{\Theta}}}{\sqrt{S}} +  \tau_L 2 B_{\pmb{\Theta}} \frac{1}{S}\mathbb{E}_{\pmb{\varepsilon}} \sup _{ \mathrm{z}^{l+1}_1(\cdot; {\Theta}^{\rm total}_L)\in \mathcal{Z}^{l+1}}
		 \sum_{s=1}^S  \varepsilon_{(s)}  \psi \big( \mathrm{z}_1^{l+1}(\mathrm{d}_{(s)}; {\Theta}^{\rm total}_L) \big) \nonumber \\
	\leq & \mathscr{R}_{\mathcal{S}}({\mathcal{F}}^{l}_{1}) + \tau_L\frac{B_{\pmb{\Theta}}}{\sqrt{S}} +  \tau_L  \mathrm{Lip}_{\psi} 2 B_{\pmb{\Theta}} \frac{1}{S} \mathbb{E}_{\pmb{\varepsilon}} \! \sup _{\substack{ \mathrm{x}^{l}(\cdot; {\Theta}^{\rm total}_L) \in {\mathcal{F}}^{l}, \\
			\max \{ \|\mathrm{V}^{l}_{1,:}\|_1, |\mathrm{b}^l_1| \} \leq B_{\pmb{\Theta}} }}
	\! \sum_{s=1}^S  \varepsilon_{(s)} \Big[ \mathrm{V}^{l}_{1,:}\mathrm{x}^{l}(\mathrm{d}_{(s)}; {\Theta}^{\rm total}_L) + \mathrm{b}^l_1 \Big] \nonumber \\
    & -  \tau_L\frac{2B_{\pmb{\Theta}}( \mathrm{Lip}_{\phi_1}\alpha\! +\!  \mathrm{Lip}_{\phi_2}\beta)}{S} \bar{C}^{l+1}_{\mathcal{S}}  \nonumber \\ 
	\leq &  \mathscr{R}_{\mathcal{S}}({\mathcal{F}}^{l}_{1}) + \tau_L\frac{B_{\pmb{\Theta}}}{\sqrt{S}} +  \tau_L  2 \mathrm{Lip}_{\psi}  B_{\pmb{\Theta}} \Big( \frac{B_{\pmb{\Theta}}}{\sqrt{S}} +   B_{\pmb{\Theta}} \mathscr{R}_{\mathcal{S}}({\mathcal{F}}^{l}_{1}) \Big)  -  \tau_L\frac{2 B_{\pmb{\Theta}}( \mathrm{Lip}_{\phi_1}\alpha\! +\!  \mathrm{Lip}_{\phi_2}\beta)}{S} \bar{C}^{l+1}_{\mathcal{S}} \nonumber \\
  \leq &  \mathscr{R}_{\mathcal{S}}({\mathcal{F}}^{l}_{1}) \!+ \! \tau_L  2 \mathrm{Lip}_{\psi}  B_{\pmb{\Theta}}^2 \mathscr{R}_{\mathcal{S}}({\mathcal{F}}^{l}_{1})  \!+\!  \max \Big\{\tau_L\frac{B_{\pmb{\Theta}}}{\sqrt{S}} \Big( 1 \!+ \! 2 \mathrm{Lip}_{\psi} B_{\pmb{\Theta}} \Big) \!-\! \tau_L 2 B_{\pmb{\Theta}}\frac{( \mathrm{Lip}_{\phi_1}\alpha\! +\!  \mathrm{Lip}_{\phi_2}\beta) \bar{C}^{l+1}_{\mathcal{S}}}{S}, 0  \Big\}, \nonumber
\end{align} 
{where the second equality follows from Eq.(\ref{equation: discrete-time ResNet}); the first inequality uses a similar proof of \cite[Theorem 12]{bartlett2002rademacher} and \cite[Lemma 4]{shultzman2023generalization}; the second inequality follows from Proposition~\ref{proposition: Rademacher complexity property of ac function} and $0 \leq  \bar{C}_{\mathcal{S}}^{l+1} \leq S$ is dependent on data ${\mathcal{S}}$; and the third inequality follows from Lemma~\ref{lemma: equivalence hypothesis set with scalar function}, \cite[Lemma 4]{shultzman2023generalization} together with the symmetric of the class $\{\mathrm{V}^{l}_{1,:}\mathrm{x}^{l}(\mathrm{d}_{(s)}; {\Theta}^{\rm total}_L): \ \mathrm{x}^{l}(\cdot; {\Theta}^{\rm total}_L) \in {\mathcal{F}}^{l}, \|\mathrm{V}^{l}_{1,:}\|_1 \leq B_{\pmb{\Theta}} \}$.}

{Let ${C}_{\mathcal{S}}^{l+1} = \min \{ \sqrt{S} \frac{1 +  2\mathrm{Lip}_{\psi} B_{\pmb{\Theta}}}{2 (\mathrm{Lip}_{\phi_1}\alpha +  \mathrm{Lip}_{\phi_2}\beta)}, \bar{C}_{\mathcal{S}}^{l+1} \}$, $0 \leq l \leq L-1$. This truncation is introduced to ensure that the coefficients in the recursive inequality remain nonnegative, which is required for the application of the discrete Gr{\"o}nwall inequality. Then we get} 
\begin{equation}
    \mathscr{R}_{\mathcal{S}}({\mathcal{F}}^{l+1}_1) \! \leq \! \mathscr{R}_{\mathcal{S}}({\mathcal{F}}^{l}_{1})\! +\!  \tau_L 2 \mathrm{Lip}_{\psi} B_{\pmb{\Theta}}^2 \mathscr{R}_{\mathcal{S}}({\mathcal{F}}^{l}_{1}) \! + \! \tau_L\frac{B_{\pmb{\Theta}}}{\sqrt{S}} \Big( 1 \!+ \! 2 \mathrm{Lip}_{\psi} B_{\pmb{\Theta}} \Big) \!-\! \tau_L B_{\pmb{\Theta}} \frac{( \mathrm{Lip}_{\phi_1}\alpha\! +\!  \mathrm{Lip}_{\phi_2}\beta) {C}_{\mathcal{S}}^{l+1}}{S},
    \nonumber
\end{equation}
where $0 \leq l \leq L-1$ and $0 \leq {C}_{\mathcal{S}}^{l+1} \leq \min \{ \sqrt{S} \frac{1 + 2\mathrm{Lip}_{\psi} B_{\pmb{\Theta}}}{2( \mathrm{Lip}_{\phi_1}\alpha +  \mathrm{Lip}_{\phi_2}\beta)}, S \}$. By applying Gr\"onwall's inequality \cite{emmrich1999discrete} to the inequality above together with $\tau_L \cdot L = T$, we get
\begin{equation}
    \begin{aligned}
         \mathscr{R}_{\mathcal{S}}({\mathcal{F}}^{L}_1) 
        \leq & \Big( \mathscr{R}_{\mathcal{S}}({\mathcal{F}}^{0}_1) + \frac{\tau_L L \cdot B_{\pmb{\Theta}}}{\sqrt{S}}  (1 + 2 \mathrm{Lip}_{\psi} B_{\pmb{\Theta}}) \Big)\exp(\tau_L L \cdot 2 \mathrm{Lip}_{\psi} B_{\pmb{\Theta}}^2) \\
        & -  \frac{B_{\pmb{\Theta}}}{\sqrt{S}} \exp(\tau_L L \cdot 2 \mathrm{Lip}_{\psi} B_{\pmb{\Theta}}^2) \tau_L  \sum_{l=1}^L \big(\frac{( \mathrm{Lip}_{\phi_1}\alpha\! +\!  \mathrm{Lip}_{\phi_2}\beta) {C}_{\mathcal{S}}^{l}}{\sqrt{S}} \big) \\
        = & \Big( \mathscr{R}_{\mathcal{S}}({\mathcal{F}}^{0}_1) + \frac{TB_{\pmb{\Theta}}}{\sqrt{S}}  (1 +  2 \mathrm{Lip}_{\psi} B_{\pmb{\Theta}}) \Big)\exp(2T  \mathrm{Lip}_{\psi} B_{\pmb{\Theta}}^2) \\
        & -  \frac{B_{\pmb{\Theta}}}{\sqrt{S}} \exp(2T  \mathrm{Lip}_{\psi} B_{\pmb{\Theta}}^2) \tau_L  \sum_{l=1}^L \big(\frac{( \mathrm{Lip}_{\phi_1}\alpha\! +\!  \mathrm{Lip}_{\phi_2}\beta) {C}_{\mathcal{S}}^{l}}{\sqrt{S}} \big) .
    \end{aligned}
    \nonumber
\end{equation}
Due to the lemma in \cite[Lemma~26.11]{shalev2014understanding} and Eq.\eqref{equation: lip property of Rademacher of activation old}, we see
$$
\begin{aligned}
    \mathscr{R}_{\mathcal{S}}({\mathcal{F}}^{0}_1) = \mathbb{E}_{\pmb{\varepsilon}} \sup _{ \max\{ \|\mathrm{U}_{1,:}\|_{1}, |\mathrm{a}_1| \} \leq  B_{\pmb{\Theta}} } \frac{1}{S} \sum_{s=1}^S  \varepsilon_{(s)} \cdot \psi \!\circ \!(\mathrm{U}_{1,:}\mathrm{d}_{(s)} \!+\! \mathrm{a}_1) 
   \! \leq \!   \mathrm{Lip}_{\psi} B_{\pmb{\Theta}} \frac{B_{\rm in}\sqrt{2\log(2n_{\mathrm{d}})} \!+ \!1}{\sqrt{S}}.
\end{aligned}
$$
It then follows from the above two inequalities that
\begin{equation}
    \begin{aligned}
         \mathscr{R}_{\mathcal{S}}({\mathcal{F}}^{L}_1) 
        \leq & \frac{B_{\pmb{\Theta}}}{\sqrt{S}} \Big( \mathrm{Lip}_{\psi} B_{\rm in}\sqrt{2\log(2n_{\mathrm{d}})} + 1 + T \cdot (1 +  2\mathrm{Lip}_{\psi} B_{\pmb{\Theta}}) \Big)\exp(2T \mathrm{Lip}_{\psi} B_{\pmb{\Theta}}^2) \\
        & -  \frac{B_{\pmb{\Theta}}}{\sqrt{S}} \exp(2T  \mathrm{Lip}_{\psi} B_{\pmb{\Theta}}^2) \tau_L  \sum_{l=1}^L \frac{( \mathrm{Lip}_{\phi_1}\alpha\! +\!  \mathrm{Lip}_{\phi_2}\beta) {C}_{\mathcal{S}}^{l}}{\sqrt{S}}  .
    \end{aligned}
    \label{equation: proof-GE-bound-2}
\end{equation}
Hence, combining Eq.(\ref{equation: generalization error bound for dis ResNet networks}), Eq.(\ref{equation: proof-GE-bound-1}), and Eq.(\ref{equation: proof-GE-bound-2}), we complete the proof.
\end{proof}

\subsection{Proof of Theorem~\ref{theorem: Uniform generalization error bound for continuous-time ResNet}}
\label{subsec: Proof for Theorem-2}
In this subsection, we prove Theorem~\ref{theorem: Uniform generalization error bound for continuous-time ResNet}. 
Our approach is to transform the generalization problem of continuous dynamical systems into estimating the Rademacher complexity of discrete systems and to apply a convergence property from the discrete-time ResNet to the continuous-time ResNet. 
As in \cite{huang2024on}, we use the following definition to connect the discrete- and continuous-time learnable parameters.
\begin{definition} 
\label{definition: extension operator}
Let $\mathcal{X}$ be a vector space. Partition the interval $[0,T]$ into $L$ subintervals $\{[t_L^{l-1}, t_L^l)\}_{l=1}^{L-1} \cup [t_L^{L-1}, t_L^L]$, where $t_L^l := l \cdot T / L$.
We define a sampling operator $\boldsymbol{\mathcal{T}}_L : \mathcal{C}([0,T]; \mathcal{X}) \rightarrow \mathcal{X}^L$ for $\boldsymbol{\Xi} \in \mathcal{C}([0,T]; \mathcal{X})$ by
\[
\boldsymbol{\mathcal{T}}_L(\boldsymbol{\Xi}) = \big( \boldsymbol{\Xi}(t_L^0), \ldots, \boldsymbol{\Xi}(t_L^{L-1}) \big),
\]
and  define a piecewise constant extension operator $\bar{\boldsymbol{\mathcal{I}}}_L : \mathcal{X}^L \rightarrow \mathcal{M}_L([0,T]; \mathcal{X})$ for any $\mathrm{\Xi}_L = (\mathrm{\xi}_L^0, \mathrm{\xi}_L^1, \ldots, \mathrm{\xi}_L^{L-1}) \in \mathcal{X}^L$ by
$$
(\bar{\boldsymbol{\mathcal{I}}}_L \mathrm{\Xi}_L)(t) =  \sum_{l=0}^{L-2} \mathrm{\xi}_L^l \boldsymbol{1}_{[t_L^l, t_L^{l+1})}(t) + \mathrm{\xi}_L^{L-1} \boldsymbol{1}_{[t_L^{L-1}, t_L^L]}(t),
$$
where $\mathcal{M}_L([0,T]; \mathcal{X})$ denotes the space of piecewise constant functions on $[0,T]$ with $L$ pieces.
\end{definition}
Using the above definition, we derive a convergence property between the state variables of discrete- and continuous-time ResNets.

\begin{lemma} 
\label{lemma: state convergence}
Let assumptions $(A_1)-(A_3)$ hold and denote $t_L^l = l \tau_L = l \cdot \frac{T}{L},0 \leq l \leq L$. 
Given a learnable parameter $ \pmb{\Theta}^{\rm total} =({\Theta}^{\rm pre}, {\pmb{\Theta}}) \in \Omega_{\pmb{\Theta}^{\rm total}}$ and input $\mathrm{d} \in \mathbb{R}^{n_{\mathrm{d}}}$ such that $\|\mathrm{d}\|_{\infty} \leq B_{\rm in}$.
Then there exists a parameter ${\Theta}^{\rm total}_L=({\Theta}^{\rm pre}, {{\Theta}}_L) \in \Omega_{{\Theta}^{\rm total}_L}$ and a constant $C_1$ such that
$$\| \bar{\boldsymbol{\mathcal{I}}}_{L} {{\Theta}}_L - \pmb{\Theta} \|_{\mathcal{L}^{2}([0,T]; \mathcal{E})} \leq C_1 \tau_L^{1/2} \rightarrow 0 \text{ as } L \rightarrow \infty. $$
Moreover, for this ${\Theta}^{\rm total}_L$, there exists a constant $C_2 > 0$ such that
\begin{equation}
    \sup_{ 1 \leq l \leq L } \sup_{t\in [t_{L}^{l-1}, t_{L}^{l}]} \|\mathrm{x}^{l}(\mathrm{d}; {\Theta}^{\rm total}_L) - \bold{x}(t; \mathrm{d}; \pmb{\Theta}^{\rm total}) \|_{\infty} \leq C_2 \tau_L^{1/2} \rightarrow 0 \ {\rm as } \ L\rightarrow \infty.
    \label{equation: forward convergence}
\end{equation}
\end{lemma}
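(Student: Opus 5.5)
We build ${\Theta}^{\rm total}_L$ by sampling the continuous parameter: keep ${\Theta}^{\rm pre}$ unchanged and set ${\Theta}_L := \boldsymbol{\mathcal{T}}_L(\pmb{\Theta})$, so that ${\Theta}^l_L = \pmb{\Theta}(t_L^l)$. Because $\|\pmb{\Theta}\|_{\mathcal{C}} \le B_{\pmb{\Theta}}$, every sampled value obeys $\|{\Theta}^l_L\|_\infty \le B_{\pmb{\Theta}}$. Hence ${\Theta}^{\rm total}_L \in \Omega_{{\Theta}^{\rm total}_L}$ under $(A_2)$.

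\textbf{Parameter convergence.} On each subinterval $[t_L^{l-1},t_L^l)$, the piecewise constant interpolant differs from $\pmb{\Theta}(t)$ by $\pmb{\Theta}(t_L^{l-1}) - \pmb{\Theta}(t)$. We bound this via the fundamental theorem of calculus for $\mathcal{H}^1$ functions and Cauchy--Schwarz:
\[
\|\pmb{\Theta}(t)-\pmb{\Theta}(t_L^{l-1})\| \le \int_{t_L^{l-1}}^{t}\|\dot{\pmb{\Theta}}\|\,\mathrm{d}s \le \tau_L^{1/2}\|\dot{\pmb{\Theta}}\|_{\mathcal{L}^2(t_L^{l-1},t_L^l)}.
\]
Squaring, integrating over the subinterval (which gives a factor $\tau_L$), and summing over $l$ yields
\[
\|\bar{\boldsymbol{\mathcal{I}}}_L{\Theta}_L-\pmb{\Theta}\|_{\mathcal{L}^2}\le \tau_L\|\dot{\pmb{\Theta}}\|_{\mathcal{L}^2}\le \tau_L B_{\pmb{\Theta}}\le T^{1/2}B_{\pmb{\Theta}}\tau_L^{1/2}.
\]
Here the last step uses $\tau_L \le T$. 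This gives the first claim with $C_1 = T^{1/2}B_{\pmb{\Theta}}$, up to norm-equivalence constants on the finite-dimensional space $\mathcal{E}$. The same computation also gives the pointwise Hölder estimate $\|\pmb{\Theta}(t)-\pmb{\Theta}(s)\|\le B_{\pmb{\Theta}}|t-s|^{1/2}$, which we reuse below.

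\textbf{State convergence.} Let $f(\mathrm{x},\theta)=\mathrm{W}\psi\circ(\mathrm{V}\mathrm{x}+\mathrm{b})+\mathrm{c}$ and set $e^l := \mathrm{x}^l - \bold{x}(t_L^l)$. The initial error is $e^0 = 0$, since both systems share ${\Theta}^{\rm pre}$ and $\mathrm{d}$. Writing $\bold{x}(t_L^{l+1})=\bold{x}(t_L^l)+\int_{t_L^l}^{t_L^{l+1}} f(\bold{x}(s),\pmb{\Theta}(s))\,\mathrm{d}s$ and subtracting from the discrete update, we get
\[
e^{l+1}=e^l+\int_{t_L^l}^{t_L^{l+1}}\big[f(\mathrm{x}^l,\pmb{\Theta}(t_L^l))-f(\bold{x}(s),\pmb{\Theta}(s))\big]\,\mathrm{d}s .
\]
By Lemma~\ref{lemma: dis-ResNet-state-bound}, all states lie in the ball of radius $B_{\rm out}$. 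On that ball, $f$ is Lipschitz in $\mathrm{x}$ with a constant of order $\mathrm{Lip}_\psi B_{\pmb{\Theta}}^2$ (in the $\infty$-norm, up to dimension factors). It is also Lipschitz in $\theta$, with a constant depending on $B_{\rm out}$, $B_{\pmb{\Theta}}$ and $\mathrm{Lip}_\psi$. We split the integrand into three pieces:
\begin{itemize}
\item $f(\mathrm{x}^l,\cdot)-f(\bold{x}(t_L^l),\cdot)$, which is bounded by a multiple of $\|e^l\|$;
\item $f(\bold{x}(t_L^l),\cdot)-f(\bold{x}(s),\cdot)$, controlled using $\|\bold{x}(t_L^l)-\bold{x}(s)\|\le C\tau_L$, since $\dot{\bold{x}}$ is bounded;
\item the parameter mismatch $\|\pmb{\Theta}(t_L^l)-\pmb{\Theta}(s)\|\le B_{\pmb{\Theta}}\tau_L^{1/2}$.
\end{itemize}
This gives the recursion $\|e^{l+1}\|\le(1+C\tau_L)\|e^l\|+C\tau_L^{3/2}$. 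The discrete Grönwall inequality \cite{emmrich1999discrete} then yields $\max_l\|e^l\|\le C\,T\,e^{CT}\tau_L^{1/2}$.

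Finally, we pass from grid points to $\sup_{t\in[t_L^{l-1},t_L^l]}$. We write $\|\mathrm{x}^l-\bold{x}(t)\|\le\|e^l\|+\|\bold{x}(t_L^l)-\bold{x}(t)\|\le\|e^l\|+C\tau_L$, which gives \eqref{equation: forward convergence}. The main obstacle is the parameter-mismatch term. With only $\mathcal{H}^1$ regularity in time we cannot get $O(\tau_L)$ local error, so the argument must rely on the $\tfrac12$-Hölder continuity from Sobolev embedding. The constants must also depend only on $T, B_{\rm in}, B_{\pmb{\Theta}}, \mathrm{Lip}_\psi$ and not on $L$; this is where the uniform bound $B_{\rm out}$ from Lemma~\ref{lemma: dis-ResNet-state-bound} is essential.
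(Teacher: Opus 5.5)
Your proof is correct. For the state estimate it takes a genuinely different route from the paper.

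The parameter step is essentially the paper's. The paper uses a Morrey-type pointwise bound $\|\pmb{\Theta}(t)-\pmb{\Theta}(\hat t)\|_\infty\le \bar C_1B_{\pmb{\Theta}}|t-\hat t|^{1/2}$ and integrates it, which gives only $O(\tau_L^{1/2})$ in $\mathcal{L}^2$. Your localized Cauchy--Schwarz keeps $\|\dot{\pmb{\Theta}}\|_{\mathcal{L}^2(t_L^{l-1},t_L^l)}$ on each cell and yields the sharper $O(\tau_L)$ before you weaken it.

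For the grid error, the paper does not carry out a one-step defect analysis. It imports from \cite{thorpe2018deep, huang2024on} an estimate of the form $\sup_l e_L^l\le \frac{\gamma_L M_2}{2}T(1+\frac{M_1T}{L})^{2L}+\frac{M_2}{2\gamma_L}\|\pmb{\Theta}-\bar{\boldsymbol{\mathcal{I}}}_L{\Theta}_L\|^2_{\mathcal{L}^2}+M_3\frac{T^2}{L}(1+\frac{M_1T}{L})^{L}$. There ${\Theta}_L$ enters only through its $\mathcal{L}^2$ mismatch, and the paper then sets the free parameter $\gamma_L=\|\bar{\boldsymbol{\mathcal{I}}}_L{\Theta}_L-\pmb{\Theta}\|_{\mathcal{L}^2}$. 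Only the final passage from grid points to whole cells, via the bound on $\dot{\mathbf{x}}$, coincides with yours.

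What each route buys:
\begin{itemize}
\item Your splitting of the defect into an $\mathrm{x}$-Lipschitz part, a time-regularity part of $\mathbf{x}$, and a pointwise parameter-mismatch part, closed by the discrete Gr\"onwall inequality, is self-contained. It produces explicit $L$-independent constants and uses directly that ${\Theta}_L$ consists of point samples of $\pmb{\Theta}$.
\item The paper's cited estimate does not exploit that sampled structure. It controls the state error by the $\mathcal{L}^2$ parameter mismatch alone, which is the natural form in the deep-limit framework it borrows from.
\end{itemize}

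One small correction to your closing remark: $\mathcal{H}^1$ regularity does not force a $\tau_L^{1/2}$ rate. Bound the integrated mismatch on each cell by $\tau_L^{3/2}\|\dot{\pmb{\Theta}}\|_{\mathcal{L}^2(t_L^{l},t_L^{l+1})}$ rather than $B_{\pmb{\Theta}}\tau_L^{3/2}$, and sum with Cauchy--Schwarz as in your first step. The accumulated forcing is then at most $\tau_L T^{1/2}\|\dot{\pmb{\Theta}}\|_{\mathcal{L}^2}$, so the Gr\"onwall bound gives $O(\tau_L)$. The $\tau_L^{1/2}$ in the lemma is simply a weaker rate that suffices for its later use.
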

\begin{proof} The proof primarily follows the approach in \cite{thorpe2018deep, huang2024on}.
    For every $ \pmb{\Theta}^{\rm total} =({\Theta}^{\rm pre}, {\pmb{\Theta}}) \in \Omega_{\pmb{\Theta}^{\rm total}}$, let ${\Theta}^{\rm total}_L= ({\Theta}^{\rm pre}, {{\Theta}}_L)$, where ${{\Theta}}_L=({{\Theta}}_L^{0}, {{\Theta}}_L^{1}, \ldots, {{\Theta}}_L^{L-1})  := \boldsymbol{\mathcal{T}}_L(\pmb{\Theta})$. By the definition of $\Omega_{\pmb{\Theta}^{\rm total}}$ and a similar proof of Morrey's inequality \cite[Remark 11.35]{leoni2017first}, we have for every $t, \hat{t} \in [0,T]$, $\|\pmb{\Theta}(t) - \pmb{\Theta}(\hat{t})\|_{\infty} \leq \bar{C}_1 |t - \hat{t}|^{1/2} \|D(\pmb{\Theta})\|_{\mathcal{L}^2([0,T]; \mathcal{E})} \leq \bar{C}_1 B_{\pmb{\Theta}}|t - \hat{t}|^{1/2}$.
    Then
\begin{equation}
    \begin{aligned}
    \| \bar{\boldsymbol{\mathcal{I}}}_{L}\pmb{\Theta}_{L} - \pmb{\Theta} \|_{\mathcal{L}^{2}([0,T];\mathcal{E})}^{2} & = \sum_{l=0}^{L-1} \int_{t_{L}^{l}}^{t_{L}^{l+1}}  \|\pmb{\Theta}(t_{L}^{l}) - \pmb{\Theta}(t)\|_{\infty}^{2} dt  \\
	& \leq \sum_{l=0}^{L-1} \int_{t_{L}^{l}}^{t_{L}^{l+1}} \bar{C}_1^2 B_{\pmb{\Theta}}^2 |t_L^l - {t}| dt   \leq C_1^2 \tau_L  \rightarrow 0,\ L \rightarrow \infty,
    \end{aligned}
    \label{inequality: state convergence proof-1}
\end{equation}
where $C_1 = \sqrt{\frac{1}{2} \bar{C}_1^2 B_{\pmb{\Theta}}^2 T}$.

We next prove (\ref{equation: forward convergence}) for a such ${\Theta}^{\rm total}_L$. 
Let $e_L^l = \|\mathrm{x}^{l}(\mathrm{d}; {\Theta}^{\rm total}_L) - \bold{x}(t_L^l; \mathrm{d}; \pmb{\Theta}^{\rm total}) \|_{\infty}$.
Since the ResNets defined in Eq.(\ref{equation: discrete-time ResNet}) and Eq.(\ref{equation: continuous-time ResNet}) are special cases of the ResNet architectures considered in \cite[Eq.(5) and Eq.(7)]{huang2024on}, it follows from the proof of \cite{thorpe2018deep, huang2024on} that 
\begin{equation}
\begin{aligned}
    \sup_{0 \leq l \leq L} e^l_L \leq & \frac{\gamma_{L} M_{2}}{2}T \Big(1+\frac{M_{1}T}{L} \Big)^{2L} + \frac{M_{2}}{2\gamma_{L}}\|  \pmb{\Theta} - \bar{\boldsymbol{\mathcal{I}}}_{L}\pmb{\Theta}_{L}\|^{2}_{\mathcal{L}^{2}([0,T];\mathcal{E})} \! + \! M_3 \frac{T^{2}}{L}\Big(1+\frac{M_{1}T}{L}\Big)^{L},
     \label{inequality: state convergence proof-2}
\end{aligned}
\end{equation}
where $M_1, M_2, M_3$ are constants depending on $B_{\pmb{\Theta}}, B_{\rm in}, T,  \mathrm{Lip}_{\psi}$, $\gamma_L > 0$ can be any constant. Taking $\gamma_{L} = \| \bar{\boldsymbol{\mathcal{I}}}_{L}\pmb{\Theta}_{L} - \pmb{\Theta} \|_{\mathcal{L}^{2}([0,T];\mathcal{E})}$ and by Eq.\eqref{inequality: state convergence proof-1}, we get 
$$
 \sup_{0 \leq l \leq L} \| \mathrm{x}^{l}(\mathrm{d}; {\Theta}^{\rm total}_L) - \bold{x}(t_L^l; \mathrm{d}; \pmb{\Theta}^{\rm total})  \|_{\infty} \leq M_4 \tau_L^{1/2},
$$
where $M_4$ is a constant dependent on $B_{\pmb{\Theta}}, B_{\rm in}, T,  \mathrm{Lip}_{\psi}$.

On the other hand, 
\begin{equation}
    \begin{aligned}
        \sup_{t \in [0,T]} \Big\| \frac{\mathrm{d}\bold{x}(t; \mathrm{d}; \pmb{\Theta}^{\rm total})}{\mathrm{d} t}   \Big\|_{\infty} 
        =& \sup_{t \in [0,T]} \|\bold{W}(t)\psi \circ (\bold{V}(t)\bold{x}(t; \mathrm{d}; \pmb{\Theta}^{\rm total}) + \bold{b}(t)) + \bold{c}(t) \|_{\infty} \\
        \leq &  \mathrm{Lip}_{\psi} B_{\pmb{\Theta}}^2(B_{\rm out} +1) + B_{\pmb{\Theta}}=: \mathrm{Lip}_{\bold{x}},
    \end{aligned}
    \label{inequality: state convergence proof-3}
\end{equation}
where $B_{\rm out}$ is a constant given in Lemma~\ref{lemma: dis-ResNet-state-bound}.
Therefore, combining Eq.(\ref{inequality: state convergence proof-2}) and Eq.(\ref{inequality: state convergence proof-3}) gives
\begin{equation}
    \begin{aligned}
       & \sup_{t\in [t_{L}^{l-1}, t_{L}^{l}]} \| \bold{x}(t; \mathrm{d}; \pmb{\Theta}^{\rm total}) - \mathrm{x}^{l}(\mathrm{d}; {\Theta}^{\rm total}_L) \|_{\infty} \\
         \leq & \sup_{t\in [t_{L}^{l-1}, t_{L}^{l}]} \big( \| \bold{x}(t; \mathrm{d}; \pmb{\Theta}^{\rm total}) - \bold{x}(t_L^l; \mathrm{d}; \pmb{\Theta}^{\rm total}) \|_{\infty} + \| 
       \bold{x}(t_L^l; \mathrm{d}; \pmb{\Theta}^{\rm total})
        -\mathrm{x}^{l}(\mathrm{d}; {\Theta}^{\rm total}_L) \|_{\infty} \big)  \\
	 \leq & \mathrm{Lip}_{\bold{x}} \tau_L + M_4 \tau_L^{1/2}.
    \end{aligned}
\end{equation}
Letting $C_2 = \mathrm{Lip}_{\bold{x}} \sqrt{T} + M_4$ completes the proof.
\end{proof}

We now prove Theorem~\ref{theorem: Uniform generalization error bound for continuous-time ResNet}.
\begin{proof} ({Proof of Theorem~\ref{theorem: Uniform generalization error bound for continuous-time ResNet}})
We divide the proof into two steps.

   Step 1. We show the boundedness of the loss function and apply the theorem in \cite[Theorem 26.5]{shalev2014understanding} to get a Rademacher complexity upper bound of the generalization gap. 
   Similar to the discussion in the proof of Theorem~\ref{theorem: Uniform generalization error bound for discrete-time ResNet},
   the loss function $\ell$ is bounded by a constant $B_{\ell}$ dependent on $T, B_{\rm in}, B_{\pmb{\Theta}},  \mathrm{Lip}_{\psi}$. Therefore, for any $0 < \delta < 1$, with probability $1-\delta$ that
\begin{equation}
		GE_{\mathcal{S}}(\bold{x}(T;\cdot; \pmb{\Theta}^{\rm total})) \leq 2\mathscr{R}_{\mathcal{S}}({\ell}\circ {\mathcal{F}}^{T}) + 4B_{\ell} \sqrt{\frac{2\log(4/\delta)}{S}}, \ \forall \ \bold{x}(T;\cdot; \pmb{\Theta}^{\rm total}) \in \mathcal{F}^T,
		\label{equation: generalization error bound for ResNet networks}
	\end{equation}
due to \cite[Theorem 26.5]{shalev2014understanding}. Here ${\ell}\circ {\mathcal{F}}^{T} = \{{f}(\cdot, \cdot) = {\ell} (\bold{x}(T; \cdot; \pmb{\Theta}^{\rm total}), \cdot): \ \bold{x}(T;\cdot; \pmb{\Theta}^{\rm total}) \in 	{\mathcal{F}}^{T}\}$.

Step 2. We estimate the Rademacher complexity $\mathscr{R}_{\mathcal{S}}({\ell}\circ {\mathcal{F}}^{T})$. Similar to the discussion in Step~2 of the proof of Theorem~\ref{theorem: Uniform generalization error bound for discrete-time ResNet}, we have
	\begin{equation}
		\begin{aligned}
			\mathscr{R}_{\mathcal{S}}({\ell}\circ {\mathcal{F}}^{T}) = &  \frac{1}{S} \mathbb{E}_{\pmb{\varepsilon}} \sup _{\ell \circ \bold{x}(T;\cdot; \pmb{\Theta}^{\rm total}) \in 	\ell \circ{\mathcal{F}}^{T}}  \sum_{s=1}^S  \varepsilon_{(s)} \cdot {\ell}(\bold{x}(T; \mathrm{d}_{(s)};\pmb{\Theta}^{\rm total}), \mathrm{g}_{(s)})  \\
            = &  \frac{1}{S} \mathbb{E}_{\pmb{\varepsilon}} \sup _{\bold{x}(T;\cdot; \pmb{\Theta}^{\rm total}) \in 	{\mathcal{F}}^{T}}  \sum_{s=1}^S  \varepsilon_{(s)} \cdot {\ell}(\bold{x}(T; \mathrm{d}_{(s)};\pmb{\Theta}^{\rm total}), \mathrm{g}_{(s)})  \\	
             \leq & \sqrt{2}B_{\kappa}  \frac{1}{S} \mathbb{E}_{\pmb{\epsilon}} \sup _{\substack{\bold{x}_k(T; \cdot; \pmb{\Theta}^{\rm total}) \in 	{\mathcal{F}}^{T}_{k}}} \sum_{s=1}^S\sum_{k=1}^{n}  \epsilon_{(sk)} \cdot \bold{x}_k(T;\mathrm{d}_{(s)};\pmb{\Theta}^{\rm total}) \\	  
			\leq & \sqrt{2}B_{\kappa} \sum_{k=1}^{n} \mathbb{E}_{\pmb{\epsilon}_{k,:}} \sup _{\substack{\bold{x}_k(T; \cdot; \pmb{\Theta}^{\rm total}) \in 	{\mathcal{F}}^{T}_{k}}} \frac{1}{S} \sum_{s=1}^S  \epsilon_{(sk)} \cdot \bold{x}_k(T;\mathrm{d}_{(s)};\pmb{\Theta}^{\rm total}) \\	  
            = & \sqrt{2}B_{\kappa} \sum_{k=1}^{n}	\mathscr{R}_{\mathcal{S}}(\mathcal{F}^{T}_{k}) ,  
		\end{aligned}
		\label{equation: proof-continuous-time-GE-bound-1}
	\end{equation}
with $\pmb{\epsilon}_{k,:}=\{\epsilon_{1k}, \ldots,  \epsilon_{Sk}\}$,  where the first inequality is due to \cite[Corollary 1]{maurer2016vector}.
    
To apply the result in Theorem~\ref{theorem: Uniform generalization error bound for discrete-time ResNet}, we let $t_L^l = l \tau_L = l \cdot \frac{T}{L}$, $0 \leq l \leq L$.
Then there are constants $c_1, c_2$ independent of $L$ such that 
$$
\|\bar{\boldsymbol{\mathcal{I}}}_L \boldsymbol{\mathcal{T}}_L(\pmb{\Theta})-\pmb{\Theta}\|^2_{\mathcal{L}^{2}([0,T];\mathcal{E})} \leq c_1 \tau_L, \forall \  \pmb{\Theta} \in \Omega_{\pmb{\Theta}},
$$
and also
 \begin{equation}
     \begin{aligned}
         \|\bold{x}(T; \mathrm{d}; \pmb{\Theta}^{\rm total})  - \mathrm{x}^{L}(\mathrm{d}; ({\Theta}^{\rm pre},\boldsymbol{\mathcal{T}}_L(\pmb{\Theta})))
         \|_{\infty} \leq c_2 \tau^{\frac{1}{2}}_L, \forall \mathrm{d} \in \mathbb{R}^{n_\mathrm{d}}, \forall  \pmb{\Theta} \in \Omega_{\pmb{\Theta}}, 
     \end{aligned}
     \label{inequality: continuous-time proof-1}
 \end{equation}
by using Lemma~\ref{lemma: state convergence}. Hence, noting $({\Theta}^{\rm pre},\boldsymbol{\mathcal{T}}_L(\pmb{\Theta})) \in \Omega_{{\Theta}^{\rm total}_L}$ for all $({\Theta}^{\rm pre}, \pmb{\Theta}) \in \Omega_{\pmb{\Theta}}$ and applying Eq.(\ref{equation: proof-GE-bound-2}) and Eq.(\ref{inequality: continuous-time proof-1}), we get
\begin{align}
\mathscr{R}_{\mathcal{S}} ({\mathcal{F}}^{T}_{k}) 
           =& \frac{1}{S}\mathbb{E}_{\pmb{\varepsilon}} \sup _{
 \bold{x}_k(T; \cdot; \pmb{\Theta}^{\rm total}) \in {\mathcal{F}}^{T}_{k} } \sum_{s=1}^{S} \varepsilon_{(s)} \bold{x}_k(T;\mathrm{d}_{(s)};\pmb{\Theta}^{\rm total})  \nonumber \\
 \leq & \frac{1}{S}\mathbb{E}_{\pmb{\varepsilon}} \Big[ \sup _{ \pmb{\Theta}^{\rm total} \in \Omega_{\pmb{\Theta}}^{\rm total} } \sum_{s=1}^{S} \varepsilon_{(s)} \Big(\bold{x}_k 
 \big(T;\mathrm{d}_{(s)};\pmb{\Theta}^{\rm total} \big) - \mathrm{x}_k^{L}\big(\mathrm{d}_{(s)}; ({\Theta}^{\rm pre},\boldsymbol{\mathcal{T}}_L(\pmb{\Theta}))\big)\Big) \Big]  \nonumber \\
&  + \frac{1}{S}\mathbb{E}_{\pmb{\varepsilon}} \sup _{
 \pmb{\Theta}^{\rm total} \in \Omega_{\pmb{\Theta}}^{\rm total} } \sum_{s=1}^{S} \varepsilon_{(s)} \mathrm{x}_k^{L}(\mathrm{d}_{(s)}; ({\Theta}^{\rm pre},\boldsymbol{\mathcal{T}}_L(\pmb{\Theta})) )  \nonumber \\
 \leq & {c_2\sqrt{\tau_L}}  + \frac{1}{S}\mathbb{E}_{\pmb{\varepsilon}} \sup _{
 \mathrm{x}^{L}_k(\cdot; {\Theta}^{\rm total}_L) \in \mathcal{F}_k^L } \sum_{s=1}^{S} \varepsilon_{(s)} \mathrm{x}_k^{L}(\mathrm{d}_{(s)}; {\Theta}^{\rm total}_L)  \nonumber \\
= & {c_2\sqrt{\tau_L}}  + \mathscr{R}_{\mathcal{S}} ({\mathcal{F}}^{L}_{k})  \nonumber \\ 
 \leq &  {c_2\sqrt{\tau_L}}  + \frac{B_{\pmb{\Theta}}}{\sqrt{S}} \Big(\mathrm{Lip}_{\psi}B_{\rm in}\sqrt{2\log(2n_{\mathrm{d}})} + 1 + T  (1 +  2\mathrm{Lip}_{\psi} B_{\pmb{\Theta}}) \Big)\exp(2T  \mathrm{Lip}_{\psi} B_{\pmb{\Theta}}^2)  \nonumber \\
        & -  \frac{B_{\pmb{\Theta}}}{\sqrt{S}} \exp(2T  \mathrm{Lip}_{\psi} B_{\pmb{\Theta}}^2) \tau_L  \sum_{l=1}^L \frac{( \mathrm{Lip}_{\phi_1}\alpha\! +\!  \mathrm{Lip}_{\phi_2}\beta) {C}_{\mathcal{S}}^{l}}{\sqrt{S}} , \nonumber
    \end{align}
where $1\leq k \leq n$, $0 \leq {C}_{\mathcal{S}}^{l} \leq \min \{ \sqrt{S} \frac{1 +  2\mathrm{Lip}_{\psi} B_{\pmb{\Theta}}}{2 (\mathrm{Lip}_{\phi_1}\alpha +  \mathrm{Lip}_{\phi_2}\beta)}, S \}$.

Since $\tau_L=\frac{T}{L}$, letting $L \to \infty$, possibly along a subsequence, we conclude that there exists {a constant $C_{\mathcal{S}}$ dependent on data ${\mathcal{S}}$} such that
$$
\begin{aligned}
\mathscr{R}_{\mathcal{S}} ({\mathcal{F}}^{T}_{k}) \leq & \frac{B_{\pmb{\Theta}}}{\sqrt{S}} \Big(\mathrm{Lip}_{\psi}B_{\rm in}\sqrt{2\log(2n_{\mathrm{d}})} + 1 + T (1 +  \mathrm{Lip}_{\psi} B_{\pmb{\Theta}}) \Big)\exp(T  \mathrm{Lip}_{\psi} B_{\pmb{\Theta}}^2) \\
        & -  \frac{B_{\pmb{\Theta}}( \mathrm{Lip}_{\phi_1}\alpha\! +\!  \mathrm{Lip}_{\phi_2}\beta)\exp(T  \mathrm{Lip}_{\psi} B_{\pmb{\Theta}}^2) T}{S} C_{\mathcal{S}}, 
\end{aligned}   
$$
where $1 \leq k \leq n$, $0 \leq C_{\mathcal{S}} \leq \min \{ \sqrt{S} \frac{1 +  \mathrm{Lip}_{\psi} B_{\pmb{\Theta}}}{ \mathrm{Lip}_{\phi_1}\alpha +  \mathrm{Lip}_{\phi_2}\beta}, S \}$.
Substituting the above inequality into Eq.(\ref{equation: proof-continuous-time-GE-bound-1}) and combining Eq.(\ref{equation: generalization error bound for ResNet networks}) completes the proof. 
\end{proof}

\section{Experiments}
\label{Sec: 5}
In this section, we conduct experiments to evaluate the generalization ability of ResNets within the dynamical system modeling framework. Specifically, we examine how generalization is affected by varying the number of training samples and the number of neural network layers. {We also investigate the influence of activation functions on the generalization behavior of ResNets from the perspective of the proposed theoretical framework.}
All ResNets are trained for 30 epochs on the MNIST dataset and 120 epochs on the CIFAR10 and CIFAR100 datasets using the SGD optimizer with an initial learning rate of 0.01 and a momentum of 0.9.
All training and testing procedures are implemented using PyTorch on NVIDIA GeForce RTX 3090 GPUs. Our code is available at ``\href{https://github.com/Huangjsh15/Dynamical-ResNet.git}{https://github.com/Huangjsh15/Dynamical-ResNet.git}''.

\subsection{A test on the influence of training sample size $S$} 

{Assume that $S_{\mathfrak{D}}$ is a large number such that $S_{\mathfrak{D}} \gg S$. Let the training and testing samples used in experiments be $\mathcal{S} = \{ (\mathrm{d}_{(s)}, \mathrm{g}_{(s)} )\}_{s=1}^{S}$ and $\bar{\mathcal{S}} = \{ (\mathrm{d}_{(s)}, \mathrm{g}_{(s)} )\}_{s=S+1}^{S + \bar{S}}$, respectively. 
Following the definition of the generalization gap in (\ref{equation: discrete-time generalization error}), we derive }
\begin{align}
    GE_{\mathcal{S}}\big(\mathrm{x}^{L}(\cdot; {\Theta}^{\rm total}_L)\big) = & {\mathfrak{R}_{{\mathfrak{D}}}\big(\mathrm{x}^{L}(\cdot; {\Theta}^{\rm total}_L)\big) - \hat{\mathfrak{R}}_{\mathcal{S}}\big(\mathrm{x}^{L}(\cdot; {\Theta}^{\rm total}_L)\big) } \nonumber \\
    {\approx} & \frac{1}{S_{\mathfrak{D}}} \sum_{s=1}^{S_{\mathfrak{D}}}{\ell} \big(\mathrm{x}^{L}(\mathrm{d}_{(s)};\pmb{\Theta}^{\rm total}),  \mathrm{g}_{(s)}\big) - \frac{1}{S} \sum_{s=1}^{S}{\ell} \big(\mathrm{x}^{L}(\mathrm{d}_{(s)};\pmb{\Theta}^{\rm total}),  \mathrm{g}_{(s)}\big) \nonumber  \\
    = & - \!\frac{S_{\mathfrak{D}} \! - \! S}{S_{\mathfrak{D}}} \frac{1}{S}  \! \sum_{s=1}^{S} \! {\ell} \big(\mathrm{x}^{L}(\mathrm{d}_{(s)}; \! \pmb{\Theta}^{\rm total}),  \mathrm{g}_{(s)}\big)  \! +  \! \frac{S_{\mathfrak{D}}  \!- \! S}{S_{\mathfrak{D}}}  \frac{1}{S_{\mathfrak{D}}  \!-  \!S}  \!\sum_{s=S \!+ \!1}^{S_{\mathfrak{D}}}  \! {\ell} \big(\mathrm{x}^{L}(\mathrm{d}_{(s)}; \! \pmb{\Theta}^{\rm total}),  \mathrm{g}_{(s)}\big) \nonumber \\
    \approx & \frac{S_{\mathfrak{D}} - S}{S_{\mathfrak{D}}} (\hat{\mathfrak{R}}_{\rm test} - \hat{\mathfrak{R}}_{\rm train}) 
    \approx  \hat{\mathfrak{R}}_{\rm test} - \hat{\mathfrak{R}}_{\rm train}, \nonumber
\end{align}
where the training loss $\hat{\mathfrak{R}}_{\rm train} \! = \!\hat{\mathfrak{R}}_{\mathcal{S}}\big(\mathrm{x}^{L}(\cdot; {\Theta}^{\rm total}_L)\big) \! = \! \frac{1}{S} \sum_{s=1}^{S}{\ell} (\mathrm{x}^{L}(\mathrm{d}_{(s)};\pmb{\Theta}^{\rm total}),  \mathrm{g}_{(s)})$ and testing loss $\hat{\mathfrak{R}}_{\rm test}  \!= \! \frac{1}{\bar{S}} \sum_{s=S+1}^{\bar{S} +S} {\ell} (\mathrm{x}^{L}(\mathrm{d}_{(s)};\pmb{\Theta}^{\rm total}),  \mathrm{g}_{(s)}) \approx \frac{1}{S_{\mathfrak{D}} \!-\! S} \sum_{s=S+1}^{S_{\mathfrak{D}}}{\ell} \big(\mathrm{x}^{L}(\mathrm{d}_{(s)};\pmb{\Theta}^{\rm total}),  \mathrm{g}_{(s)}\big)$ since we assume the samples are i.i.d. distributed.
This derivation shows that the difference between the testing and training losses can approximate the generalization gap (\ref{equation: discrete-time generalization error}). Therefore, the result in Theorem~\ref{theorem: Uniform generalization error bound for discrete-time ResNet} can be empirically validated by measuring the quantity $\hat{\mathfrak{R}}_{\rm test} \! - \! \hat{\mathfrak{R}}_{\rm train}$ under varying training sample size $S$.

In Figure~\ref{fig: test loss and predicted func}, we present the average testing-training loss gap of ResNets $\hat{\mathfrak{R}}_{\rm test} - \hat{\mathfrak{R}}_{\rm train}$ at the last 10 epochs for ResNets trained on the MNIST, CIFAR10, and CIFAR100 datasets.
Guided by Theorem~\ref{theorem: Uniform generalization error bound for discrete-time ResNet}, which suggests an $O(\frac{1}{\sqrt{S}})$ generalization rate, we fit the empirical gap using the least squares method with function ${h}_{\mu}(S) = \mu/\sqrt{S}$. The fitted curves exhibit close agreement with the theoretical prediction, thereby validating the asymptotic rate suggested by our bound.
Final test accuracies for both datasets are summarized in Table~\ref{tab: test acc}. The results further confirm that the generalization error decreases as the number of training samples increases.

\begin{figure}[htbp]
	\centering
	\includegraphics[scale=0.22]{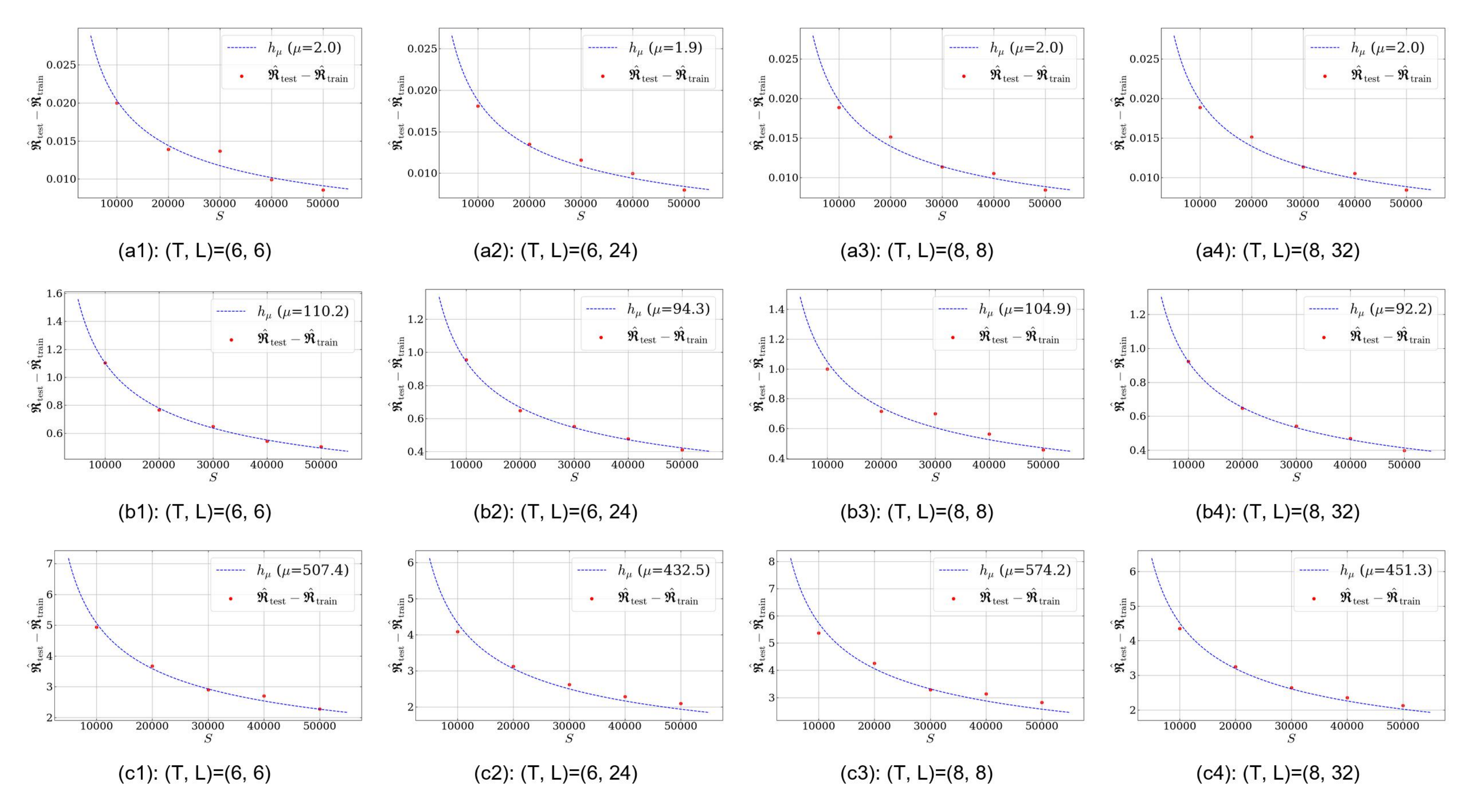}
	\caption{Average testing-training loss gap of ResNets with $(T, L) = (6, 6), (6,24)$ and $(T, L) = (8, 8), (8,32)$ at last 10 epochs versus training sample size $S$, along with least-squares fitting using $h_{\mu}(S) = \mu / \sqrt{S}$.
		Plots (a1)-(a4) show results of ResNets on MNIST.
		Plots (b1)-(b4) show results of ResNets on CIFAR10.
		Plots (c1)-(c4) show results of ResNets on CIFAR100.
		The generalization gap decreases with increasing $S$, closely matching the theoretical rate of $O(1/\sqrt{S})$.}
	\label{fig: test loss and predicted func}
\end{figure}

\begin{table}[htbp]
    \begin{minipage}{\textwidth}
        \centering
        \caption{
            {The top-1 image classification testing accuracy by trained ResNets with $(T, L) = (6, 6), (6,24)$ and $(T, L) = (8, 8), (8, 32)$ on MNIST, CIFAR10 and CIFAR100 datasets with different training sample sizes \( S \).}}
        \begin{tabular}{m{1.2cm}<{\centering}m{1.8cm}<{\centering}m{1.5cm}<{\centering}m{1.5cm}<{\centering}m{1.5cm}<{\centering}m{1.5cm}<{\centering}m{1.5cm}<{\centering}}
            \toprule 
            \multicolumn{7}{c}{Testing accuracy}  \\ 
            \midrule 
            (T, L) & Samples & $10^4$ & $2\cdot10^4$ & $3\cdot10^4$ & $4\cdot10^4$ & $5\cdot10^4$ \\    
            \midrule 
            \multirow{3}{*}{(6, 6)} 
                & \text{MNIST}  & 99.549 & 99.589 & 99.639 & 99.669 & 99.629  \\
                  & \text{CIFAR10}  & 84.375 & 89.083 & 90.865 & 92.348 & 92.839  \\
                  & $\text{CIFAR100}$ & 49.189 & 59.165 & 66.216 & 68.139 & 72.516   \\
            \hline
            \multirow{3}{*}{(6, 24)} 
            & \text{MNIST}  & 99.479 & 99.589 & 99.579 & 99.659 & 99.669  \\
                  & \text{CIFAR10}  & 84.896 & 89.694 & 91.526 & 92.728 & 93.58  \\
                  & $\text{CIFAR100}$ & 53.405 & 64.323 & 69.591 & 72.596 & 74.72  \\
            \hline
            \multirow{3}{*}{(8, 8)} 
            & \text{MNIST}  & 99.489 & 99.589 & 99.619 & 99.579 & 99.669  \\
                  & \text{CIFAR10}  & 84.143 & 89.329 & 90.995 & 92.474 & 93.5  \\
                  & $\text{CIFAR100}$ & 49.211 & 61.793 & 66.743 & 69.455 & 73.177  \\
            \hline
            \multirow{3}{*}{(8, 32)} 
            & \text{MNIST}  & 99.409 & 99.589 & 99.639 & 99.589 & 99.669  \\
                  & \text{CIFAR10}  & 85.116 & 89.463 & 91.797 & 92.768 & 93.82   \\
                  & $\text{CIFAR100}$ & 53.486 & 64.193 & 69.321 & 71.855 & 74.509 \\
            \bottomrule
        \end{tabular}
        \label{tab: test acc}
    \end{minipage}
\end{table}

\subsection{A test on the layer number of ResNets} 
Theorem~\ref{theorem: Uniform generalization error bound for discrete-time ResNet} and Theorem~\ref{theorem: Uniform generalization error bound for continuous-time ResNet} reveal how the generalization bounds behave as the depth of ResNets increases.
To empirically validate these theoretical results, we conduct experiments by progressively increasing the layer number $L$ of the ResNets. The training loss and testing classification accuracy over epochs are plotted in Figure~\ref{fig: layer refine train} and Figure~\ref{fig: layer refine test}, respectively.
The results show that as $L$ increases, both training loss and testing accuracy of the discrete-time ResNet tend to converge. This empirical behavior is consistent with the theoretical predictions, supporting the connection between network depth, convergence, and generalization performance.
\begin{figure}[htbp]
	 \centering 
	 \subfigure[MNIST]{ \includegraphics[width=0.3\linewidth]{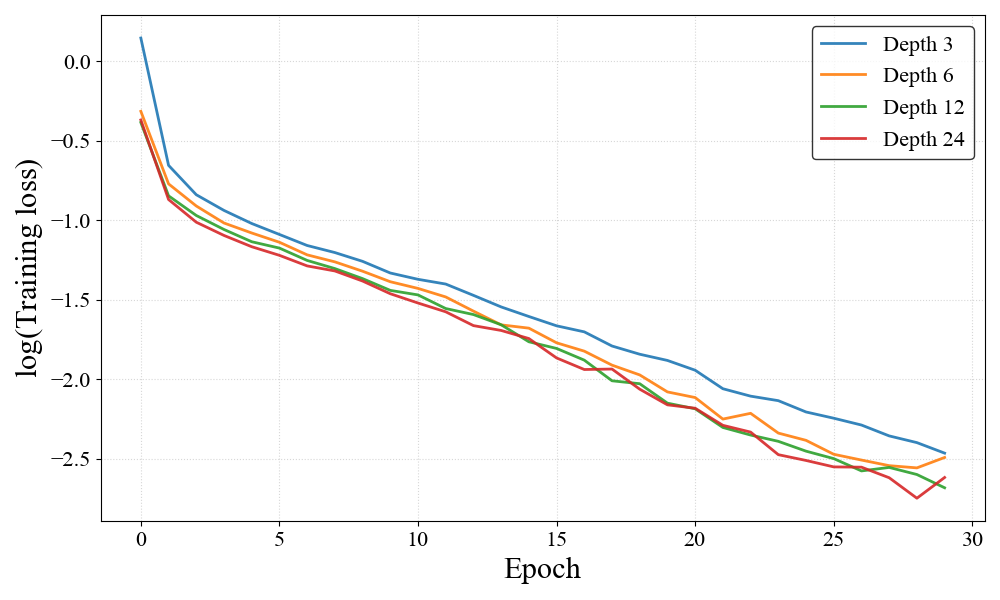} \label{1} } \subfigure[CIFAR10]{ \includegraphics[width=0.3\linewidth]{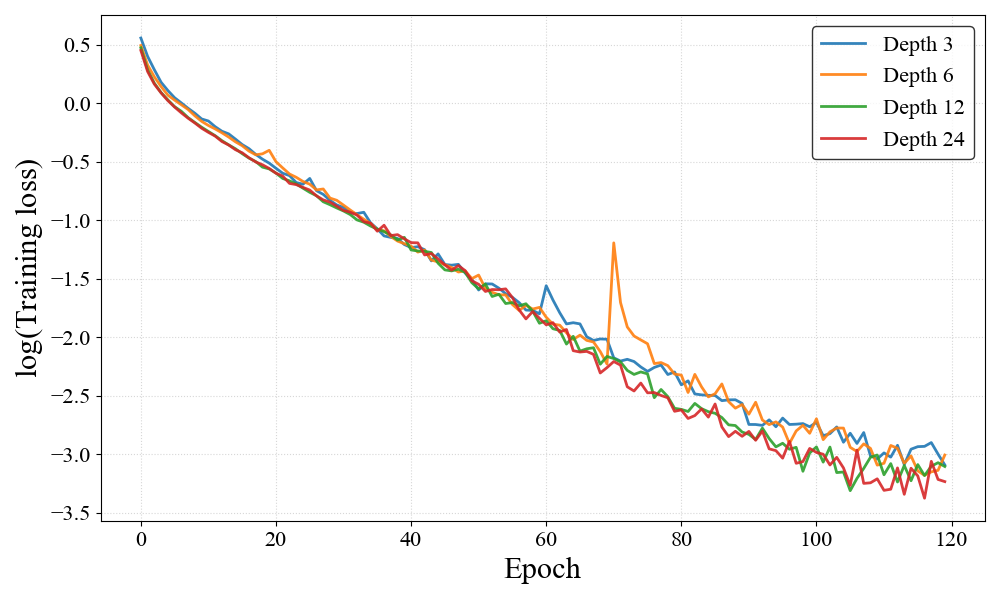} \label{2} }
	  \subfigure[CIFAR100]{ \includegraphics[width=0.3\linewidth]{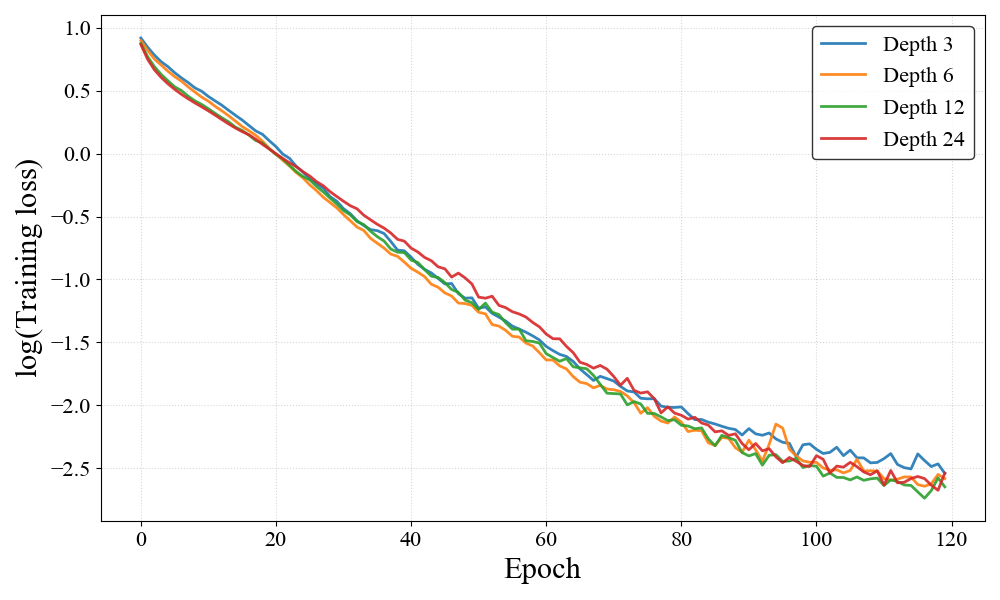} \label{2} } 
	  \subfigure[MNIST]{ \includegraphics[width=0.3\linewidth]{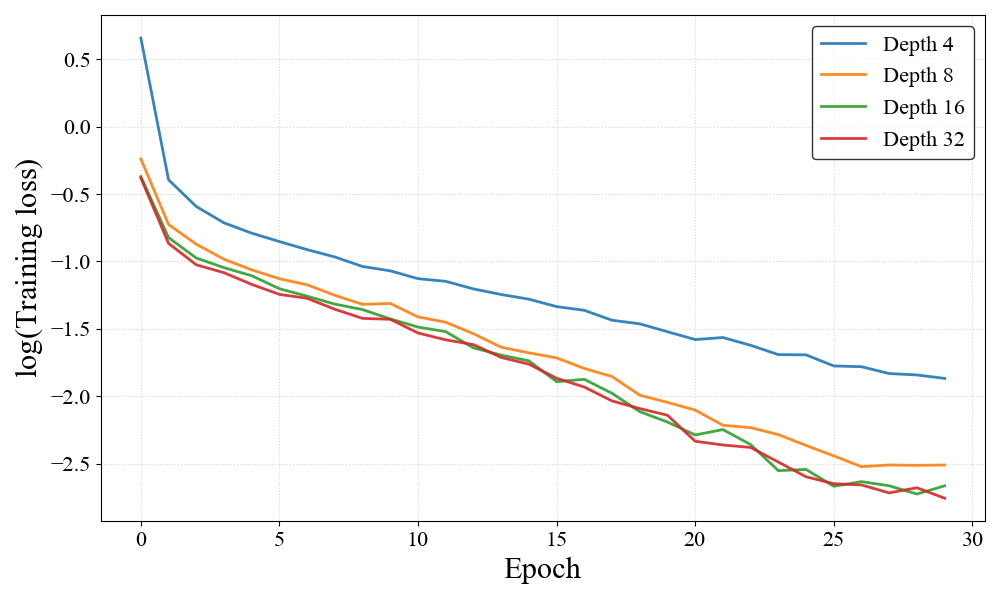} \label{3} } \subfigure[CIFAR10]{ \includegraphics[width=0.3\linewidth]{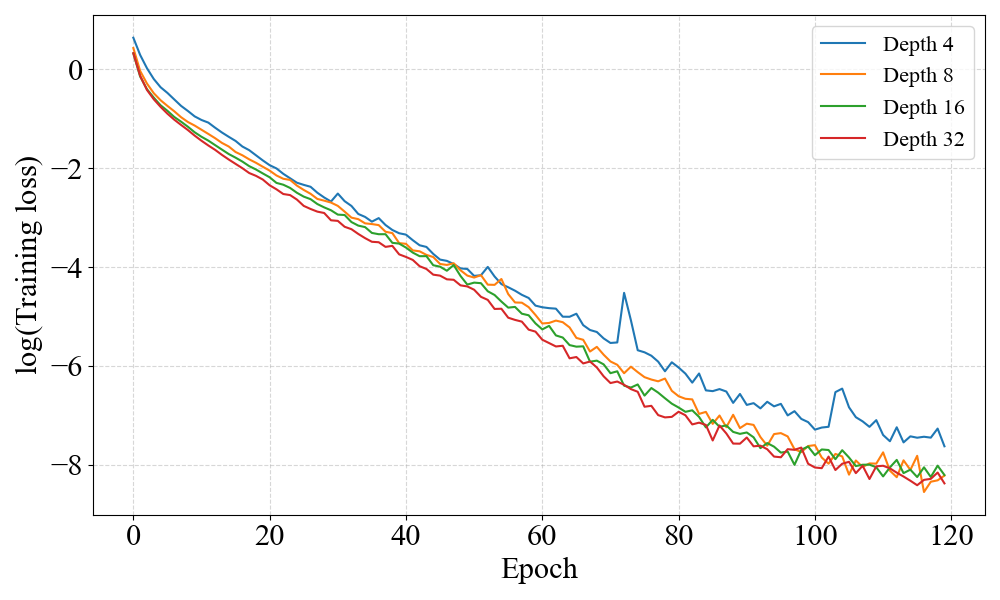} \label{3} } 
	  \subfigure[CIFAR100]{ \includegraphics[width=0.3\linewidth]{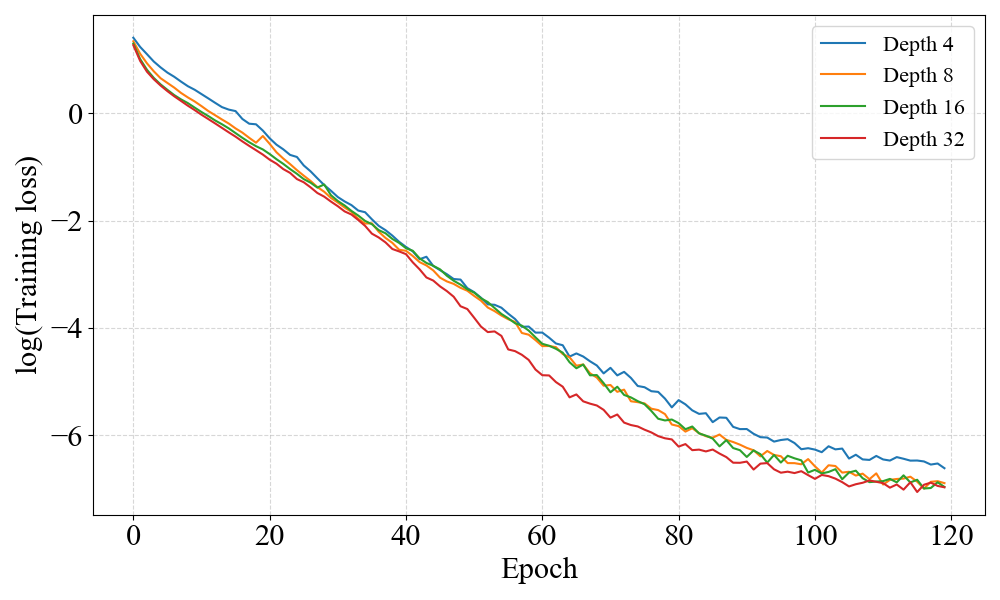} \label{4} } 
	  \caption{ Training loss of ResNets with varying layer numbers in the deep-layer limit regime.
	 	The first row corresponds to $T=6$ with $L = 3, 6, 12, 24$, and the second row corresponds to $T=8$ with $L = 4, 8, 16, 32$.
	 	Across all datasets (MNIST, CIFAR10, CIFAR100), the training loss exhibits convergence behavior as the number of layers increases, supporting the depth-stability predicted by our theoretical analysis.} 
	 \label{fig: layer refine train} 
\end{figure}

\begin{figure}[htbp]
	\centering
	\subfigure[MNIST]{
		\includegraphics[width=0.3\linewidth]{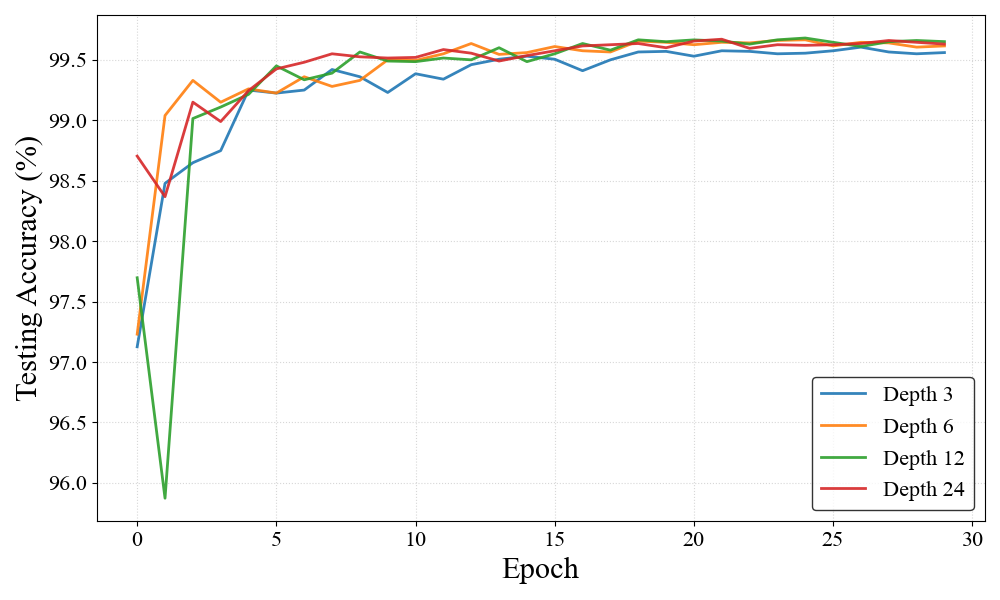} \label{1}
	}
	\subfigure[CIFAR10]{
		\includegraphics[width=0.3\linewidth]{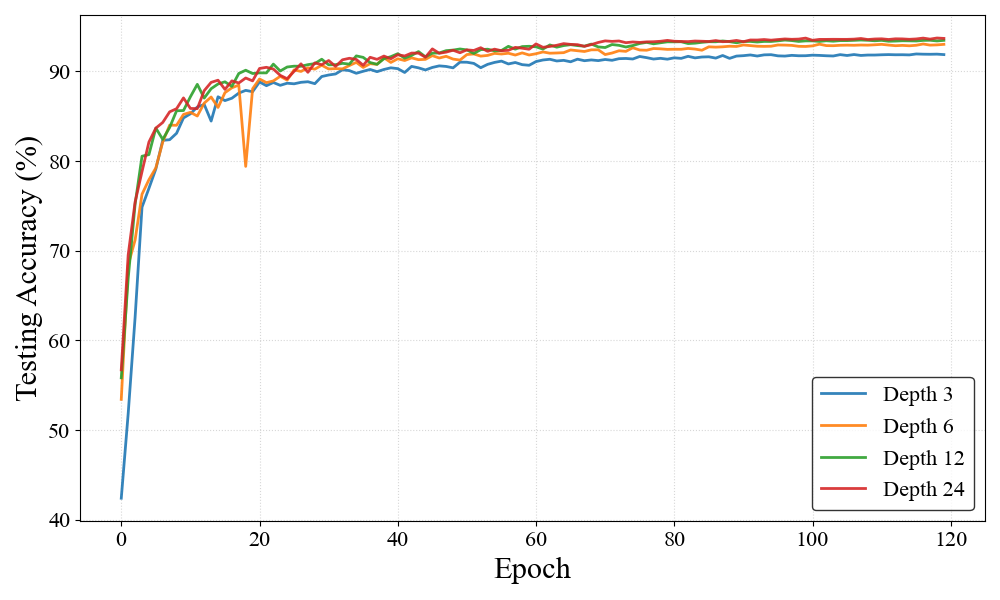} \label{2} 
	}
    \subfigure[CIFAR100]{
		\includegraphics[width=0.3\linewidth]{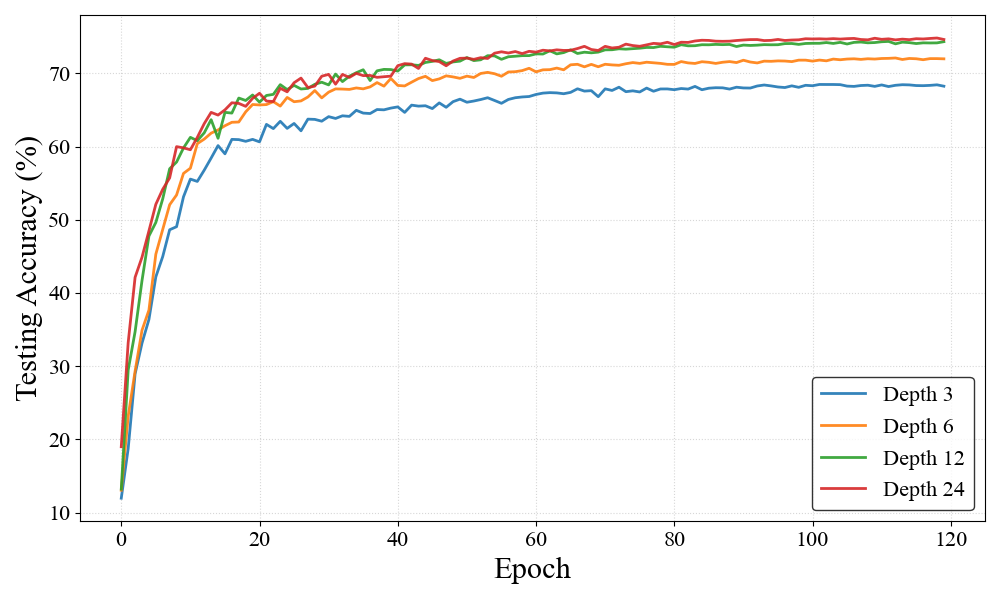} \label{3} 
	}
    
    \subfigure[MNIST]{
		\includegraphics[width=0.3\linewidth]{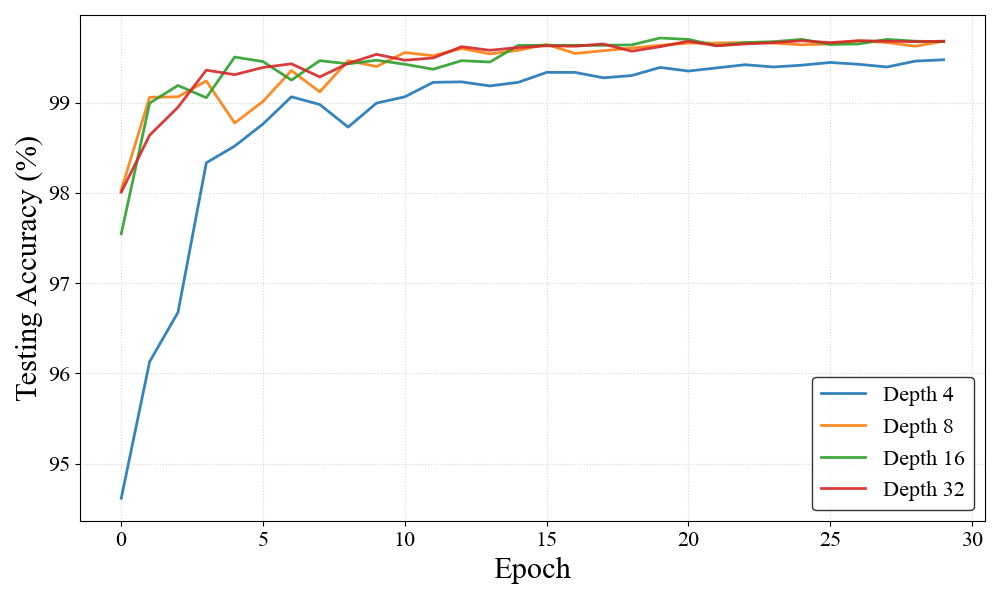} \label{4}
	}
    \subfigure[CIFAR10]{
		\includegraphics[width=0.3\linewidth]{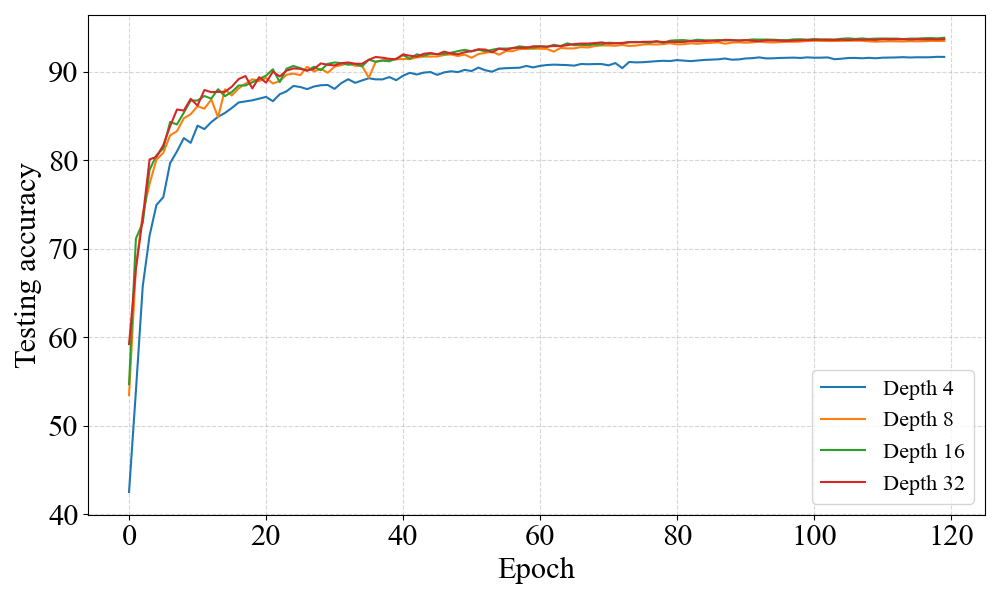} \label{5}
	}
	\subfigure[CIFAR100]{
		\includegraphics[width=0.3\linewidth]{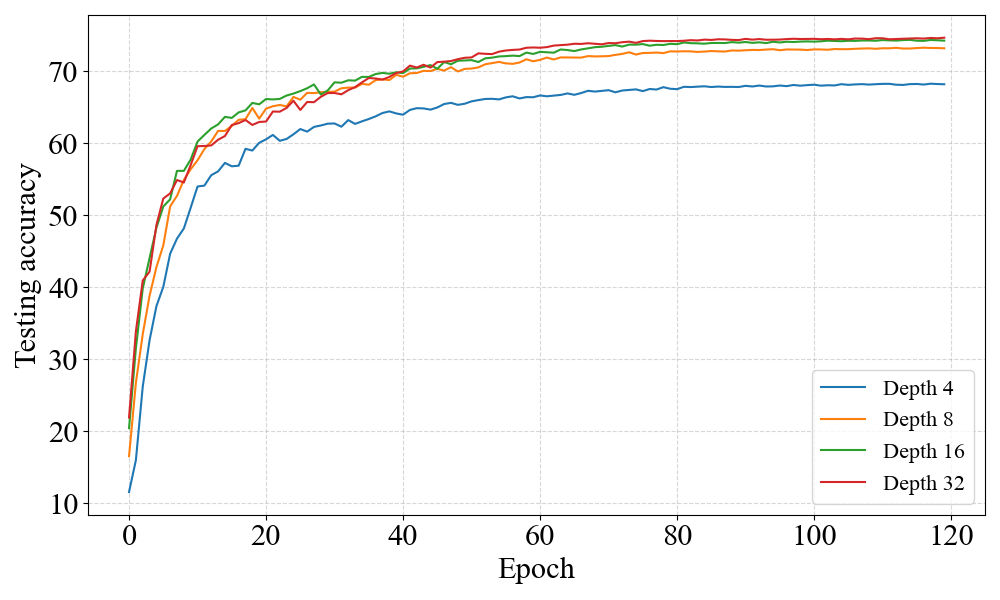} \label{6} 
	}
	\caption{
    Top-1 image classification accuracy of ResNets with varying layer numbers in the deep-layer limit regime.
    The first row corresponds to $T=6$ with $L = 3, 6, 12, 24$, and the second row corresponds to $T=8$ with $L = 4, 8, 16, 32$.
    Across all datasets (MNIST, CIFAR10, CIFAR100), the testing accuracy exhibits convergence behavior as the number of layers increases, supporting the depth-stability predicted by our theoretical analysis.}
	\label{fig: layer refine test}
\end{figure}


\subsection{{Influence of activation functions with structured decomposition}}
In this subsection, we further investigate the influence of activation functions on the generalization behavior of ResNets within the proposed theoretical framework.
In particular, we focus on activation functions that admit a structured decomposition of the form in Definition~\ref{definition: activation func}.

We consider a class of activation functions that explicitly admit such a decomposition:
\begin{equation}
	\psi_{a,b; \alpha, \beta}(x) =
	\begin{cases}
		a(x - \alpha), & x \ge \alpha, \\
		0,  &  -\beta < x < \alpha,         \\
		b (x + \beta), & x \leq -\beta,
	\end{cases}
	\label{equation: ac_compare}
\end{equation}
where $\alpha, \beta>0$. This formulation introduces an explicit dead-zone region $(\beta, \alpha)$ and allows us to control the slope parameters on both sides.
Specifically, we consider two representative cases of \eqref{equation: ac_compare}.
The first one is $\psi_{1,0.05;0,0}(\cdot)$, which reduces to the standard leaky ReLU activation with slope parameter $0.05$, i.e.,
\[
\psi(x) =
\begin{cases}
	x, & x \ge 0,
	\\
	0.05 \cdot x, & x < 0.
\end{cases}
\]
The second case is $\psi_{1,0.05;\alpha,\beta}(\cdot)$, where $\alpha$ and $\beta$ are treated as learnable parameters.
By varying $\alpha$ and $\beta$, the activation introduces a controllable dead-zone region.

\begin{figure}[htbp]
	\centering
	\includegraphics[scale=0.22]{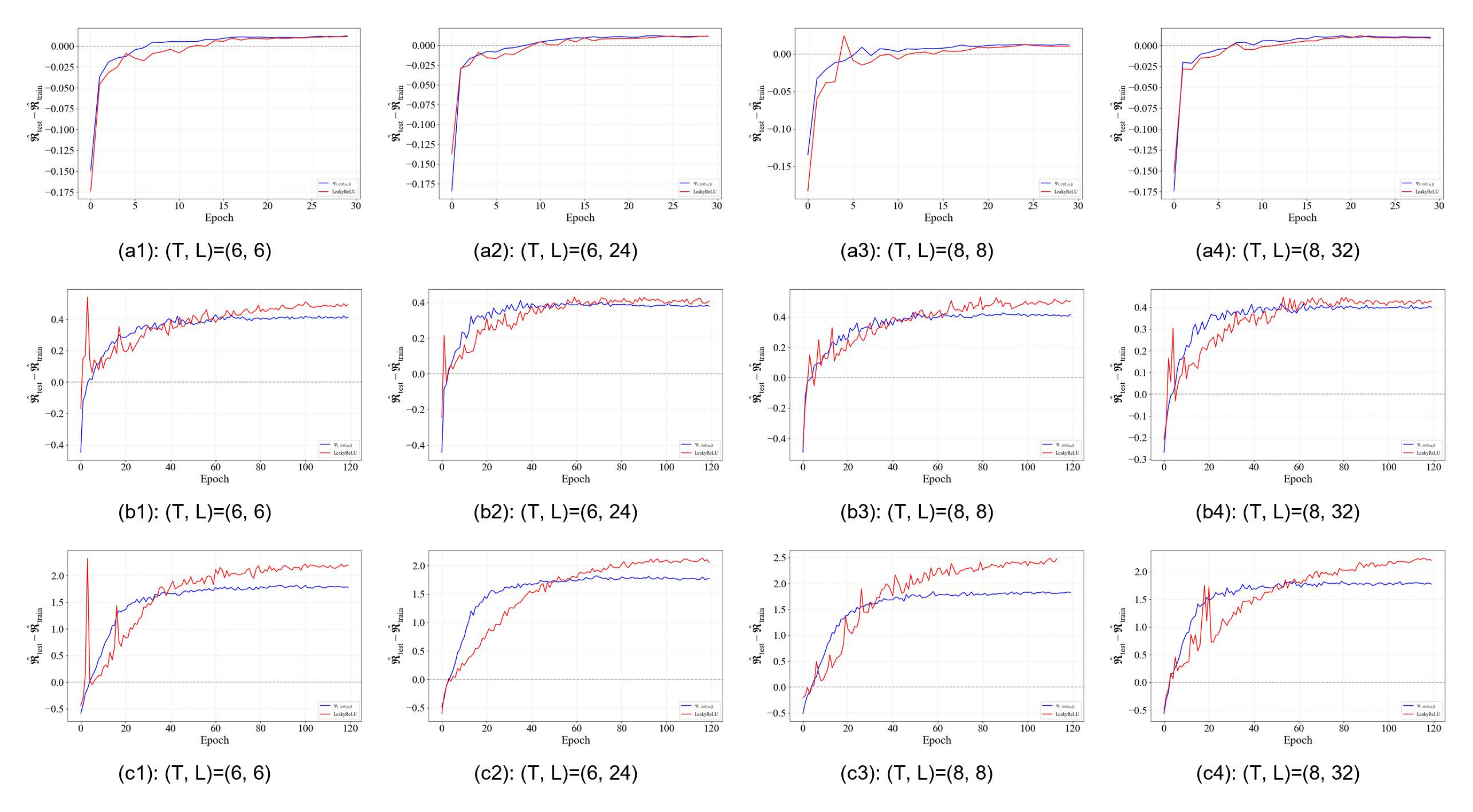}
	\caption{Empirical generalization gap $\hat{\mathfrak{R}}_{\rm test}-\hat{\mathfrak{R}}_{\rm train}$ versus training epoch for ResNets with different $(T,L)$ configurations. The first row corresponds to MNIST, the second row to CIFAR10, and the third row to CIFAR100. ResNets equipped with activation functions admitting a structured decomposition exhibit smaller generalization gaps on CIFAR10 and CIFAR100 in the later stage of training.}
	\label{fig:activation_generalization_gap}
\end{figure} 

    


To remain consistent with the theoretical setting, the activation parameters $\alpha$ and $\beta$ are shared across all layers, i.e., the same activation function is used throughout the network.
For each network configuration, we evaluate the empirical generalization gap $\hat{\mathfrak{R}}_{\rm test}-\hat{\mathfrak{R}}_{\rm train}$.
Figure~\ref{fig:activation_generalization_gap} reports these testing–training loss gaps for different activation functions on three datasets.
{We observe that, in the later stage of training, ResNets equipped with the activation function $\psi_{1,0.05;\alpha,\beta}(\cdot)$ tend to yield smaller generalization gaps compared to the standard leaky ReLU on CIFAR10 and CIFAR100.
This indicates improved generalization performance and provides empirical evidence for the effect of the non-positive term appearing in Theorem~\ref{theorem: Uniform generalization error bound for discrete-time ResNet}.}
In particular, the presence of a structured decomposition and an explicit dead-zone leads to a sharper estimate than bounds that rely solely on the global Lipschitz constant of the activation function.
Overall, these results indicate that the proposed generalization theory not only captures the dependence on network depth and sample size, but also provides a meaningful characterization of how structured activation functions influence the generalization behavior of ResNets.

\section{Conclusion}
\label{Sec: 6}
In this work, we investigated the generalization properties of deep residual networks through the lens of dynamical systems.  
By leveraging a new contraction-type inequality for activation functions, Rademacher complexity theory, and the convergence behavior of ResNets in the deep-layer limit, we derived new generalization bounds for both discrete- and continuous-time ResNets.  
These bounds attain a complexity order of \(O(1/\sqrt{S})\) and remain uniformly bounded as depth increases, enabling an asymptotic characterization of generalization behavior with respect to network depth.  
Our discrete-time bound further incorporates a structure-dependent negative term, which can potentially tighten estimates under suitable conditions.  
Overall, this dynamical system framework offers a unified and theoretically grounded perspective on the interplay between architectural stability, depth, and generalization, and helps to close the order gap between the existing generalization analyses for discrete- and continuous-time ResNets.

\appendix
\section{Definition of Rademacher complexity} 
\begin{definition}
\label{definition: RC}
 The (empirical) Rademacher complexity of the function class $\mathcal{F}$ for i.i.d. sample $\mathcal{Z}=\left\{\mathrm{z}_{(1)}, \mathrm{z}_{(2)}, \cdots, \mathrm{z}_{(S)}\right\}$ from $\mathfrak{B}$ is
$$
\mathscr{R}_{\mathcal{Z}}(\mathcal{F})=\frac{1}{S} \mathbb{E}_{\pmb{\varepsilon}} \left[\sup _{{f} \in \mathcal{F}} \sum_{s=1}^S \varepsilon_{(s)} {f} \left(\mathrm{z}_{(s)}\right)\right],
$$
where the expectation is taken over $\boldsymbol{\varepsilon}=\left\{\varepsilon_{(1)}, \varepsilon_{(2)}, \cdots, \varepsilon_{(S)} \right\}$ and $\varepsilon_{(s)}$ $(1 \leq s \leq S)$ are independently and uniformly distributed over $\{-1, 1\}$.
\end{definition}
For convenience, we denote $\boldsymbol{\varepsilon}=\left\{\varepsilon_{(1)}, \varepsilon_{(2)}, \cdots, \varepsilon_{(S)} \right\}$ and $ \boldsymbol{\epsilon}=\{\epsilon_{(11)}, \ldots, \epsilon_{(S1)}; \cdots;\epsilon_{(1N)}, \ldots, $ $ \epsilon_{(SN)} \}$, where $\{\varepsilon_{(s)}\}_{s}$ and $\{\epsilon_{(sj)}\}_{s,j}$ are independently and uniformly distributed over $\{-1, 1\}$. Also, for the convenience of readers, several key referred results are provided in the supplementary material.


\section*{Acknowledgments}
This work was partially supported by the National Natural Science Foundation of China (grant 12271273) and the Key Program (21JCZDJC00220) of the Natural Science Foundation of Tianjin, China.

\bibliographystyle{siamplain}
\bibliography{references}

@article{bartlett2002rademacher,
  title={Rademacher and Gaussian complexities: Risk bounds and structural results},
  author={Bartlett, Peter L and Mendelson, Shahar},
  journal={Journal of Machine Learning Research},
  volume={3},
  number={Nov},
  pages={463--482},
  year={2002}
}

@article{huang2024on,
year = {2024},
month = {jun},
publisher = {IOP Publishing},
volume = {40},
number = {7},
pages = {075006},
author = {Jinshu Huang and Yiming Gao and Chunlin Wu},
title = {On dynamical system modeling of learned primal-dual with a linear operator $\mathcal{K}$: stability and convergence properties},
journal = {Inverse Problems}
}

@inproceedings{shultzman2023generalization,
  title={Generalization and estimation error bounds for model-based neural networks},
  author={Shultzman, Avner and Azar, Eyar and Rodrigues, Miguel RD and Eldar, Yonina C},
  booktitle = {International Conference on Learning Representations},
  year      = {2023}
}

@book{beck2017first,
  title={First-order methods in optimization},
  author={Beck, Amir},
  year={2017},
  publisher={SIAM}
}

@article{clevert2015fast,
  title={Fast and accurate deep network learning by exponential linear units (elus)},
  author={Clevert, Djork-Arn{\'e}},
  journal={arXiv preprint arXiv:1511.07289},
  year={2015}
}

@article{thorpe2018deep,
  title={Deep limits of residual neural networks},
  author={Thorpe, Matthew and van Gennip, Yves},
  journal={Research in the Mathematical Sciences},
  volume={10},
  number={1},
  pages={6},
  year={2023},
  publisher={Springer}
}

@book{shalev2014understanding,
  title={Understanding machine learning: From theory to algorithms},
  author={Shalev-Shwartz, Shai and Ben-David, Shai},
  year={2014},
  publisher={Cambridge University Press}
}

@inproceedings{chen2020understanding,
  title={Understanding deep architecture with reasoning layer},
  author={Chen, Xinshi and Zhang, Yufei and Reisinger, Christoph and Song, Le},
  booktitle={Advances in Neural Information Processing Systems},
  year={2020}
}

@inproceedings{kobler2020total,
  title={Total deep variation for linear inverse problems},
  author={Kobler, Erich and Effland, Alexander and Kunisch, Karl and Pock, Thomas},
  booktitle={IEEE Conference on Computer Vision and Pattern Recognition},
  pages={7549--7558},
  year={2020}
}

@book{leoni2017first,
  title={A first course in Sobolev spaces},
  author={Leoni, Giovanni},
  year={2017},
  publisher={American Mathematical Soc.}
}

@book{goodfellow2016deep,
  title={Deep learning},
  author={Goodfellow, Ian and Bengio, Yoshua and Courville, Aaron},
  year={2016},
  publisher={MIT press}
}

@article{lecun2015deep,
  title={Deep learning},
  author={LeCun, Yann and Bengio, Yoshua and Hinton, Geoffrey},
  journal={Nature},
  volume={521},
  number={7553},
  pages={436--444},
  year={2015},
  publisher={Nature Publishing Group}
}

@article{huang2024Mathematical,
  title={Mathematical modeling and convergence analysis of deep neural networks with dense layer connectivities},
  author={Huang, Jinshu and Su, Haibin and Tai, Xue-Cheng and Wu, Chunlin},
  journal={Submitted},
  year={2024}
}

@article{chen2018neural,
  title={Neural ordinary differential equations},
  author={Chen, Ricky TQ and Rubanova, Yulia and Bettencourt, Jesse and Duvenaud, David K},
  journal={Advances in Neural Information Processing Systems},
  volume={31},
  year={2018}
}

@article{sherstinsky2020fundamentals,
  title={Fundamentals of recurrent neural network (RNN) and long short-term memory (LSTM) network},
  author={Sherstinsky, Alex},
  journal={Physica D: Nonlinear Phenomena},
  volume={404},
  pages={132306},
  year={2020},
  publisher={Elsevier}
}

@article{weinan2017proposal,
  title={A proposal on machine learning via dynamical systems},
  author={E, Weinan},
  journal={Communications in Mathematics and Statistics},
  volume={1},
  number={5},
  pages={1--11},
  year={2017}
}

@article{haber2017stable,
  title={Stable architectures for deep neural networks},
  author={Haber, Eldad and Ruthotto, Lars},
  journal={Inverse Problems},
  volume={34},
  number={1},
  pages={014004},
  year={2017},
  publisher={IOP Publishing}
}

@inproceedings{Lu18Beyond,
  author       = {Yiping Lu and
                  Aoxiao Zhong and
                  Quanzheng Li and
                  Bin Dong},
  title        = {Beyond finite layer neural networks: Bridging deep architectures and
                  numerical differential equations},
  booktitle       = {Proceedings of Machine Learning Research},
  volume       = {80},
  pages        = {3282--3291},
  publisher    = {{PMLR}},
  year         = {2018}
}

@article{ruthotto2020deep,
  title={Deep neural networks motivated by partial differential equations},
  author={Ruthotto, Lars and Haber, Eldad},
  journal={Journal of Mathematical Imaging and Vision},
  volume={62},
  number={3},
  pages={352--364},
  year={2020},
  publisher={Springer}
}

@article{goldberg1993bounding,
  title={Bounding the Vapnik-Chervonenkis dimension of concept classes parameterized by real numbers},
  author={Goldberg, Paul and Jerrum, Mark},
  journal   = {Machine Learning},
  year      = {1995},
  volume    = {18},
  number    = {2},
  pages     = {131--148},
  issn      = {1573-0565}
}

@article{bartlett1996vc,
  title={The VC dimension and pseudodimension of two-layer neural networks with discrete inputs},
  author={Bartlett, Peter L and Williamson, Robert C},
  journal={Neural Computation},
  volume={8},
  number={3},
  pages={625--628},
  year={1996},
  publisher={MIT Press One Rogers Street, Cambridge, MA 02142-1209}
}

@inproceedings{neyshabur2015norm,
  title={Norm-based capacity control in neural networks},
  author={Neyshabur, Behnam and Tomioka, Ryota and Srebro, Nathan},
  booktitle={Conference on Learning Theory},
  pages={1376--1401},
  year={2015},
  organization={PMLR}
}

@article{bartlett2017spectrally,
  title={Spectrally-normalized margin bounds for neural networks},
  author={Bartlett, Peter L and Foster, Dylan J and Telgarsky, Matus J},
  journal={Advances in Neural Information Processing Systems},
  volume={30},
  year={2017}
}

@article{schnoor2023generalization, 
  title={Generalization error bounds for iterative recovery algorithms unfolded as neural networks},
  author={Schnoor, Ekkehard and Behboodi, Arash and Rauhut, Holger},
  journal={Information and Inference: A Journal of the IMA},
  volume={12},
  number={3},
  pages={2267--2299},
  year={2023},
  publisher={Oxford University Press}
}

@inproceedings{kuzborskij2018data,
  title={Data-dependent stability of stochastic gradient descent},
  author={Kuzborskij, Ilja and Lampert, Christoph},
  booktitle={International Conference on Machine Learning},
  pages={2815--2824},
  year={2018},
  organization={PMLR}
}

@inproceedings{sokolic2017generalization,
  title={Generalization error of invariant classifiers},
  author={Sokolic, Jure and Giryes, Raja and Sapiro, Guillermo and Rodrigues, Miguel},
  booktitle={Artificial Intelligence and Statistics},
  pages={1094--1103},
  year={2017},
  organization={PMLR}
}

@inproceedings{he2016deep,
  title={Deep residual learning for image recognition},
  author={He, Kaiming and Zhang, Xiangyu and Ren, Shaoqing and Sun, Jian},
  booktitle={Proceedings of the IEEE Conference on Computer Vision and Pattern Recognition},
  pages={770--778},
  year={2016}
}

@book{vershynin2018high,
  title={High-dimensional probability: An introduction with applications in data science},
  author={Vershynin, Roman},
  volume={47},
  year={2018},
  publisher={Cambridge University Press}
}

@book{grohs2022mathematical,
  title={Mathematical aspects of deep learning},
  author={Grohs, Philipp and Kutyniok, Gitta},
  year={2022},
  publisher={Cambridge University Press}
}

@article{li2022approximation,
  title={Approximation and optimization theory for linear continuous-time recurrent neural networks},
  author={Li, Zhong and Han, Jiequn and Weinan, E and Li, Qianxiao},
  journal={Journal of Machine Learning Research},
  volume={23},
  number={42},
  pages={1--85},
  year={2022}
}

@article{li2022deep,
  title={Deep learning via dynamical systems: An approximation perspective},
  author={Li, Qianxiao and Lin, Ting and Shen, Zuowei},
  journal={Journal of the European Mathematical Society},
  volume={25},
  number={5},
  pages={1671--1709},
  year={2022}
}

@article{bousquet2002stability,
  title={Stability and generalization},
  author={Bousquet, Olivier and Elisseeff, Andr{\'e}},
  journal={Journal of Machine Learning Research},
  volume={2},
  number={Mar},
  pages={499--526},
  year={2002}
}

@article{he2020resnet,
  title={Why resnet works? residuals generalize},
  author={He, Fengxiang and Liu, Tongliang and Tao, Dacheng},
  journal={IEEE Transactions on Neural Networks and Learning Systems},
  volume={31},
  number={12},
  pages={5349--5362},
  year={2020},
  publisher={IEEE}
}

@article{li2018tighter,
  title={On tighter generalization bound for deep neural networks: Cnns, resnets, and beyond},
  author={Li, Xingguo and Lu, Junwei and Wang, Zhaoran and Haupt, Jarvis and Zhao, Tuo},
  journal={arXiv preprint arXiv:1806.05159},
  year={2018}
}

@article{gunther2020layer,
  title={Layer-parallel training of deep residual neural networks},
  author={Gunther, Stefanie and Ruthotto, Lars and Schroder, Jacob B and Cyr, Eric C and Gauger, Nicolas R},
  journal={SIAM Journal on Mathematics of Data Science},
  volume={2},
  number={1},
  pages={1--23},
  year={2020},
  publisher={SIAM}
}

@article{parpas2019predict,
author = {Parpas, Panos and Muir, Corey},
title = {Predict globally, correct locally: Parallel-in-time optimization of neural networks},
year = {2025},
issue_date = {Jan 2025},
publisher = {Pergamon Press, Inc.},
address = {USA},
volume = {171},
number = {C},
issn = {0005-1098},
doi = {10.1016/j.automatica.2024.111976},
journal = {Automatica},
month = jan,
numpages = {11}
}

@article{pandey2020overcoming,
  title={Overcoming overfitting and large weight update problem in linear rectifiers: Thresholded exponential rectified linear units},
  author={Pandey, Vijay},
  journal={arXiv preprint arXiv:2006.02797},
  year={2020}
}

@inproceedings{maurer2016vector,
  title={A vector-contraction inequality for rademacher complexities},
  author={Maurer, Andreas},
  booktitle={Algorithmic Learning Theory: 27th International Conference},
  pages={3--17},
  year={2016},
  organization={Springer}
}

@book{emmrich1999discrete,
  title={Discrete versions of Gronwall's lemma and their application to the numerical analysis of parabolic problems},
  author={Emmrich, Etienne},
  year={1999},
  publisher={Techn. Univ.}
}

@article{marion2023generalization,
  title={Generalization bounds for neural ordinary differential equations and deep residual networks},
  author={Marion, Pierre},
  journal={Advances in Neural Information Processing Systems},
  volume={36},
  pages={48918--48938},
  year={2023}
}

@inproceedings{neyshabur2017pac,
  title={A pac-bayesian approach to spectrally-normalized margin bounds for neural networks},
  author={Neyshabur, Behnam and Bhojanapalli, Srinadh and Srebro, Nathan},
  booktitle={International Conference on Learning Representations},
  year={2018}
}

@article{benning2019deep,
  title={Deep learning as optimal control problems: Models and numerical methods},
  author={Benning, Martin and Celledoni, Elena and Ehrhardt, Matthias J and Owren, Brynjulf and Sch{\"o}nlieb, Carola-Bibiane},
  journal={Journal of Computational Dynamics},
  volume={6},
  number={2},
  pages={171--198},
  year={2019},
  publisher={American Institute of Mathematical Sciences (AIMS)}
}

@inproceedings{glorot2011deep,
  title={Deep sparse rectifier neural networks},
  author={Glorot, Xavier and Bordes, Antoine and Bengio, Yoshua},
  booktitle={Proceedings of the Fourteenth International Conference on Artificial Intelligence and Statistics},
  pages={315--323},
  year={2011},
  organization={JMLR Workshop and Conference Proceedings}
}

@inproceedings{he2015delving,
  title={Delving deep into rectifiers: Surpassing human-level performance on imagenet classification},
  author={He, Kaiming and Zhang, Xiangyu and Ren, Shaoqing and Sun, Jian},
  booktitle={Proceedings of the IEEE International Conference on Computer Vision},
  pages={1026--1034},
  year={2015}
}

@article{e2020rademacher,
  title={Rademacher complexity and the generalization error of residual networks},
  author={E, Weinan and Ma, Chao and Wang, Qingcan},
  journal={Communications in Mathematical Sciences},
  volume={18},
  number={6},
  pages={1755--1774},
  year={2020},
  publisher={International Press of Boston}
}

@inproceedings{bleistein2023generalization,
  title={On the Generalization Capacities of Neural Controlled Differential Equations},
  author={Bleistein, Linus and Guilloux, Agathe},
  booktitle={Workshop on New Frontiers in Learning, Control, and Dynamical Systems at the International Conference on Machine Learning (ICML 2023)},
  year={2023}
}

@inproceedings{hanson2024rademacher,
  title={Rademacher complexity of neural odes via chen-fliess series},
  author={Hanson, Joshua and Raginsky, Maxim},
  booktitle={6th Annual Learning for Dynamics \& Control Conference},
  pages={758--769},
  year={2024},
  organization={PMLR}
}
\end{document}